\setlist{leftmargin=*}
\newtheorem{theorem}{Theorem}
\newtheorem{corollary}{Corollary}
\newtheorem{proposition}{Proposition}
\newtheorem{remark}{Remark}
\renewcommand{\Re}{\mathbb{R}}
\newcommand{\ie}{\textit{i.e.}}
\newcommand{\eg}{\textit{e.g.}}
\newcommand{\E}{\mathbb{E}}
\renewcommand{\Pr}{\text{Pr}}
\newcommand{\regret}{\texttt{Regret}}
\newcommand{\rev}{\texttt{Revenue}}
\newcommand{\kl}{\texttt{KL}}
\newcommand{\refund}{\texttt{Refund}}
\newcommand{\gdp}{\texttt{LEAP}}
\newcommand{\gdppp}{\texttt{LEAP++}}
\newcommand{\cA}{\mathcal{A}}
\begin{document}


\RUNAUTHOR{}

\RUNTITLE{Learning and Earning under Price Protection}

\TITLE{Phase Transitions in Learning and Earning under Price Protection Guarantee}

\ARTICLEAUTHORS{

\AUTHOR{Qing Feng}
\AFF{School of Operations Research and Information Engineering, Cornell University\\
	\EMAIL{qf48@cornell.edu}}

\AUTHOR{Ruihao Zhu}
\AFF{SC Johnson College of Business, Cornell University\\
	 \EMAIL{ruihao.zhu@cornell.edu}} 

\AUTHOR{Stefanus Jasin}
\AFF{Stephen M. Ross School of Business, University of Michigan\\
	\EMAIL{sjasin@umich.edu}} 

} 

\ABSTRACT{%
Motivated by the prevalence of ``price protection guarantee", which allows a customer who purchased a product in the past to receive a refund from the seller during the so-called price protection period (typically defined as a certain time window after the purchase date) in case the seller decides to lower the price, we study the impact of such policy on the design of online learning algorithm for data-driven dynamic pricing with initially unknown customer demand. We consider a setting where a firm sells a product over a horizon of $T$ time steps. For this setting, we characterize how the value of $M$, the length of price protection period, can affect the optimal regret of the learning process. We show that the optimal regret is $\tilde{\Theta}(\sqrt{T}+\min\{M,\,T^{2/3}\})$ by first establishing a fundamental impossible regime with novel regret lower bound instances. Then, we propose \gdp, a phased exploration type algorithm for \underline{L}earning and \underline{EA}rning under \underline{P}rice Protection to match this lower bound up to logarithmic factors or even doubly logarithmic factors (when there are only two prices available to the seller). Our results reveal the surprising phase transitions of the optimal regret with respect to $M$. Specifically, when $M$ is not too large, the optimal regret has no major difference when compared to that of the classic setting with no price protection guarantee. We also show that there exists an upper limit on how much the optimal regret can deteriorate when $M$ grows large. Finally, we conduct extensive numerical experiments to show the benefit of \gdp~over other heuristic methods for this problem. 
}%


\KEYWORDS{dynamic pricing, online learning, price protection, exploration-exploitation tradeoff} 

\maketitle

%
\section{Introduction}\label{sec:intro}
Rapid development of data science technologies have enabled the incorporation of data-driven decision-making into revenue management. Among others, one of the most successful and prevalent examples is online learning for data-driven dynamic pricing \citep{KleinbergL03,HarrisonKZ2012,KZ14}. In the classic formulation of this problem, there is a firm, hereafter referred to as the seller, who sells a single product to customers over a time horizon without knowing their response to different prices ahead. The seller aims at maximizing her expected total revenue, but she faces the exploration-exploitation (a.k.a. learning and earning) tradeoff: On the one hand, she has to offer different prices to estimate the demand (response) of the customers; on the other hand, she also needs to choose the optimal price to maximize her expected total revenue. For this problem and many of its variants, many works in the literature have developed (nearly) optimal online learning algorithms to strike the right balance between exploration and exploitation (interested readers are referred to \cite{Boer15} for a comprehensive survey). To date, companies from different industries, including e-commerce \citep{MehtaDA18}, retailing \citep{Bond22}, and many others, have deployed data-driven dynamic pricing methods to gain more revenue.

In the traditional data-driven dynamic pricing literature, it is typically assumed that the seller can adjust the price as frequently as she wishes. In practice, however, as has been frequently reported (see, \eg, \citealt{BondiGWBR21,Sivakumar22}), customers can find it frustrating and unfair if the price of a recently purchased product drops to a lower level shortly afterward. This can further cause negative reviews due to customer dissatisfaction and, even worse, customer alienation. Consequently, it becomes clear that directly applying data-driven dynamic pricing in its canonical form may not be beneficial in the long run \citep{Mohammed12}.

\begin{figure}[!ht]
	\centering
	\subfigure[]{\includegraphics[width=6cm,height=8cm]{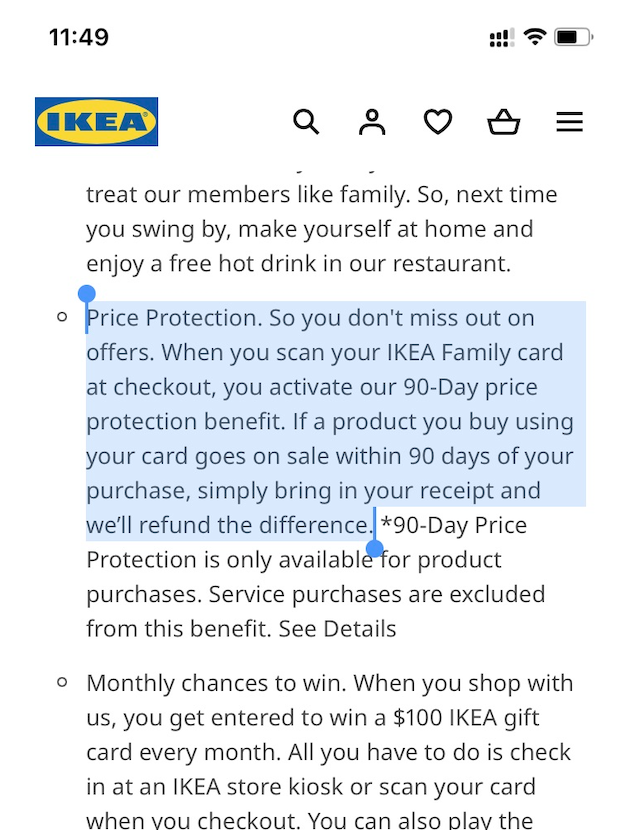}}
	\hspace{2.5cm}
	\subfigure[]{\includegraphics[width=6cm,height=8cm]{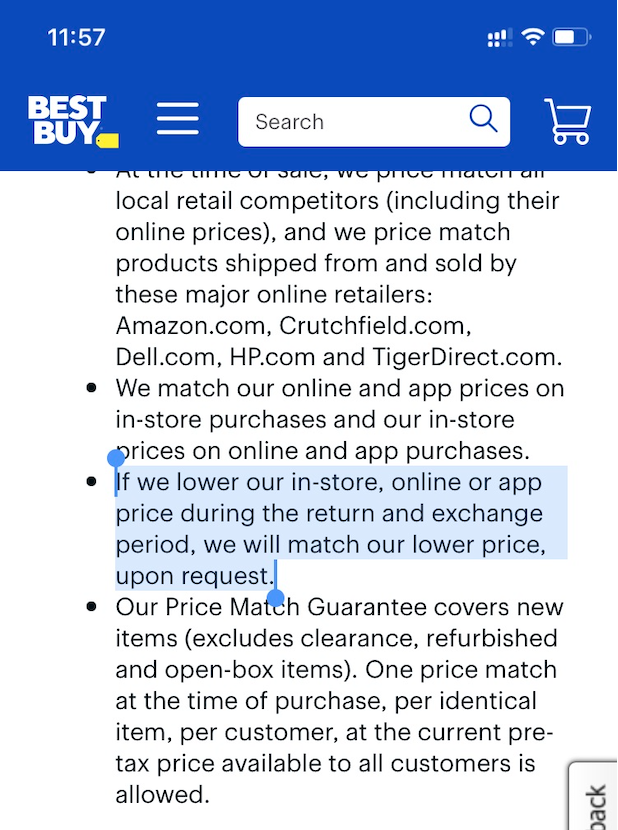}	}
	\caption{Price protection policies of two retailers}
	\label{fig:price_protection}
\end{figure} 

As a (partial) remedy for the inconvenience created by dynamic pricing, many companies have now implemented some forms of price protection guarantees (either explicitly or implicitly) for their products (see \cref{fig:price_protection} for some examples). In a nutshell, such price protection guarantee allows a customer who purchased a product in the past to receive a refund from the seller during the so-called price protection period (typically defined as a certain time window after the purchase date) in case the seller decides to lower the price. Here, the amount of refund is equal to the difference between the price paid and the lowest price found within the price protection period. For example, in \cref{fig:pp_example}, suppose for a certain period of time, the seller uses five different prices $p_5,\,p_4,\, p_3,\, p_2$, and $p_1$ (in descending order), and the customer purchases the product at $p_4$. Then, thanks to price protection, she will receive a refund of amount $p_4-p_2$ so that her actual payment is $p_2$, which is the lowest price found during the price protection period \footnote{We remark that although some companies only refund customers once during the price protection period while others allow multiple refunds if price drops further, we restrict our attention to the latter for brevity. Our forthcoming results, however, shall be mostly applicable to the former as well.} \footnote{If multiple price drops occur during the price protection period, one can assume that the customers receive the respective refunds all at once at the end of the price protection period. Alternatively, we could also assume that the customers receive refunds immediately after each price drop. Since the total amount of refunds are the same in both cases, we do not distinguish one from another throughout.}. 
\begin{figure}[!ht]
	\centering
	\includegraphics[width=10.5cm,height=5.4cm]{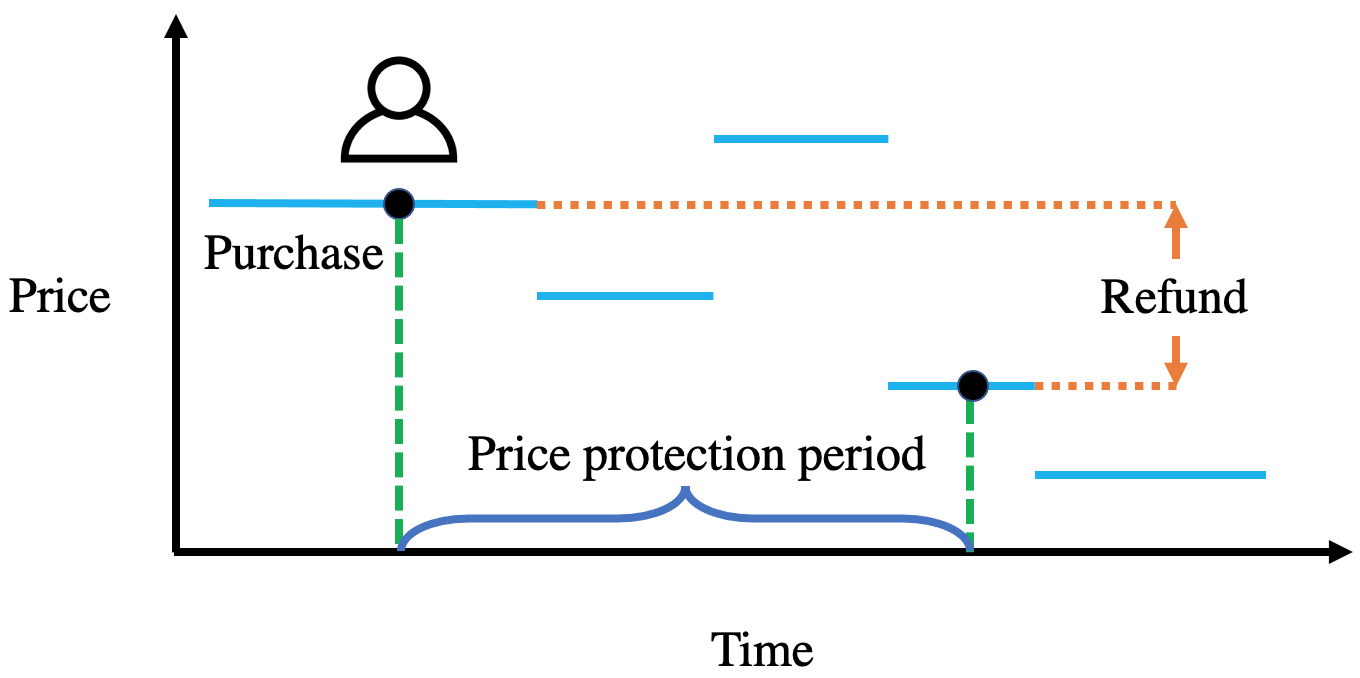}
	\caption{Toy example for price protection}
	\label{fig:pp_example}
\end{figure} 

\subsection{Main Contributions}
While price protection has now become one of the industry standards, its impact on data-driven dynamic pricing, especially on the corresponding online learning process, remains largely unexplored. In this paper, we provide an initial characterization of this effect. 

Let $T$ be the total number of time steps and $M$ be the length of price protection period. Our results and contributions can be summarized as follows:
\vspace{1mm}
\begin{enumerate}
    \item Due to the existence of price protection, directly applying classic approaches that  ignores price protection such as Upper Confidence Bound (UCB) algorithm \cite{ABF02} and Thompson Sampling algorithm \cite{AgrawalG12} would suffer linear in $T$ regret. We use both theoretical and numerical evidences to support this claim.
    
    \vspace{1mm}
    \item As such, we begin with the case where there are only two prices available to the seller. We show that the optimal regret is $\tilde{\Theta}(\sqrt{T}+\min\{M,\,T^{2/3}\})$ by first establishing a theoretical lower bound. Both of our lower bound instances and the proof techniques are different than traditional regret lower bounds (see the forthcoming \cref{remark:two_price_lb} for a detailed discussion). Next, we develop \gdp, a novel phased exploration algorithm for \underline{L}earning and \underline{EA}rning under \underline{P}rice Protection to match this lower bound up to $O(\log\log(T))$ factors. To achieve this, \gdp~creatively implements asynchronous price exploration and price elimination schedules. This mechanism allows us to simultaneously avoid excessive refunds and excessive selection of sub-optimal price (see the forthcoming \cref{remark:async} for a detailed discussion). After we complete the analysis for the case with only two price points, we generalize our results to the case with multiple price points and establish the optimal (up to logarithmic factors) regret. We refer to \cref{table:regret} for a summary.
    \begin{table}[!ht]
        \centering
        \renewcommand{\arraystretch}{1.8}
        \begin{tabular}{|c| c |c|} 
        \hline
        Number of prices & 2 & $K$\\
        \hline
        Regret lower bound &$\Omega\left(\sqrt{T}+\min\left\{M,\,T^{2/3}\right\}\right)$&$\Omega\left(\sqrt{KT}+\min\left\{M,\,K^{1/3}T^{2/3}\right\}\right)$\\
        \hline
        Regret upper bound &$O\left(\sqrt{T}+\min\left\{M\log\log(T)\,,\,T^{2/3}\right\}\right)$&$\tilde{O}\left(\sqrt{KT}+\min\left\{M,\,K^{1/3}T^{2/3}\right\}\right)$\\
        \hline
        \end{tabular}
    \caption{Optimal regret of our problem}
    \label{table:regret}
    \end{table}
   
    \item Importantly, our results reveal the surprising phase transitions of the optimal regret with respect to the length of price protection period $M$. Specifically, when $M\leq\sqrt{T}$, price protection has no major effect on the optimal regret when compared to the case without it. When $\sqrt{T}<M<T^{2/3}$, the optimal regret scales linearly with $M$ while, when $M\geq T^{2/3}$, the optimal regret would remain at the level of $\tilde{\Theta}(T^{2/3})$, which serves as an upper limit on the deterioration caused to the optimal regret by price protection guarantee. As the value of $M$ varies among different companies, our results is useful for informing the implementation of price protection policy in practice.
    
    \vspace{1mm}
    \item Finally, we conduct extensive numerical experiments to show that, by properly coordinating price protection and learning, our proposed algorithms are able to achieve significant improvement compared to other benchmark heuristic methods in our setting. 
\end{enumerate}

\subsection{Related Work}\label{sec:related}
In this section, we provide a detailed review of prior related studies. We start by reviewing existing works in online learning for classic dynamic pricing. Next, we discuss some connections between our setting and that of dynamic pricing in the face of strategic customers. Finally, we discuss existing works in the context of multi-armed bandits with limited adaptivity. Throughout, we will highlight the similarity and key differences between our work and prior works. 

\vspace{2mm}
\noindent
\textbf{Online Learning for Dynamic Pricing: } Online learning for dynamic pricing has been intensively studied in the literature, given its wide applications in retailing and e-commerce \citep{Boer15,FerreiraLS2016}. Many papers have developed near-optimal algorithms for pricing a product over multiple time periods under various settings (see, \eg, \citealt{KleinbergL03,BesbesZ2009,AramanC2009,FariasV2010}, \citealt{HarrisonKZ2012}, \citealt{BroderR2012}, \citealt{denZ2014}, \citealt{KZ14,WangDY2014,KZ16,FerreiraSLW18,CohenMW21,ChenG21,WangCL21,CheungLZ21,ZhuZ21,BastaniSZ2022,SimchiSZ2022,BuSLX22}). In recent years, there has also been increasing interests in dynamic pricing by taking into account exogenous and/or personalized features (see, \eg, \citealt{QiangB2016,javanmardN2019,KeskinLS2022,BanK2021,ChenSW2022,LiZ19}). Different from the above classic works, we mainly focus on online learning for dynamic pricing under price protection. 

\vspace{2mm}
\noindent
\textbf{Dynamic Pricing under Strategic Customers:} Our work is closely related to the literature of dynamic pricing in the presence of strategic and patient customers. In the latter, it generally assumes that customers can observe prices throughout a certain period of time, and then purchase at the lowest price possible (see, \eg, \citealt{FeldmanKL2016,BirgeCK21,ChenGG22,zhang2022online}, and references therein). Prior works show that strategic and patient customers present extra challenge to the seller and may incur larger regret. For example, \cite{BirgeCK21} studies the markdown pricing problem of a firm that sells a product to a set of
myopic and forward-looking customers. \cite{BirgeCK21} shows the seller's revenue under demand model uncertainty and strategic customers depends crucially on how forward-looking the customers are. In particular, when the customers' forward-looking behavior is exogenous and their memory is unbounded, the seller has to suffer a major revenue loss. The authors also characterize  conditions under which the seller can achieve asymptotic optimality. Our setting differs from \cite{BirgeCK21} in several aspects. 
First, we study a frequentist setting where there is no prior belief over the customer demand models while \cite{BirgeCK21} studies a Bayesian setting where there is a prior over potential customer demand models. Also, \cite{BirgeCK21} considers the customers to be forward-looking and they are capable of predicting the future prices of the product. In contrast, our setting assumes the customers would make an instantaneous purchase decision, and then passively receive the refunds from the price protection guarantee. Finally, we do not impose the restriction that the seller can only mark down the price over time.

\cite{FeldmanKL2016} shows in the adversarial setting (\ie, customer demand may not follow a stationary distribution) that if customers can observe the prices for several time steps and wait for the lowest price, then the optimal regret for the seller would become $\Theta(T^{2/3})$ instead of $\Theta(\sqrt{T})$ in classic dynamic pricing. Our setting is similar to \cite{FeldmanKL2016} in that customers effectively also pay the lowest price within certain time duration. But we are critically different in that we consider a less pessimistic scenario where the random demand follows a stochastic distribution, and the oracle policy (when the demand distribution is known ahead) can be easily identified.

\cite{zhang2022online} studies the setting where customers are patient and are willing to wait for a few periods before deciding to leave the system without making any  purchase. Specifically, each customer is endowed with a (potentially different) willingness to pay and willingness to wait, and a customer will buy the product at the first time the price is smaller than or equal to the customer's willingness to pay. \cite{zhang2022online} develops several near-optimal cyclic pricing policy. They also show that standard learning algorithms that ignore patient customer behavior will suffer a significant loss in revenue. Our setting differs from \cite{zhang2022online} in at least one crucial component: Since \cite{zhang2022online} assumes that a customer will buy the product at the first time the price is smaller than or equal to the customer's willingness to pay, this price may not be the smallest price within the time periods in which the customer is willing to wait. Moreover, unlike in their setting, the optimal pricing policy in our setting is not cyclic.


\vspace{2mm}
\noindent
\textbf{Multi-armed Bandits with Limited Adaptivity:} From a technical point of view, our setting is related to multi-armed bandits with limited adaptivity (e.g., \citealt{GaoHRZ19,LeviX21,RouyerSB21}). This is so because, in our setup, whenever there is a price decrease, the seller may have to refund the customers. This would implicitly refrain the seller from changing the price. In \cite{LeviX21} and \cite{GaoHRZ19}, the authors impose a restriction that the decision-maker (which is the seller in our setting) can only change her actions/policies (which is price in our setting) up to a pre-specified number of times. \cite{LeviX21} reveals an interesting phase transition behavior of the optimal regret w.r.t. the number of changes allowed. Moreover, both papers confirm that (doubly) logarithmic amount of adaptivity would be enough to attain the optimal regret. As a matter of fact, part of our optimal learning strategy to plan pricing in batches (\ie, minimizing the number of price changes) is motivated by \cite{LeviX21} and \cite{GaoHRZ19}. However, different than this line of work, there is no explicit constraint on the number of price changes in our model. Instead, this constraint enters the play implicitly as a kind of regularization. It turns out that this difference leads to drastically different results, from algorithm design to the phase transitions of the optimal regret. We refer to \cref{sec:add_disc} for more detailed discussions. 

In \cite{RouyerSB21}, the authors assume that a unit cost would be incurred whenever an action change takes place. Similar to \cite{RouyerSB21}, the cost of price changes in our setting is incorporated into the objective function as a regularization. In some sense, results for stochastic bandits with switching cost developed in \cite{RouyerSB21} can be viewed as a special case of our results with $M=1$.

\section{Problem Formulation}\label{sec:model}
In this section, we introduce the notations that will be be used throughout paper. In addition, we will also introduce the learning protocol for our problem. 

\subsection{Notations}
 We define $[n]$ to be the set $\{1,2,\ldots,n\}$ for any positive integer $n.$ For $k\in [1, \infty]$, we use $\| x\|_k$ to denote the $\ell_k$-norm of a vector $ x\in\Re^d.$ We use $\log_a(\cdot)$ to denote the logarithm with base $a\,.$ When $a$ is left unspecified, this is the natural logarithm. We adopt the asymptotic notations $\,O(\cdot)\,,\,\Omega(\cdot)\,,$ and $\Theta(\cdot)$ as defined in \citep{CLRS09}. When logarithmic factors are omitted, we write them as $ \tilde{O}(\cdot)\,,\,\tilde{\Omega}(\cdot)\,,$ and $\tilde{\Theta}(\cdot)\,.$ With some abuse, these notations are used when we try to avoid the clutter of writing out constants explicitly. The indicator variable is denoted as $\bm{1}[\cdot]\,.$

\subsection{Model}

\textbf{Model Primitives:} We consider a firm, hereafter refereed to as the \emph{seller}, that sells a single product over a horizon of $T$ time steps. For each time step $t,$ the seller must first set a price $p_{i_t}\in P=\{p_1,\ldots,p_K\}\subseteq[0,1]$ and then observes the i.i.d. customer demand $D_t(p_{i_t})\in[0,1].$ We let $\mu_k=\E[D_t(p_k)]$ be the \emph{initially unknown} expected demand w.r.t. price $p_k\in P\,.$ We use $\lambda_k=p_k\mu_k$ to denote the expected reward of offering price $p_k$ for $k\in[K]\,.$ For ease of exposition, we often refer to $p_k$ as price $k\,.$ To simplfy notation, we define $D_t(p_i)\equiv 0$ for all $i\in[K]$ whenever $t\leq 0$ or $t>T\,.$

\vspace{2mm}
\noindent\textbf{Price Protection Guarantee:} Motivated by practice, we assume that the seller provides price protection guarantee for a total of $M$ time steps. That is, even if a customer purchases the product in time step $t$ at an initial price $p_{i_t},$ her final payment would  be the minimum among the prices of this period and the $M$ immediate next time periods, \ie, $\min\{p_{i_t}\,,\,p_{i_{t+1}}\,,\,\ldots\,,\,p_{i_{t+M}}\}\,.$ In other words, she would be refunded the difference between $p_{i_t}$ and the minimum of $p_{i_t}\,,\,\ldots\,,\,p_{i_{t+M}}\,.$

\vspace{2mm}
\noindent\textbf{Objective:} Since the actual payment of a customer who arrives at time step $t$ is $\min\{p_{i_t},\dots,p_{i_{t+M}}\}\,$, the total revenue of the seller throughout the entire time horizon can be written as
\begin{align*}
\texttt{Revenue}(p_{i_1}\,,\,\ldots\,,\,p_{i_T})=\sum_{t=1}^T\min\{p_{i_t}\,,\,\ldots\,,\,p_{i_{t+M}}\}D_t(p_{i_t})\,,
\end{align*}
and the total refund throughout the horizon can be written as
\begin{equation*}
    \texttt{Refund}(p_{i_1}\,,\,\ldots\,,\,p_{i_T})=\sum_{t=1}^T\left(p_{i_t}-\min\{p_{i_t}\,,\,\ldots\,,\,p_{i_{t+M}}\}\right)D_t(p_{i_t}).
\end{equation*}
It is also useful to think of revenue from an instantaneous point of view. At time step $t$, if the seller sets the price to $p_{i_t}$, then she will receive an instantaneous reward (\ie, payment) of $p_{i_t}D_t(p_{i_t})$. At the same time, this decision may also incur refunds for customers arriving between time step $t-M$ to time step $t-1$. Consider any time step $s$ satisfying $t-M\leq s\leq t-1$. By time the end of time step $t-1$, the actual payment of the customer arriving at time period $s$ is $\min\{p_{i_s},\dots,p_{i_{t-1}}\}$. Therefore, if $p_{i_t}$ is selected at time period $t$, the additional refund for the customer of time step $s$ is $\left[\min_{w\in[s,t-1]}p_{i_w}-p_{i_t}\right]^+D_t(p_{i_s})$. This means the \emph{net} revenue at time period $t$ for the seller when selecting $p_{i_t}$ can be written as
\begin{align}\label{eq:inst_rev}
	\rev_t(p_{i_t}|p_{i_{t-1}}\,,\,\ldots\,,\,p_{i_{t-M}})=p_{i_t}D_t(p_{i_t})-\underbrace{\sum_{s=t-M}^{t-1}\left[\min_{w\in[s,t-1]}p_{i_w}-p_{i_t}\right]^+D_t(p_{i_s})}_{\refund_t\left(p_{i_t}|p_{i_{t-1}},\ldots,p_{i_{t-M}}\right)}.
\end{align}
We denote the second term, \ie, the instantaneous refund, as $\refund_t(p_{i_t}|p_{i_{t-1}},\ldots,p_{i_{t-M}})$. 

The objective of the seller is to maximize her expected cumulative revenue by following a possibly randomized \emph{non-anticipatory} pricing policy $\pi=(\pi_1,\ldots,\pi_T).$ Specifically, denoting $H_t=\{p_{i_s},D_s(p_{i_s}),\refund_s\}_{s=1}^{t-1}$ as the history at time $t,$ at each time step $t,$ the seller follows $\pi_t:H_t\to P$ to pick a price $p_{i_t}.$ We use the notion of expected \emph{regret} to measure the performance of the seller. Specifically, regret is defined as the difference between the maximum expected cumulative revenue of the seller could attain should her knows all $\lambda_k$'s ahead (note that, in this case, the seller would follow a fixed price policy, see \cref{remark:oracle} for more details) and the cumulative revenue of $\pi,$ \ie, 
\begin{align}
	\regret(\pi)=T\max_{k\in[K]}\ \lambda_k-\E\left[\texttt{Revenue}(p_{i_1}\,,\,\ldots\,,\,p_{i_T})\right]\,.
\end{align}
We point out that the regret in our setting shall depend on both $M$ and $T\,,$ but we avoid the explicit dependence for notational brevity. 

\vspace{2mm}
\begin{remark}\label{remark:oracle}
If there is no price protection, the optimal policy for a seller with full knowledge of all $\lambda_k$'s would be constantly selecting the price with the highest $\lambda_k$. If the seller follows the same policy in the case with price protection, she would still achieve the maximum expected revenue (i.e., since no refund would be incurred). Hence, under price protection, the optimal policy is still constantly selecting the price with highest $\lambda_k$ when all $\lambda_k$'s are known to the seller.
\end{remark}

\vspace{2mm}
\noindent\textbf{Additional Notations: } To facilitate our discussion, for any time step $t\,,$ we will use $N_k(t)$ to denote the number of times that price $k$ is selected up to the beginning of time step $t\,,$ \ie,
\begin{align*}
	N_k(t)=\sum_{s=1}^{t-1}\bm{1}[i_s=k]\,.
\end{align*} 
The corresponding empirical mean of $\lambda_k$ up to the beginning of time step $t$ is denoted as 
\begin{align*}
\hat{\lambda}_k(t)=\frac{\sum_{s=1}^{t-1}\bm{1}[i_s=k] \, p_{i_s}D_s(p_{i_s})}{N_k(t)}\,.
\end{align*}

\subsection{Blooper Reel}\label{sec:ucb}
In this section, to provide an initial, but concrete understanding of the challenges of data-driven dynamic pricing in the presence of price protection guarantee, we consider two celebrated dynamic pricing algorithms, the Upper Confidence Bound (UCB) algorithm \citep{ABF02,KleinbergL03} and the Thompson Sampling (TS) algorithm \citep{AgrawalG12}. It is well known that, in absence of the price protection mechanism, both the UCB and the TS algorithms are capable of attaining the minimax optimal regret (\ie, $\tilde{O}(\sqrt{T})$) for the classic problem of dynamic pricing. Due to their proximity, it is tempting to apply the UCB and the Thompson Sampling algorithms to our problem. However, 
as we show below, a direct application of the UCB algorithm can result in linear regret even when the price protection period is small. 

We first discuss the popular UCB algorithm introduced in~\cite{ABF02}: For each time step $t\,,$ UCB selects the price $p_k$ that maximizes the index $\hat{\lambda}_k(t)+\sqrt{\log(T)/N_k(t)}\,,$ \ie,
\begin{align}\label{eq:ucb}
    i_t=\argmax_{k\in[K]}~\hat{\lambda}_k(t)+\sqrt{\frac{\log(T)}{N_k(t)}}\,.
\end{align}
The following proposition tells us that the above UCB algorithm could incur $\Omega(T)$ regret even when the price protection period is small.

\vspace{2mm}
\begin{proposition}\label{prop:ucb_regret1}
    Suppose that $T\geq 10\,,$ $M\geq 2\,,$ and there are only two prices $p_1,\,p_2$. Then, there exists a demand distribution such that UCB algorithm will incur $\Omega(T)$ regret.
\end{proposition}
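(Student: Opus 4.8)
The plan is to make the two prices exactly tied in expected reward, so that \emph{absent} price protection UCB would already be optimal, and then show that price protection converts UCB's incessant oscillation between the two tied prices into $\Omega(T)$ refunds. Concretely, I would take $p_1=1>p_2=1/2$ with deterministic demands $\mu_1=1/2$ and $\mu_2=1$ (both in $[0,1]$), so that $\lambda_1=\lambda_2=1/2$. Under this instance $\max_k\lambda_k=1/2$, and since $p_{i_t}D_t(p_{i_t})=\lambda_{i_t}=1/2$ for every $t$ irrespective of the arm chosen, the pre-refund revenue $\sum_{t}p_{i_t}D_t(p_{i_t})$ equals $T/2=T\max_k\lambda_k$ deterministically. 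Consequently $\regret(\text{UCB})=T\max_k\lambda_k-\E[\rev]=\E\big[\sum_t(p_{i_t}-\min\{p_{i_t},\dots,p_{i_{t+M}}\})D_t(p_{i_t})\big]$ is \emph{exactly} the total refund, so it suffices to show the total refund is $\Omega(T)$.

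Next I would pin down UCB's trajectory. Because the demand is deterministic, $\hat\lambda_k(t)\equiv\lambda_k=1/2$ for both arms once each has been pulled once (pull each arm once to initialize). The index in \eqref{eq:ucb} then reduces to the confidence bonus $\sqrt{\log(T)/N_k(t)}$, so at every step UCB selects the arm with the smaller visit count. A one-line induction gives the invariant $|N_1(t)-N_2(t)|\le 1$ for all $t$: a tie is broken to a difference of one, and a difference of one is resolved by pulling the lagging arm. This invariant forbids three identical consecutive pulls, so every maximal same-arm run has length at most two; since arm $1$ is pulled $N_1\ge (T-1)/2$ times in runs of length $\le 2$, there are at least $T/4-O(1)$ maximal runs of arm $1$, each followed by a run of arm $2$, yielding at least $T/4-O(1)$ high-to-low (``$1\to2$'') transitions.

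Finally I would convert transitions into refunds. At any $1\to2$ transition occurring at time $t$ (so $i_{t-1}=1$, $i_t=2$), the customer arriving at $t-1$ paid $p_1$ and, since $M\ge 2\ge 1$, lies inside the protection window $[t-M,t-1]$; her running minimum price was $p_1$ before step $t$ and becomes $p_2$ at step $t$, triggering an incremental refund of exactly $(p_1-p_2)\mu_1=1/4$. These refund events are attached to distinct customers (each customer $t-1$ is charged once, at the unique transition at $t$), so summing over the $\Omega(T)$ transitions gives a total refund of at least $(T/4-O(1))\cdot\tfrac14=\Omega(T)$, whence $\regret(\text{UCB})=\Omega(T)$.

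The main obstacle is the second step: controlling UCB's path well enough to guarantee $\Omega(T)$ genuine price \emph{drops} (not merely $\Omega(T)$ switches, since upward switches cost nothing). Forcing an exact tie $\lambda_1=\lambda_2$ is what makes the bonus term dominate and drives the perpetual near-alternation, and the clean invariant $|N_1(t)-N_2(t)|\le 1$ makes the count of drops robust to the tie-breaking rule. If one prefers a non-degenerate demand distribution, this step becomes the delicate part: one must show via concentration that the empirical gap $|\hat\lambda_1(t)-\hat\lambda_2(t)|$ stays dominated by $\sqrt{\log(T)/N_k(t)}$ on a high-probability event, so that the near-alternation --- and hence $\Omega(T)$ drops --- persists; the deterministic instance sidesteps this entirely.
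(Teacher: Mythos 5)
Your proof is correct and follows essentially the same strategy as the paper's: construct an instance in which the two prices have (nearly) identical expected rewards so that the UCB index is driven entirely by the exploration bonus, show by induction that UCB (near-)alternates between the two prices ($|N_1(t)-N_2(t)|\le 1$ in your version; strict pairwise alternation in the paper's, which uses rewards supported on an interval of width $T^{-3/2}/2$ rather than exactly tied deterministic rewards), and then charge each high-to-low switch a constant refund, of which there are $\Omega(T)$ since $M\ge 2$. The only cosmetic difference is that your exactly-tied deterministic instance requires the tie-breaking-robust run-counting step, whereas the paper's near-tied instance forces the alternation outright; both are handled correctly.
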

\begin{proof}[Proof Sketch]
The complete proof of this proposition is given in \cref{sec:prop:ucb_regret1} of the Appendix. In a nutshell, we provide an instance of demand distributions such that if $N_1(t)>N_2(t)$, then UCB would pick $p_2$ as the { difference between $\sqrt{\log(T)/N_2(t)}$ and $\sqrt{\log(T)/N_1(t)}$ (\ie, the second terms in \eqref{eq:ucb}) would dominate the difference between $\hat{\lambda}_1(t)$ and $\hat{\lambda}_2(t)$ (\ie, the first terms in \eqref{eq:ucb}) and vice versa.} Consequently, UCB would alternate between $p_1$ and $p_2$, which would incur $\Omega(T)$ regret (due to refund under price protection). 
\end{proof}

\vspace{2mm}
\begin{remark}[\textbf{Further Discussions on UCB}] In \cref{sec:ucb_test} of the Appendix, we provide numerical examples to support the claim in \cref{prop:ucb_regret1}. It is evident that the results in \cref{prop:ucb_regret1} can be easily generalized to more than two prices, \ie, $K\geq2\,.$ Moreover, it turns out that the above findings can go beyond the canonical UCB. In \cref{sec:ucb_discussion} of the Appendix, we provide further discussions on how even a general family of UCB algorithm can fail to provide meaningful performance guarantee under the price protection mechanism. 
\end{remark}

\vspace{2mm}
We also use numerical examples to further demonstrate that both the UCB algorithm \citep{ABF02} and the Thompson Sampling algorithm \citep{AgrawalG12} incur linear regret due to the price protection mechanism. Specifically, we use the following instance. $K=2\,,$ (\ie, there are only two prices), with $p_1=1/4\,,$ $p_2=1\,.$ The demand under the two prices are $D_t(p_1)\sim \text{Bernoulli}(2/3)\,,$ $D_t(p_2)\sim\text{Bernoulli}(1/2)\,.$ In this case, the expected reward of selecting $p_1$ and $p_2$ are $1/6$ and $1/2\,,$ respectively. The time horizon $T$ varies from $1000$ to $20000$ with a step size of $1000\,,$ and the length of price protection period is set to be $M=T/5\,.$ For each $T$, we compute the expected regret by averaging over $10^4$ iterations. The results are shown in \cref{fig:ucb_ts_regret_2}, and the portion of (expected) refund in (expected) regret under different $T$ is shown in \cref{fig:ucb_ts_refund_2}. { Typical sample paths of UCB and Thompson Sampling on this instance when $T=2000$ are shown in \cref{fig:sample_ucb} and \cref{fig:sample_ts}.}

\begin{figure}[!ht]
    \centering
    \subfigure[Expected regret]{\includegraphics[height=6cm, width = 8cm]{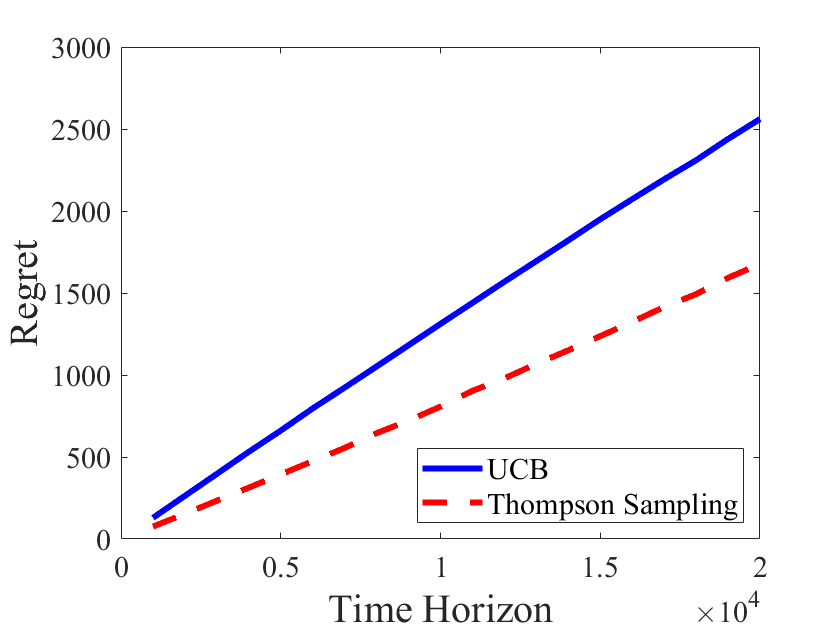}
    \label{fig:ucb_ts_regret_2}}
    \subfigure[Portion of refund]{\includegraphics[height=6cm, width = 8cm]{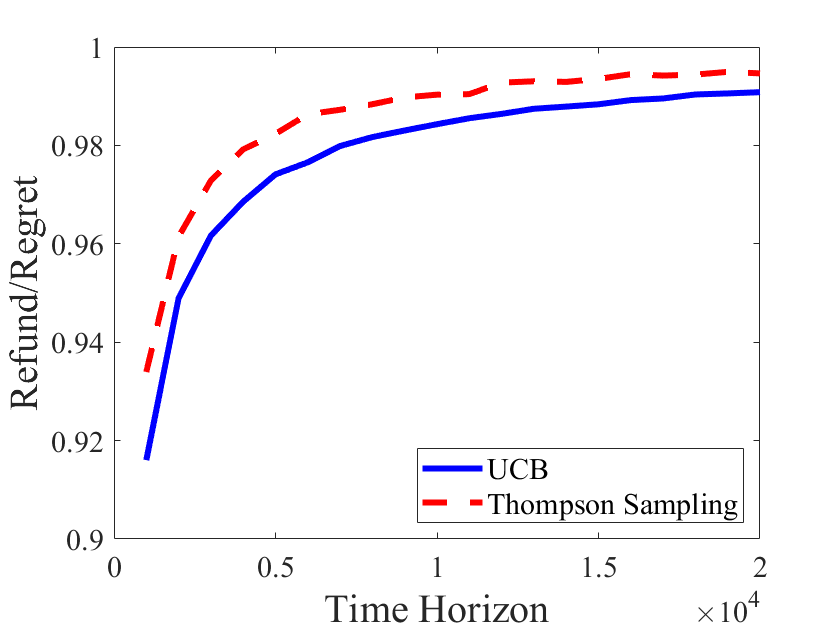}
    \label{fig:ucb_ts_refund_2}}
    \subfigure[Sample path of UCB]{\includegraphics[height=6cm, width = 8cm]{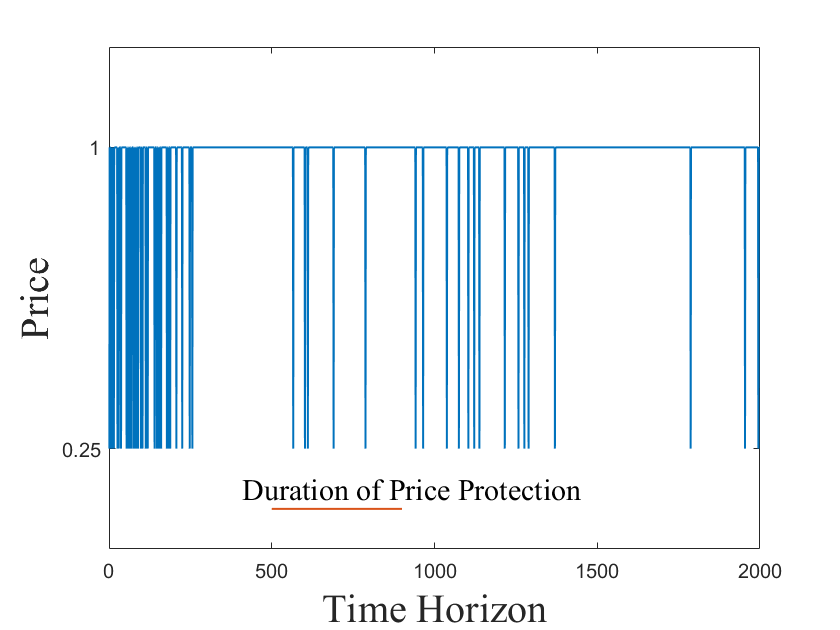}
    \label{fig:sample_ucb}}
    \subfigure[Sample path of Thompson sampling]{\includegraphics[height=6cm, width = 8cm]{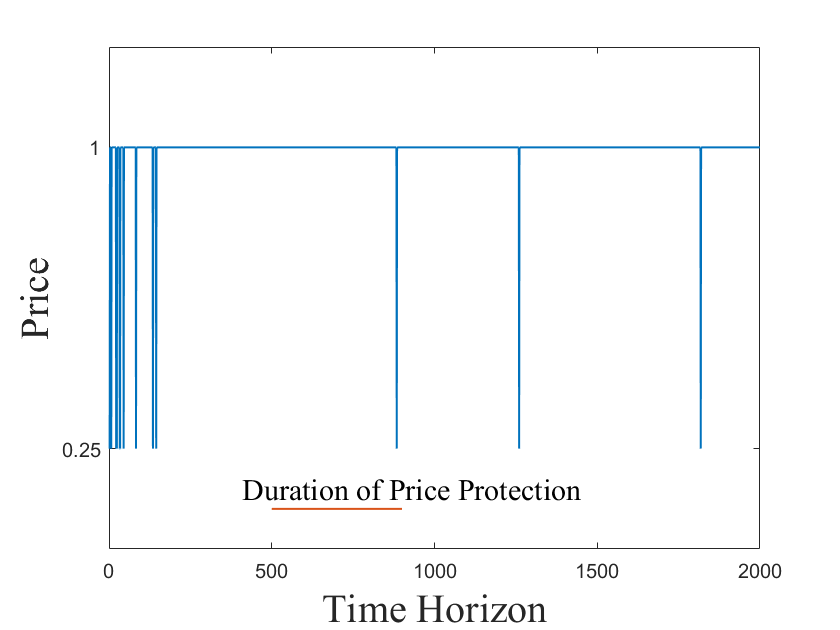}
    \label{fig:sample_ts}}
    \caption{Numerical test of the UCB and the Thompson sampling under different $T$'s}
\end{figure}

From Figure \ref{fig:ucb_ts_regret_2}, one can see that the expected regrets of the UCB and the Thompson Sampling algorithms scale linearly with $T\,.$ 
 Furthermore, one can see from Figure \ref{fig:ucb_ts_refund_2} that refund takes up of more than $90\%$ of regret throughout, which indicates that the majority of the regret comes from the refund under the price protection mechanism. 

{ One reason that both UCB and Thompson Sampling incur linear regrets is as follows: Although both algorithms select $p_2$ more frequently, they do not completely eliminate the selection of $p_1$. Whenever $p_1$ is selected, a major refund would be incurred. As can be seen from \cref{fig:sample_ucb} and \cref{fig:sample_ts}, selecting $p_1$, especially when $t$ gets large, would cause a refund for almost all $M$ time steps before that. Since $M=\Theta(T)$, both UCB or Thompson sampling thus incur linear in $T$ regret.}

To this end, one might be curious if a simple modification of the UCB and the Thompson Sampling, which take the price protection mechanism into account, could resolve the issue. We conduct further numerical experiments in the forthcoming \cref{sec:numerical_two_price} to show that even a natural modification of either UCB or Thompson Sampling algorithm that that takes into account refund would still incur linear regret. 

\section{Price Protection with Two Prices: A First Look}\label{sec:two_price}

The above discussions show that the presence of price protection guarantee can dramatically change the landscape of the dynamic pricing problem from its classical form. Most importantly, directly applying existing algorithms designed for the setting without price protection guarantee to our setting can generate highly unsatisfactory performance.

In particular, the price protection guarantee creates an extra layer of difficulty to the dynamic pricing problem as any (unnecessary) decrease of price may result in undesirable loss of revenue. At the same time, 
in order to identify the optimal price, experimenting among different prices is indispensable \cite{GaoHRZ19,LeviX21} (we refer the readers to the forthcoming \cref{remark:rationale} and \cref{sec:add_disc} for a comparison between the two settings). 

To gain some preliminary insights on how to strike the right balance between exloration and exploitation, we begin with the case where there are only two candidate prices, \ie, $K=2$ and $P=\{p_1\,,\,p_2\}.$ Without loss of generality, we assume that $p_1<p_2.$ In \cref{sec:two_price_lb}, we characterize a theoretical lower bound, which establishes a fundamental impossibility result for the special case with two prices. Next, we turn our attention to regret minimization. In \cref{sec:two_price_ub}, we propose \gdp, an algorithm for \underline{L}earning and \underline{EA}rning under \underline{P}rice protection. \gdp~is a phased exploration type algorithm that judiciously balances exploration, exploitation, and price adjustment (see the forthcoming \cref{remark:rationale} for intuition on this design). In each phase, it explores both prices according to a pre-specified price exploration schedule, and it also has a $M$-dependent price elimination schedule. Specifically, when $M$ is not too large (\ie, $M\leq T^{2/3}$), these two schedules are asynchronous, \ie, price elimination can happen in the middle of a phase of price exploration. This turns out to be beneficial in further reducing expected regret when compared to the synchronous design, and may be of independent interest (see the forthcoming \cref{remark:async} for a detailed discussion on this design). When $M$ is large (\ie, $M\geq T^{2/3}$), these two schedules are synchronous and \gdp~becomes similar to the explore-then-commit algorithm (see, \eg, chapter 6 of \citealt{LS18}), which could help to avoid excessive refund. In \cref{sec:two_price_regret}, we analyze the expected regret upper bound of \gdp, and compare its design against classic phased exploration algorithms for learning and earning (see, \eg, \citealt{AO10}). Combining the above, our results indicate that the upper and lower bound match each other up to $O(\log\log(T))$ factor. Most surprisingly, under the two price case, the optimal regret for our problem exhibits different growing patterns w.r.t. different ranges of price protection period $M\,,$ which we refer to as the phase transition phenomenon. We provide an in-depth discussion for this finding in \cref{sec:two_price_phase}.

\subsection{Lower Bound}\label{sec:two_price_lb}

Below, we establish a theoretical lower bound, which presents a fundamental impossibility regime for our problem.

\vspace{2mm}
\begin{theorem}\label{thm:two_price_lb}
	For any $T\geq M\geq 2$ and any non-anticipatory policy $\pi$, there exists $p_1,p_2\in [0,1]$ and the corresponding distributions of demand $D_t(p_1),D_t(p_2)$ such that the regret of $\pi$ is $\Omega(\sqrt{T}+\min\{M,\,T^{2/3}\})\,.$ 
\end{theorem}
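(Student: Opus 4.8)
The plan is to begin from an \emph{exact} decomposition of the regret that separates the two phenomena at play. Writing $\texttt{Revenue}=\sum_t p_{i_t}D_t(p_{i_t})-\refund$ and taking expectations, the tower property together with the independence of $D_t$ from the history gives $\E[\sum_t p_{i_t}D_t(p_{i_t})]=\sum_k\lambda_k\,\E[N_k(T+1)]$, so that, with $\lambda^\star=\max_k\lambda_k$,
\[
\regret(\pi)=\underbrace{\sum_{k}(\lambda^\star-\lambda_k)\,\E[N_k(T+1)]}_{\text{pseudo-regret}}+\E\left[\refund\right],
\]
a sum of two nonnegative quantities. Since the target $\sqrt{T}+\min\{M,T^{2/3}\}$ is within a factor $2$ of $\max\{\sqrt{T},\min\{M,T^{2/3}\}\}$, it suffices to exhibit, according to which of the two terms dominates (both $M$ and $T$ are known), a single instance forcing $\Omega(\sqrt{T})$ or a single instance forcing $\Omega(\min\{M,T^{2/3}\})$; the selected instance then yields $\Omega(\max\{\cdot,\cdot\})=\Omega(\sqrt{T}+\min\{M,T^{2/3}\})$.

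For the $\Omega(\sqrt{T})$ regime I would discard the refund and bound $\regret(\pi)$ below by the pseudo-regret alone. Taking $p_1<p_2$ with identical $p_1$-demand and two candidate $p_2$-demand distributions whose reward gap is $\Delta\asymp 1/\sqrt{T}$, the classical two-point lower bound for two-armed stochastic bandits (via Pinsker or Bretagnolle--Huber) forces $\sum_k(\lambda^\star-\lambda_k)\E[N_k]=\Omega(\Delta T)=\Omega(\sqrt{T})$ in one environment, because separating the two needs $\Omega(1/\Delta^2)=\Omega(T)$ samples of $p_2$. This part is entirely standard.

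The heart of the argument is the $\Omega(\min\{M,T^{2/3}\})$ bound, which is driven by \emph{refund} rather than by the count of sub-optimal pulls. Set $B:=\min\{M,T^{2/3}\}$ and take two environments $\nu^0,\nu^1$ sharing the same $p_1$-demand and differing only in the $p_2$-demand, with reward gap $\Delta\asymp 1/\sqrt{B}$, arranged so that $p_1$ is optimal under $\nu^0$ and $p_2$ under $\nu^1$. The structural fact, read off the global refund formula, is that a $p_2$-sale at time $s$ is refunded \emph{exactly} when the price is dropped to $p_1$ somewhere in $[s+1,s+M]$; hence $\E[\refund]=(p_2-p_1)\mu_2$ times the number of $p_2$-plays that are followed by a drop within $M$ steps. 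This creates a tension absent from classical pricing: the only way to collect the $\Theta(B)$ samples of $p_2$ needed to separate $\nu^0$ from $\nu^1$ while avoiding $\Omega(B)$ refund is to place that exploration at the very end of the horizon, where no subsequent $p_1$ can trigger a refund, which amounts to committing to one price for essentially the whole horizon before the environment has been identified. I would make this precise by change of measure: separating the two environments with $\Theta(B)$ samples of $p_2$ keeps the trajectory KL-divergence at $O(B\Delta^2)=O(1)$, so Bretagnolle--Huber implies that any policy keeping the $\nu^0$-refund at $o(B)$ must, with constant probability, reproduce the same long ``blind'' prefix under $\nu^1$, where it then plays the sub-optimal price for $\Omega(T)$ steps and pays pseudo-regret $\Delta\cdot\Omega(T)=\Omega(T/\sqrt{B})\geq\Omega(B)$, the last step using $B\leq T^{2/3}$. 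Conversely, any policy that explores $p_2$ early enough to be safe under $\nu^1$ must return to $p_1$ under $\nu^0$ while at least $M$ steps remain, so the run of $p_2$-plays preceding that drop has length at least $B$ and contributes refund $\geq\min\{B,M\}(p_2-p_1)\mu_2=\Omega(B)$, using $B\leq M$.

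The main obstacle I anticipate is controlling the learner's freedom to \emph{cluster and time} its $p_2$-exploration so as to minimize refund. The delicate step is to prove that minimizing refund necessarily forces one of the two alternatives above: either a single $p_2$-run of length $\Omega(\min\{B,M\})$ immediately before a drop to $p_1$, or a deferral so extreme that the pre-exploration prefix is long enough for the change-of-measure penalty to bite under $\nu^1$. I would formalize this by introducing the stopping time $\tau$ at which the $B$-th $p_2$-sale occurs, splitting on $\{\tau\leq T-M\}$ and $\{\tau>T-M\}$, and bounding the per-sample KL so the transfer between $\nu^0$ and $\nu^1$ loses only a constant factor. A secondary check is the matching of the two regimes near $M\asymp T^{2/3}$ and the case $M$ close to $T$ (where the clean balancing $\max\{\Delta n,\Delta(T-n)\}\geq\Delta T/2$ over the end-block length $n$ should be used in place of the $[1,T-M]$ split); the identities $B^{3/2}\leq T$ and $B\leq M$ are precisely what make the refund estimate and the deferral estimate meet at the same order.
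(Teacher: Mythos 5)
Your overall route is the same as the paper's: the decomposition of regret into pseudo-regret plus expected refund, two instances sharing the $p_1$-demand with reward gap $\Delta\asymp B^{-1/2}$ where $B=\min\{M,T^{2/3}\}$, an event defined through the \emph{exact} count of $p_2$-selections, a Bretagnolle--Huber argument carried out on the process stopped at the $B$-th $p_2$-pull (precisely the paper's ``reduced sample space'' device for keeping the KL at $O(B\Delta^2)=O(1)$), and a refund lemma stating that $B$ pulls of $p_2$ followed by any later $p_1$-pull cost $\Omega(\min\{B,M\})$ in refund. All of these ingredients are sound, and the paper uses each of them. (One small caveat: your claim that the $p_2$-plays preceding the drop form a ``run of length at least $B$'' is not justified, since the pulls may be scattered; but the refund bound $\Omega(\min\{B,M\})$ still holds for scattered pulls --- this is the paper's Proposition~5, whose proof needs a short separate argument.)

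The genuine gap is the threshold in your split $\{\tau\le T-M\}$ versus $\{\tau>T-M\}$: with this choice the ``early exploration'' branch does not force $\Omega(B)$ regret under $\nu^0$. Concretely, in the regime $\sqrt{T}<M\le T^{2/3}$ (so $B=M$, $\Delta\asymp M^{-1/2}$), consider the policy that plays $p_1$ up to time $T-4M$ and plays $p_2$ at every step thereafter. Its $B$-th $p_2$-pull occurs at $\tau=T-3M\le T-M$, so it lies in your first branch; yet it never lowers the price after a $p_2$-sale, hence incurs \emph{zero} refund, and its pseudo-regret under $\nu^0$ is only $\Delta\cdot 4M=\Theta(\sqrt{M})=o(B)$. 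Your phrase ``must return to $p_1$ under $\nu^0$'' is where the argument breaks: a policy need not return, and the penalty for not returning is $\Delta$ times the length of the blind suffix, which with your threshold is only $\Theta(M\Delta)=\Theta(M/\sqrt{B})$, not $\Omega(B)$. The fix is to place the threshold at $T-\Theta(B/\Delta)=T-\Theta(B^{3/2})$ rather than $T-M$: then not returning costs $\Delta\cdot\Omega(B^{3/2})=\Omega(B)$, returning costs $\Omega(\min\{B,M\})=\Omega(B)$ in refund, and in the complementary branch having fewer than $B$ pulls of $p_2$ before time $T-\Theta(B^{3/2})$ still leaves $\Omega(T)$ mistaken $p_1$-pulls under $\nu^1$, costing $\Omega(T\Delta)=\Omega(T/\sqrt{B})=\Omega(B)$ because $B^{3/2}\le T$. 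This is exactly the paper's choice of cutoffs: $T_1=T-M^{3/2}/2+1$ when $\sqrt{T}<M<T^{2/3}$, and $T_2=T/2+1$ when $M\ge T^{2/3}$ (where $B^{3/2}=T$).
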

\begin{proof}[Proof Sketch.]
The complete proof is provided in Section \ref{sec:thm:two_price_lb} of the Appendix. Depending on the value of $M,$ we distinguish three different cases.

\vspace{1mm}
\noindent\textbf{Case 1. $M\leq \sqrt{T}:$} For this case, one can use the standard lower bound argument for multi-armed bandit (see, \eg, Theorem 5.1 of \citealt{ABFS02}) to show that the lower bound is $\Omega(\sqrt{T})$.

\vspace{1mm}
\noindent\textbf{Case 2. $\sqrt{T}< M< T^{2/3}:$} For this case, we assume the demand of $p_1$ is deterministic and known in advance, and we come up with two different choices of demand for $p_2.$ In instance 1, selecting $p_2$ would bring $\Theta(M^{-1/2})$ lower expected payment than selecting $p_1;$ while in instance 2, selecting $p_2$ would bring $\Theta(M^{-1/2})$ higher expected payment than selecting $p_1.$
    
As such, if a policy $\pi$ selects $p_2$ for $o(M)$ times before some time period $T_1~(=T-M^{3/2}/2+1)$ such that, it would not be able to distinguish which instance is presented and suffers $\Omega(M)$ expected regret if instance 2 is presented; while if it selects $p_2$ for $\Omega(M)$ times before time period $T_1,$ then either the price protection guarantee or the { avoidance of refund (\ie, by not selecting $p_1$)} during the last $T-T_1=\Theta(M^{3/2})$ time steps would also incur $\Omega(M)$ expected regret if instance 1 is presented.

\vspace{1mm}
\noindent\textbf{Case 3. $M\geq T^{2/3}:$} The proof for this case is similar to that of Case 2 except that in instance 1, selecting $p_2$ would bring $\Theta(T^{-1/3})$ lower expected payment than selecting $p_1;$ while in instance 2, selecting $p_2$ would bring $\Theta(T^{-1/3})$ higher expected payment than selecting $p_1.$

As such, if a policy $\pi$ selects $p_2$ for $o(T^{2/3})$ times before some time period $T_2~=(T/2+1),$ it would not be able to distinguish which instance is presented and suffers $\Omega(T^{2/3})$ expected regret if instance 2 is presented; while if it selects $p_2$ for $\Omega(T^{2/3})$ before time period $T_2,$ then either the price protection mechanism or the {avoidance of refund (\ie, by not selecting $p_1$)} during the last $T-T_2=T/2$ time steps would also incur $\Omega(T^{2/3})$ expected regret if instance 1 is presented. 
\end{proof}

\vspace{2mm}
\begin{remark}[\textbf{Novelty of Our Regret Lower Bound}]\label{remark:two_price_lb}
Despite the connections between our setting and the standard multi-armed bandits, our lower bound proof is different from the classic lower bound proof for multi-armed bandits in a subtle, yet critical way. To see this more clearly, in both Case 2 and Case 3 above, we distinguish two different sub-cases based on the \underline{exact number} of times $p_2$ is selected (\ie, $N_2(t)$) instead of the \underline{expected number} of times $p_2$ is selected (\ie, $\E[N_2(t)]$), as in Theorem 5.1 of \cite{ABFS02} or Theorem 15.2 of \cite{LS18}. It turns out that this subtle difference is extremely important in making sure that the price protection guarantee can incur adequate refund. Otherwise, in Case 2 above, for instance, suppose a policy $\pi$ only has $\E[N_2(T_1)]=\Theta(M)$. Even if it keeps selecting $p_1$ after time step $T_1\,,$ the expected regret incurred by the price protection guarantee could be as low as $M^2/T_1~(=o(M))$ (this happens if $\pi$ is a policy such that $N_2(T_1+1)=T_1$ with probability $M/T_1\,;$ while $N_2(T_1+1)=0$ with probability $1-M/T_1$). Associated with these are a set of carefully designed information-theoretic tools different from those in \cite{ABFS02} and \cite{LS18}. We refer interested readers to \cref{sec:lb_remark} of the Appendix for a more involved discussion. 
\end{remark}

\subsection{Algorithm for Learning and Earning under Price Protection (\gdp)}\label{sec:two_price_ub}
In the previous discussions in \cref{sec:ucb}, we can observe that ordinary learning and earning strategies cannot achieve meaningful regret guarantee because they do not take the price protection guarantee into account. To this end, we develop \gdp, which explores the two prices in phases (of possibly increasing length), and eliminates the under-performing price once its detected. The formal description of \gdp~is provided in \cref{alg:gdp}. It follows different steps w.r.t. different values of $M$. We discuss the two cases $M < T^{2/3}$ and $M \ge T^{2/3}$ separately below.

\begin{algorithm}[htbp]
\SingleSpacedXI
\caption{Algorithm for Learning and Earning under Price Protection (\gdp)} \label{alg:gdp}
\begin{algorithmic}[]
    \State \textbf{Input:} Price set $P=\{p_1\,,\,p_2\}\,,$ time horizon $T\,,$ price protection period $M\,.$ 
    \State \textbf{Initialization:} $a\leftarrow e\sqrt{T}\,,\,B\leftarrow\lceil\log_2\log(T)\rceil\,,\,L\leftarrow\lfloor\log_2(T/e)/2\rfloor\,,\,u_b\leftarrow a^{2-2^{1-b}}~\forall b\in[B]\,,\,t_b\leftarrow\min\{T\,,\,\lceil u_b\rceil\}~\forall b\in[B]\,,\,\tilde{\Delta}_l\leftarrow2^{-l}\,,\,n_l\leftarrow\lceil2\log(T\tilde{\Delta}^2_l)/\tilde{\Delta}^2_l\rceil\,,\, N\leftarrow \lceil T^{2/3}\rceil$
    \If{$M<T^{2/3}$}
    \For {Phase $b=1\,,\, \ldots\,,\, B$}
    \If{neither price is eliminated}
    \State Select each of them for $(t_b-t_{b-1})/2$ times, starting with price $\argmax_k~\hat{\lambda}_k(t_{b-1}+1)$
    \If{$N_1(t)\,,\,N_2(t)\geq n_l$ for the first time}
    \If{$\hat{\lambda}_{\max}(t) - \hat{\lambda}_{k}(t)>\sqrt{{2\log(T\tilde{\Delta}^2_l)}/{n_l}}$ for some price $k$}
    \State Eliminate price $k$, and select the remaining price for the rest of this phase
    \EndIf
    \EndIf
    \Else
    \State Select the remaining price
    \EndIf
    \EndFor
    \Else
    \State Select $p_1$ and $p_2$ for $N$ times, compute $\hat{\lambda}_1(2N+1)$ and $\hat{\lambda}_2(2N+1)\,,$ and select price $\argmax_k~\hat{\lambda}_k(2N+1)$ for the rest of the time
    \EndIf
\end{algorithmic}
\end{algorithm}

\vspace{2mm}
\noindent\underline{\textbf{Case 1. $M< T^{2/3}$}} 

\noindent In this case, \gdp~works in multiple phases. In each phase, if there is only one plausible price left, \gdp~would simply selects that price; otherwise, if neither of the two prices is eliminated, \gdp~would select $p_1$ and $p_2$ for equal number of times. The price with higher empirical mean reward would be selected first. Then, whenever a certain amount of demand data associated with each price is collected, it performs a statistical test to see if the averaged reward of one price significantly exceeds that of the other price. If that is the case, \gdp~would eliminate the under-performing price, and stay with the other for the rest of the time. We remark that different than existing phased exploration type algorithms (see, \eg, \citealt{AO10,GaoHRZ19}), the statistical test of \gdp~can be done throughout each phase instead of only at the end of a phase, and there could possibly be multiple tests in each phase (see the forthcoming \cref{remark:async} for a discussion on this).

Specifically, \gdp~maintains two sequences of time steps, the first one keeps track of the different phases of \gdp. The second keeps track of the time steps when statistical tests would be performed. We begin with the sequence that determines the phases of \gdp. Let $a=e\sqrt{T}$ and $B=\lceil\log_2\log(T)\rceil\,,$ we define the following quantities $\{u_b\,,\,t_b\}_{b=0}^B$:
\begin{align}\label{eq:def_u_t}
  u_0 = 1\,,\ t_0=0\,,\  u_b=a\sqrt{u_{b-1}}~\left(=a^{2-2^{1-b}}\right)\,,\ t_b = \min\left\{T\,,\ \left\lceil u_b\right\rceil\right\}\,\quad  \forall~b=1\,,\ \ldots\,,\ B\,. 
\end{align}

\vspace{1mm}
\noindent
We remark that the definitions in \eqref{eq:def_u_t} follows from \cite{GaoHRZ19}. With this, each phase begins at time step $t_{b-1} +1$ and ends at time step $t_b\,.$ The algorithm begins by viewing both $p_1$ and $p_2$ as plausible prices. In each phase $b\,,$ if none of $p_1$ and $p_2$ is removed, \gdp~would first select the price with higher empirical mean reward, computed with data from all previous phases, \ie, $$\argmax_{k\in\{1\,,\,2\}}~\hat{\lambda}_k(t_{b-1}+1)\,,$$ 
for half of the phase, and then select the other price. 

For the sequence of time steps when statistical tests are done, we use $L=\lfloor\log_2(T/e)/2\rfloor$ and
\begin{align}\label{eq:def_delta_n}
    \tilde{\Delta}_l=2^{-l}\,,\ n_l = \left\lceil\frac{2\log(T\tilde{\Delta}^2_l)}{\tilde{\Delta}^2_l}\right\rceil\,\quad\forall l=1\,,\ 2\,,\ \ldots\,, L\,.
\end{align} 

\vspace{1mm}
\noindent
We remark that the definitions in \eqref{eq:def_delta_n} follows from \cite{AO10}. For $l=1\,,\,2\,,\,\ldots\,,$ whenever both $N_1(t)$ and $N_2(t)$ arrives at $n_l\,,$ \gdp~would first compute $\hat{\lambda}_1(t)$ and $\hat{\lambda}_2(t)\,,$ respectively. Then, the statistical test states that if 
\begin{align}\label{eq:test}
    \hat{\lambda}_{\max}(t) - \hat{\lambda}_{k}(t)>\sqrt{\frac{2\log(T\tilde{\Delta}^2_l)}{n_l}}\,
\end{align}
holds for either $k=1$ or $k=2\,,$ the corresponding price would be removed and \gdp~would stay with the other price for the rest of the time horizon.

\vspace{2mm}

\noindent \underline{\textbf{Case 2. $M\geq T^{2/3}$}} 

\noindent
In this case, if we continue to follow the design for the case when $M<T^{2/3}\,,$ the expected regret incurred by the refund would be overwhelming due to the relatively large price protection period. Therefore, \gdp~would simply select each price for $N=\lceil T^{2/3}\rceil$ number of times (first $p_1$ and then $p_2$) and then selects the price with larger empirical mean reward for the rest of the time, \ie,
\begin{align*}
    i_t=\begin{cases}
    1&\text{ when }t\leq N;\\
    2&\text{ when } N<t\leq 2N;\\
    \argmax_{k}\hat{\lambda}_k(2N+1)&\text{ when } 2N<t.
    \end{cases}
\end{align*}

\vspace{1mm}

\subsection{Regret Analysis and Design Rationale}\label{sec:two_price_regret}
In this section, we analyze the performance of \gdp, and provide the design rationale as well as a detailed comparison between \gdp~and existing phased exploration algorithms. We begin with the case where $M<T^{2/3}\,.$

\vspace{2mm}
\begin{proposition}\label{prop:two_price_ub1}
    When $M< T^{2/3}\,,$ the expected regret of \gdp~is $O(\sqrt{T}+M\log\log(T))\,.$
\end{proposition}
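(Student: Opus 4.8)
The plan is to decompose the regret into a \emph{classic} component (the revenue loss from playing the sub-optimal price) and a \emph{refund} component, and to bound each on a high-probability clean event. Write $k^*$ for the optimal price and $\Delta=|\lambda_1-\lambda_2|$ for the gap. Since $\E[\rev]=\E[\sum_t p_{i_t}D_t(p_{i_t})]-\E[\refund]$ and $\E[\sum_t p_{i_t}D_t(p_{i_t})]=\E[\sum_t\lambda_{i_t}]$ for a non-anticipatory policy, we obtain
\begin{align}
\regret(\gdp)=\underbrace{\Delta\cdot\E[N_{\mathrm{sub}}(T+1)]}_{\text{classic regret}}+\underbrace{\E[\refund]}_{\text{refund regret}},
\end{align}
where $N_{\mathrm{sub}}(T+1)$ counts selections of the sub-optimal price. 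First I would introduce the clean event $\mathcal{E}$ on which, at every test level $l$ and for both prices $k$, $|\hat{\lambda}_k(t)-\lambda_k|\le c_l:=\sqrt{\log(T\tilde{\Delta}^2_l)/(2n_l)}$ whenever $N_k(t)=n_l$. A Hoeffding bound with a union bound over the $O(L)=O(\log T)$ levels and two prices gives $\Pr[\mathcal{E}^c]=O(1/T)$; since regret is trivially at most $T$, the contribution of $\mathcal{E}^c$ is $O(1)=O(\sqrt{T})$. Everything below is argued on $\mathcal{E}$.

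For the classic regret I would follow the improved action-elimination analysis of \cite{AO10}. The choice $n_l=\lceil 2\log(T\tilde{\Delta}^2_l)/\tilde{\Delta}^2_l\rceil$ forces $c_l\le\tilde{\Delta}_l/2$, so the test threshold in \eqref{eq:test} equals $2c_l\le\tilde{\Delta}_l$. On $\mathcal{E}$ the optimal price is never eliminated (the empirical gap in favor of the other price is always at most $2c_l-\Delta<2c_l$), and the sub-optimal price is eliminated no later than the first level $l^*$ with $\tilde{\Delta}_{l^*}<\Delta/2$, because there $\hat{\lambda}_{\max}(t)-\hat{\lambda}_{\mathrm{sub}}(t)\ge\Delta-2c_{l^*}>2c_{l^*}$. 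Since $\tilde{\Delta}_{l^*}\ge\Delta/4$, this yields $N_{\mathrm{sub}}(T+1)\le n_{l^*}=O(\log(T\Delta^2)/\Delta^2)$, hence classic regret $O(\log(T\Delta^2)/\Delta)$. I would then note that maximizing $\log(T\Delta^2)/\Delta$ over $\Delta\ge 1/\sqrt{T}$ gives $O(\sqrt{T})$ (the maximizer is $\Delta=e/\sqrt{T}$), while for $\Delta\le 1/\sqrt{T}$ the trivial bound $\Delta T\le\sqrt{T}$ applies; either way the classic regret is $O(\sqrt{T})$.

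The refund regret is the novel part and the main obstacle. The key structural observation is that, with two prices, a refund is incurred only at a \emph{high-to-low} transition: a customer who paid $p_2$ at time $t$ is refunded $(p_2-p_1)D_t(p_2)\le 1$ iff $p_1$ is next played within the following $M$ steps, and all $p_2$-customers refunded by a given first return to $p_1$ at some time $\tau$ lie in $[\tau-M,\tau-1]$. Hence each high-to-low transition contributes at most $M$ to the total refund, so $\E[\refund]\le M\cdot(\text{number of high-to-low transitions})$. I would then bound the number of such transitions using the phase structure: in each of the $B=O(\log\log T)$ ``both-plausible'' phases, \gdp{} plays one contiguous block of each price, producing at most one within-phase and one phase-boundary high-to-low transition, while the single asynchronous elimination event produces at most one more transition, after which only one price is ever played and no further refund occurs. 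This gives at most $2B+1=O(\log\log T)$ high-to-low transitions and thus $\E[\refund]=O(M\log\log T)$. The delicate point to verify carefully is precisely that asynchronous (mid-phase) elimination cannot manufacture extra transitions: it fires at most once and commits to a single price thereafter, so the $O(\log\log T)$ count is not inflated by the repeated testing. Combining the three bounds yields $\regret(\gdp)=O(\sqrt{T}+M\log\log T)$, as claimed.
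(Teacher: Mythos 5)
Your decomposition into classic regret plus refund, and your refund bound itself (at most one high-to-low price switch per phase midpoint/boundary, $B=O(\log\log T)$ phases, each switch costing at most $M$), match the paper's argument and are fine. The classic-regret half, however, has two genuine gaps. The smaller one: your clean-event probability is wrong. The per-level failure probability is $2\exp(-2n_lc_l^2)=2/(T\tilde{\Delta}_l^2)=2\cdot 4^{l}/T$, which \emph{grows} with $l$; summed over all $L=\Theta(\log T)$ levels it is dominated by the finest level $\tilde{\Delta}_L\approx\sqrt{e/T}$ and is $\Theta(1)$, not $O(1/T)$. So "regret at most $T$ off $\mathcal{E}$" yields an $O(T)$ term and the argument collapses. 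This part is repairable: only levels $l\le l^*$ are ever needed, and the peeling sum $\sum_{l\le l^*}T\Delta\cdot 2/(T\tilde{\Delta}_l^2)$ is geometric and $O(1/\Delta)$ — exactly how the paper handles it (its Cases A and B1).

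The second gap is the essential one, and it is precisely the point where this algorithm differs from \cite{AO10}. You claim that on $\mathcal{E}$ the sub-optimal price is played at most $n_{l^*}$ times. That would hold if exploration and testing were synchronous, but in \gdp{} they are not: within phase $b$ each surviving price is played in one \emph{contiguous block} of length $(t_b-t_{b-1})/2$, and the level-$l^*$ test in \eqref{eq:test} fires only when \emph{both} counts reach $n_{l^*}$. If the sub-optimal price happens to have the higher empirical mean at the start of the phase $b^*$ containing this test (so it is played first), its count races ahead to roughly $t_{b^*}/2$ before the optimal price's count can reach $n_{l^*}$ and trigger elimination; hence $N_{\mathrm{sub}}$ can be of order $t_{b^*}/2\gg n_{l^*}$ even on your clean event (note also that your event controls $\hat{\lambda}_k$ only at sample size exactly $n_l$, while the test statistic for the racing price uses more samples). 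The paper devotes its Case B2 to exactly this: it bounds the probability that the sub-optimal price is played first in phase $b^*$ by $O\left(1/(\Delta\sqrt{t_{b^*-1}})\right)$ via a sub-Gaussian deviation bound plus Markov's inequality, and then exploits the doubly-exponential schedule \eqref{eq:def_u_t}, through $t_{b^*}/\sqrt{t_{b^*-1}}=O(\sqrt{T})$, to make the expected contribution $O(\sqrt{T})$. Without this step, the best you can say deterministically is $\Delta t_{b^*}/2=O(\sqrt{T\log T})$, so your proof as written does not establish the claimed $O(\sqrt{T}+M\log\log T)$ bound.
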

\begin{proof}[Proof Sketch]
The complete proof of this Proposition is provided in \cref{sec:prop:two_price_ub1}.

We first note that by virtue of \gdp, there are at most $B=\log_2\log(T)$ many times of selecting $p_1$ after $p_2\,.$ Therefore, the price protection guarantee would incur at most $O(M\log\log(T))$ regret.

To proceed, we assume w.l.o.g. that $p_1$ is the optimal price, \ie, $\lambda_1\geq\lambda_2\,,$ and denote $\Delta = \lambda_1 - \lambda_2\,,$
as the difference in expected reward between the two prices. We consider the case where $\Delta\geq\gamma$ for some $\gamma\geq \sqrt{e/T}$ whose value is to be specified (see Theorem 3.1 of \citealt{AO10}) as when $\Delta<\gamma,$ the expected regret is at {most $T\Delta\leq T\gamma$ (excluding regret from price protection).} 

Let $l^*$ be the smallest $l$ such that $\tilde{\Delta}_l< \Delta/2\,$  and $t^*$ be the first time step where both $N_1(t)$ and $N_2(t)$ exceeds $n_{l^*}\,.$ We remark that $t^*$ can be much larger than $2n_{l^*}$ due to the exploration schedule is implemented based on $t_b$'s. Moving forward, we consider two different cases, depending on whether either price is removed before the beginning of time step $t^*\,.$ 

Similar to the proof of Theorem 3.1 in \cite{AO10}, defining $t^{(l)}$ as the first time step when both $N_1(t)$ and $N_2(t)$ are at least $n_l\,,$ by standard concentration of measure arguments (\eg, Hoeffding's inequality), $\hat{\lambda}_k(t^{(l)})$ and $\lambda_k$ would be at most $\sqrt{{\log(T\tilde{\Delta}_l^2)}/{2n_l}}$ apart from each other for both $k\in\{1\,,\,2\}$ with probability at least $1-O(T/\Delta^2)$. One can thus establish that $p_1$ is unlikely to be eliminated while $p_2$ is very likely to be eliminated after time step $t^*\,$ (as $\sqrt{{\log(T\tilde{\Delta}_{l^*}^2)}/{2n_{l^*}}}<\tilde{\Delta}_{l^*}/4\leq\Delta/4$ by definition of $l^*$). Therefore, the expected regret incurred by mistakenly eliminating $p_1$ or not eliminating $p_2$ after time step $t^*$ is at most $O(1/\Delta)\,$ (note that the regret is at most $T\Delta$ excluding refund).

Different than the proof of Theorem 3.1 in \cite{AO10}, however, the analysis for expected regret incurred by properly eliminating $p_2$ before time $t^*$ can be dramatically different. This is because, $t^*\,,$ the first time step when $N_k(t)\geq n_{l^*}$ holds for both $k\in\{1\,,\,2\}\,,$ can be much larger than $2n_{l^*}$ due to the asynchronous price exploration and price elimination schedules (see the forthcoming \cref{remark:async} for a discussion on the benefit of this design).

To this end, we let $b^*$ be the first phase such that $t_b\geq 2n_{l^*}\,,$ which means $t^*\in[t_{b^*-1}+1\,,\,t_{b^*}]\,.$ Suppose $p_2$ is eliminated before phase $b^*,$ we must have that until the end of phase $b^*-1,$ $p_2$ is selected by at most $2n_{l^*}$ times, and would incur an expected regret of order $O(\log(T\Delta^2)/\Delta)\,.$ Suppose $p_2$ is eliminated during phase $b^*\,,$ we note that in phase $b^*,$ if $p_2$ is selected first in phase $b^*,$ it would be selected for at most $t_{b^*}/2$ times; Otherwise, it would be selected for at most $n_{l^*}$ times. By our design of \gdp, it can be shown that the probability that $p_2$ would be selected first during phase $b^*$ is at most $O(1/(\Delta\sqrt{t_{b^*-1}}))\,.$
Putting these together, we could see that the expected regret under this case is at most 
$O(\sqrt{T}+ {\log(T\Delta^2)}/{\Delta})\,,$ where we utilize the definition of $t_b$'s in \eqref{eq:def_u_t} and hence $t_b/\sqrt{t_{b-1}}=O(\sqrt{T})$.

Altogether, the expected regret (excluding refund from price protection) of \gdp~is of order
\vspace{1mm}
\begin{eqnarray*}
O\left(T\Delta\bm{1}[\Delta\leq\gamma]+\left(\sqrt{T}+\frac{1+\log(T\Delta^2)}{\Delta}\right)\bm{1}[\Delta\geq\gamma]\right)\,.
\end{eqnarray*}

\noindent
By setting $\gamma=\Theta(1/\sqrt{T})\,,$ the expected regret (excluding refund from price protection) is at most $O(\sqrt{T})\,.$ The statement follows by further incorporating the expected regret from the refund.
\end{proof}

We now turn our attention to the case when $M\geq T^{2/3}\,.$

\vspace{2mm}
\begin{proposition}\label{prop:two_price_ub2}
    When $M\geq T^{2/3}\,,$ the expected regret of \gdp~is $O(T^{2/3})\,.$
\end{proposition}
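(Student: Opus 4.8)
The plan is to decompose the regret of \gdp\ in this regime into a \emph{price-choice} component and a \emph{refund} component and bound each by $O(T^{2/3})$ separately. Writing $N=\lceil T^{2/3}\rceil$ and using that $\E[p_{i_t}D_t(p_{i_t})\mid H_t]=\lambda_{i_t}$ (since $D_t$ is independent of the history and $i_t$ is history-measurable), the net-revenue identity in \eqref{eq:inst_rev} gives
\[
\regret(\gdp)=\underbrace{\sum_{t=1}^T\left(\lambda^*-\E[\lambda_{i_t}]\right)}_{\text{price-choice regret}}+\underbrace{\E[\refund]}_{\text{refund regret}},
\]
where $\lambda^*=\max\{\lambda_1,\lambda_2\}$ and $\refund$ is the total refund. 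Without loss of generality I take $\lambda_1\ge\lambda_2$ with gap $\Delta=\lambda_1-\lambda_2\in[0,1]$, the argument being symmetric in the roles of the two prices.

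For the price-choice regret, the exploration steps $t\le 2N$ contribute at most $N\Delta\le N=O(T^{2/3})$, since each price is played exactly $N$ times and the per-step loss is at most $\Delta\le1$. After $t=2N$, \gdp\ commits to $\argmax_k\hat\lambda_k(2N+1)$, so the only remaining loss comes from committing to the suboptimal price over the $T-2N$ commit steps, costing $(T-2N)\Delta$ when this happens. Since each $\hat\lambda_k(2N+1)$ is an average of $N$ i.i.d.\ samples lying in $[0,1]$, Hoeffding's inequality gives $\Pr[\text{wrong commit}]\le 2\exp(-N\Delta^2/2)$, so the commit-error regret is at most $2T\Delta\exp(-N\Delta^2/2)$. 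Here I would \emph{not} split on $\Delta$ at a fixed threshold (which injects a spurious $\sqrt{\log T}$ factor); instead I bound $\sup_{\Delta\ge0}T\Delta\exp(-N\Delta^2/2)=O(T/\sqrt N)=O(T/T^{1/3})=O(T^{2/3})$ by elementary calculus, which settles the price-choice regret cleanly.

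For the refund regret, the key structural observation is that in this regime \gdp\ explores in \emph{increasing} price order ($p_1$ first, then $p_2$), so the transition $p_1\to p_2$ is a price increase that triggers no refund, and the only possible price \emph{drop} over the entire horizon is the single commit to $p_1$ at time $2N+1$ (and only if $p_1$ is the empirical winner). Such a drop refunds only customers who were charged strictly above $p_1$, namely the $N$ customers from the $p_2$-exploration block $[N+1,2N]$; the phase-1 customers already paid $p_1$ and receive nothing. Because $M\ge T^{2/3}\ge N$, the protection window of each $p_2$-block customer extends past time $2N$ into the committed $p_1$ region, so each is refunded exactly $(p_2-p_1)D_s(p_2)$. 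Hence $\E[\refund]\le N(p_2-p_1)\mu_2\le N=O(T^{2/3})$ unconditionally, and summing the two components yields the claim.

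The main obstacle is precisely this refund bound: a single price drop could a priori refund every customer in a protection window of length $M$, and since $M$ may be as large as $T$ this would be catastrophic. The crux — and the reason the explore-then-commit schedule with increasing-order exploration is the right design in this regime — is that only the $O(T^{2/3})$ customers who were ever charged above $p_1$ can ever generate a refund, which caps the total refund \emph{independently of} $M$. Everything else reduces to a standard explore-then-commit calculation, the one care point being to optimize over $\Delta$ continuously so as to keep the final bound at $O(T^{2/3})$ rather than $\tilde O(T^{2/3})$.
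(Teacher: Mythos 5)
Your proof is correct, and its skeleton matches the paper's: the same decomposition of regret into a price-choice term plus expected refund, the same observation that the only possible price drop under \gdp~in this regime is the single commit at time $2N+1$, so that the total refund is generated only by the $N$ customers charged $p_2$ and is thus $O(T^{2/3})$ independently of $M$ (your version of this step is actually more detailed than the paper's one-line claim), and the same $N\Delta\leq N$ bound for the exploration steps. Where you genuinely diverge is the commit-error estimate. The paper deliberately avoids the Hoeffding tail: it notes that $\hat{\lambda}_k(2N+1)-\lambda_k$ is $1/(4N)$-sub-Gaussian, bounds $\E[|\hat{\lambda}_k(2N+1)-\lambda_k|]\leq \sqrt{2\pi}/(2\sqrt{N})$, and applies Markov's inequality, giving $\Pr(\text{wrong commit})\leq 2\sqrt{2\pi}/(\Delta\sqrt{N})$; the gap $\Delta$ then cancels identically in $\Delta\E[N_2(T+1)]\leq \Delta N + 2\sqrt{2\pi}T/\sqrt{N}$, with no case analysis or optimization needed. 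You instead keep the exponential Hoeffding bound and maximize $T\Delta e^{-N\Delta^2/2}$ over $\Delta$ by calculus, obtaining $e^{-1/2}T/\sqrt{N}$. Both routes yield $O(T/\sqrt{N})=O(T^{2/3})$ with no logarithmic factor, and both make the same underlying point: the $(\log T)^{1/3}$ inflation in the textbook explore-then-commit analysis comes not from Hoeffding itself but from fixing a confidence width $\sqrt{\log T/N}$ in a clean-event argument; your continuous optimization over $\Delta$ removes it just as effectively as the paper's Markov trick, and is arguably more elementary since it needs no sub-Gaussian deviation machinery. One trivial slip: your chain $M\geq T^{2/3}\geq N$ is backwards, since $N=\lceil T^{2/3}\rceil\geq T^{2/3}$; what you want is $M\geq\lceil T^{2/3}\rceil=N$, which holds because $M$ is an integer. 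This is harmless in any case, because the refund upper bound does not require each $p_2$-customer's protection window to reach the commit time --- if it does not, that customer's refund is simply zero.
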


\vspace{2mm}
The complete proof of this proposition is provided in \cref{sec:prop:two_price_ub2}. We point out that in this proposition, we save a $O(\log_e(T))$ factor when compared to the conventional results (see, \eg, section 1.2 of \citealt{Slivkins19}) by avoiding a direct application of the Hoeffding's inequality. Combining the above, we have the following theorem regarding the regret upper bound of \gdp.

\vspace{2mm}
\begin{theorem}\label{thm:two_price_ub}
    The expected regret of \gdp~is $O(\sqrt{T}+\min\{M\log\log(T)\,,\,T^{2/3}\})\,.$
\end{theorem}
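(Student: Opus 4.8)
The plan is to obtain Theorem~\ref{thm:two_price_ub} as a direct corollary of Propositions~\ref{prop:two_price_ub1} and~\ref{prop:two_price_ub2}. Because \gdp~itself branches on whether $M<T^{2/3}$ or $M\geq T^{2/3}$, and because each of the two propositions already bounds the \emph{total} expected regret (the learning loss together with the refund incurred by the price protection guarantee) in its respective branch, essentially all of the technical content has been discharged. The only remaining task is to merge the two regime-specific bounds into the single expression $O(\sqrt{T}+\min\{M\log\log(T),\,T^{2/3}\})$ by a case split on the size of $M$ together with an elementary comparison of $M\log\log(T)$ against $T^{2/3}$.

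First I would treat the regime $M\geq T^{2/3}$. Here Proposition~\ref{prop:two_price_ub2} gives expected regret $O(T^{2/3})$. Since $\sqrt{T}\leq T^{2/3}$ and, in this regime, $M\log\log(T)\geq T^{2/3}$ so that $\min\{M\log\log(T),\,T^{2/3}\}=T^{2/3}$, the bound $O(T^{2/3})$ is exactly $O(\sqrt{T}+\min\{M\log\log(T),\,T^{2/3}\})$. Next I would treat $M<T^{2/3}$, where Proposition~\ref{prop:two_price_ub1} gives $O(\sqrt{T}+M\log\log(T))$. Whenever $M\log\log(T)\leq T^{2/3}$ we have $\min\{M\log\log(T),\,T^{2/3}\}=M\log\log(T)$, so the two expressions coincide and the claimed form follows immediately. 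Taking, over all admissible $M$, the larger of the two regime bounds then yields the single stated upper bound.

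The only delicate point---and the place I would expect to spend any care at all---is the narrow transition band $T^{2/3}/\log\log(T)<M<T^{2/3}$. There the minimum is pinned at $T^{2/3}$, whereas Proposition~\ref{prop:two_price_ub1} a priori certifies only $O(\sqrt{T}+M\log\log(T))$; the two quantities agree up to the very $\log\log(T)$ factor that already appears explicitly in the statement, since $M\log\log(T)\leq T^{2/3}\log\log(T)$ throughout this band. Thus the compact $\min$-form faithfully records the behavior of \gdp~across the two regimes and, in particular, reproduces the $\tilde{\Theta}(\sqrt{T}+\min\{M,\,T^{2/3}\})$ phase-transition picture. I emphasize that no new estimate is needed beyond Propositions~\ref{prop:two_price_ub1} and~\ref{prop:two_price_ub2}: the genuine difficulty lives entirely in those two results---the asynchronous exploration-and-elimination analysis behind the first and the explore-then-commit analysis behind the second---while Theorem~\ref{thm:two_price_ub} is just the bookkeeping that unifies them.
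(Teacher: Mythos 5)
Your proposal is correct and takes exactly the paper's route: the paper's entire proof of this theorem is the single remark that it follows immediately from \cref{prop:two_price_ub1} and \cref{prop:two_price_ub2}, which is precisely your case split on $M<T^{2/3}$ versus $M\geq T^{2/3}$. Your explicit treatment of the transition band $T^{2/3}/\log\log(T)<M<T^{2/3}$ is in fact more careful than the paper itself, and it pinpoints the one real imprecision there: in that band the propositions certify only $O(\sqrt{T}+M\log\log(T))$, which can exceed the literal $\min$-form by a $\log\log(T)$ factor, so the cleanest fully rigorous phrasing of the bound would be $O\left(\sqrt{T}+\min\{M,\,T^{2/3}\}\log\log(T)\right)$ --- a cosmetic issue in the theorem's statement rather than a gap in your argument.
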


\vspace{2mm}
This theorem follows immediately from \cref{prop:two_price_ub1} and \cref{prop:two_price_ub2}. 

\vspace{2mm}
\begin{remark}[\textbf{On Phased Exploration Design}]\label{remark:rationale}
The phased exploration design of \gdp~is indeed quite natural. As one can clearly see, whenever there is a price drop from $p_2$ to $p_1\,,$ refund can be incurred due to the price protection guarantee. This prompts us to consider limiting the price changes. As shown in \cite{GaoHRZ19,LeviX21}, phased exploration algorithms are able to provide nearly optimal performance when the number of policy updates and/or action changes are limited, we thus follow a similar design for \gdp.
\end{remark}

\vspace{2mm}
\begin{remark}[\textbf{The Benefit of Asynchronous Design ($M<T^{2/3}$)}]\label{remark:async}
Phased exploration algorithms have been widely used in the multi-armed bandits literature. Nevertheless, almost all existing phased exploration algorithms consider exclusively synchronous exploration and elimination schedules, \ie, elimination could only take place at the end of a phase. Among others, \cite{AO10} considers an exponentially growing exploration schedule, \ie, $t_b$'s would be defined similar to $n_l$ in \eqref{eq:def_delta_n} and there are $\Theta(\log(T))$ phases. When $M<T^{2/3}\,,$ directly applying their algorithm to our setting would result in an expected regret of order $O(\sqrt{T}+M\log(T))\,.$ On the other hand, \cite{GaoHRZ19} considers a double exponentially growing exploration schedule, \ie, $t_b$'s would be defined as in \eqref{eq:def_u_t} and there are $\Theta(\log\log(T))$ phases. Hence, directly applying their algorithm to our setting would result in an expected regret of order $O(\sqrt{T}\log(T)+M\log\log(T))\,.$ The degradation from $\sqrt{T}$ to $\sqrt{T}\log(T)$ in the first term is due to the fact that, \cite{GaoHRZ19} aims at minimizing policy adaptivity (\ie, number of policy updates), and hence, a sub-optimal price may be retained for a long period of time even when it is detected (because elimination only takes place at the end of a phase). In return, with exponentially fewer number of phases, \cite{GaoHRZ19} reduces the amount of refund from $O(M\log(T))$ to $O(M\log\log(T))$.

\begin{figure}[h]
	\centering
	\subfigure[\cite{GaoHRZ19}has low refund, but does not eliminate $p_2$ promptly]{\includegraphics[width=10.6cm,height=5.3cm]{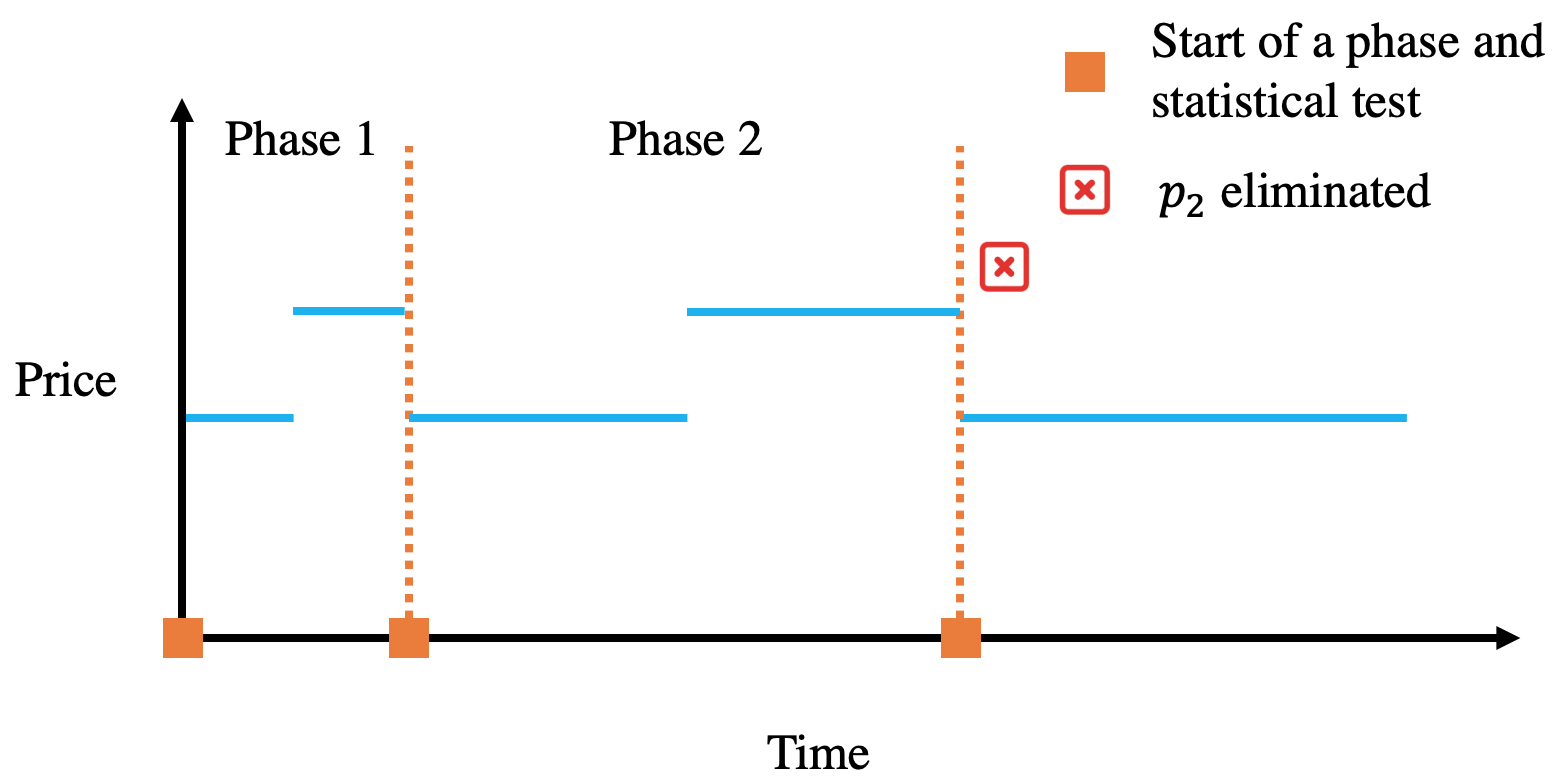}}\vspace{-2mm}
	\subfigure[\cite{AO10} eliminates $p_2$ promptly but with larger refund]{\includegraphics[width=10.6cm,height=5cm]{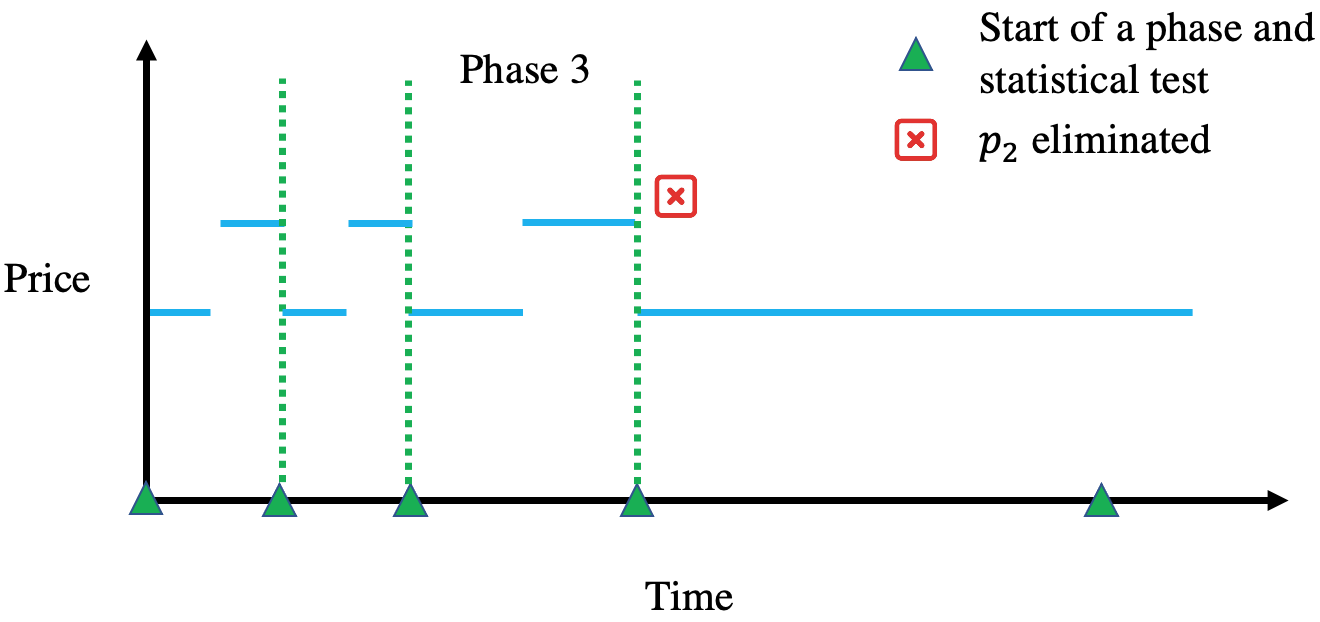}}\vspace{-2mm}
	\subfigure[\gdp~ensures $p_2$ is eliminated promptly while enjoying low refund]{\includegraphics[width=10.6cm,height=5.2cm]{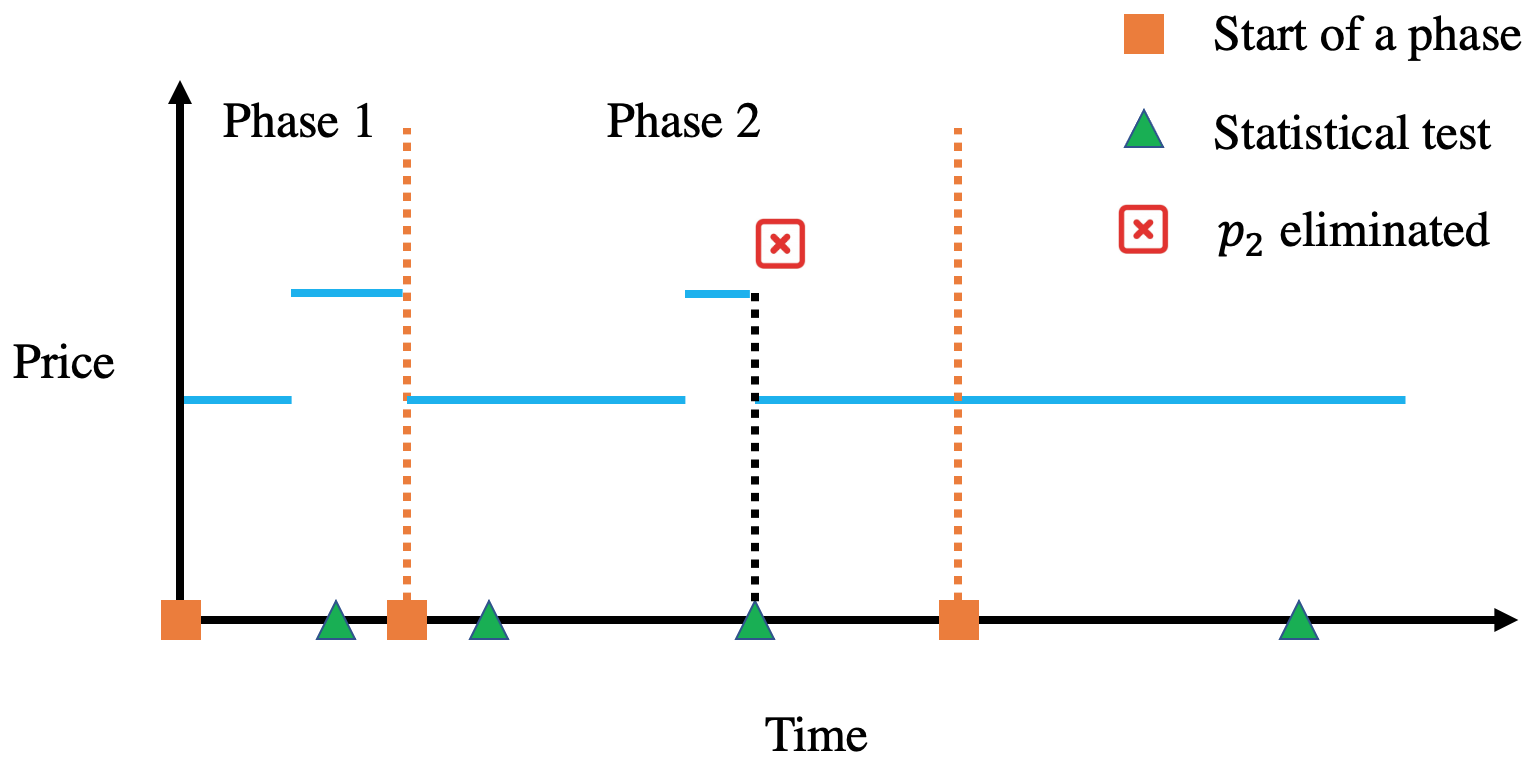}}
	\caption{Different phased exploration algorithms}
	\label{fig:gdp}
\end{figure} 

As such, one would prefer \cite{AO10} when $M<\sqrt{T}$ while \cite{GaoHRZ19} when $\sqrt{T}\leq M<T^{2/3}\,,$ and the expected regret of this mixture is
\begin{align*}
	 \begin{cases}
	    O(\sqrt{T})&\text{ when } M<\sqrt{T}\,;\\
	    O(\sqrt{T}\log(T))&\text{ when }  M=\sqrt{T}\,;\\
	    O(M\log\log(T))&\text{ when } \sqrt{T}<M< T^{2/3}\,.
	    \end{cases}
\end{align*}
It is thus interesting to see if one can acquire the same (or even improved) guarantee with an one-size-fit-all algorithm. In the above, \gdp~provides an affirmative answer, and it enjoies an improved regret bound when $M=\sqrt{T}\,.$ As pointed out in the sketched proof of \cref{prop:two_price_ub1}, this is critically made possible by following the exploration schedule of \cite{GaoHRZ19} while elimination schedule of \cite{AO10}. With these, the asynchronous design of \gdp~can simultaneously ensure that it only has $\Theta(\log\log(T))$ phases while the sub-optimal price is eliminated in time.

We refer to \cref{fig:gdp} for a conceptual comparison of the three synchronous versus asynchronous designs. There, we assume that $p_1$ is the optimal price. 

\end{remark}

\subsection{Phase Transitions and Discussions}\label{sec:two_price_phase}

In this section, we discuss an important implication of \cref{thm:two_price_lb} and \cref{thm:two_price_ub}. By comparing the results of these two theorems, one can easily verify that the regret upper bound $O((\sqrt{T}+\min\{M\log\log(T)\,,\ T^{2/3}\})$ achieved by \gdp, after ignoring the $O(\log\log(T))$ factor, is unimprovable within all non-anticipatory pricing strategies. Motivated by this observation, we define the optimal (worst case) regret $R^*(T,M)$ as
\begin{equation}
    R^*(T,M)=\inf_{\pi}\sup_{\lambda_1\,,\,\lambda_2\in[0\,,\,1]}  ~\regret(\pi).\label{eq:def_opt_regret1}
\end{equation}
Hence, $R^*(T,M)$ can be interpreted as the statistical complexity of the pricing problem under price protection guarantee when there are only two prices, in a sense that no admissible pricing strategy can perform uniformly better than this rate over all possible instances. We state a 
corollary.

\vspace{2mm}
\begin{corollary}\label{cor:opt_regret_two_price}
    The optimal regret defined in \eqref{eq:def_opt_regret1} for the two price case is
    \begin{equation*}
        R^*(T,M)=\tilde{\Theta}(\sqrt{T}+\min\{M\,,\, T^{2/3}\})\,.
    \end{equation*}
\end{corollary}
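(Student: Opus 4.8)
The plan is to read off both directions of the $\tilde\Theta$ bound directly from \cref{thm:two_price_lb} and \cref{thm:two_price_ub}, treating $R^*(T,M)$ as the minimax value $\inf_\pi\sup_{\lambda_1,\lambda_2}\regret(\pi)$ defined in \eqref{eq:def_opt_regret1}. No new instance construction or algorithmic analysis is needed; the content of the corollary is simply that the two theorems already pin down the minimax rate up to the doubly logarithmic slack between them.

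For the lower bound, I would fix an arbitrary non-anticipatory policy $\pi$ and invoke \cref{thm:two_price_lb}, which guarantees the existence of prices $p_1,p_2$ and demand distributions under which $\regret(\pi)=\Omega(\sqrt{T}+\min\{M,T^{2/3}\})$. Since this instance is one of the feasible choices of $(\lambda_1,\lambda_2)$ in the supremum, we obtain $\sup_{\lambda_1,\lambda_2\in[0,1]}\regret(\pi)=\Omega(\sqrt{T}+\min\{M,T^{2/3}\})$ for every $\pi$. Taking the infimum over all admissible $\pi$ preserves the bound, so $R^*(T,M)=\Omega(\sqrt{T}+\min\{M,T^{2/3}\})$. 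For the upper bound I would use \gdp~as a single witness policy: by \cref{thm:two_price_ub}, its expected regret on every instance is $O(\sqrt{T}+\min\{M\log\log(T),T^{2/3}\})$, so $\sup_{\lambda_1,\lambda_2}\regret(\gdp)$ is bounded by the same quantity, and since the infimum defining $R^*(T,M)$ ranges over all policies (including \gdp), we get $R^*(T,M)=O(\sqrt{T}+\min\{M\log\log(T),T^{2/3}\})$.

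The only remaining step, and the one place a small verification is required rather than a pure citation, is to check that the lower and upper expressions coincide up to logarithmic factors, so that both collapse into the single $\tilde\Theta(\sqrt{T}+\min\{M,T^{2/3}\})$. I would split into $M\ge T^{2/3}$ and $M\le T^{2/3}$. When $M\ge T^{2/3}$ both $\min$ terms equal $T^{2/3}$ and the match is exact. When $M\le T^{2/3}$ the lower-bound term is $M$ while the upper-bound term is $\min\{M\log\log(T),T^{2/3}\}\le M\log\log(T)$, which is $M$ up to a doubly logarithmic factor; in the sub-regime where $M\log\log(T)>T^{2/3}$ the upper term saturates at $T^{2/3}$, but there $T^{2/3}<M\log\log(T)$, so it is still $\tilde O(M)$. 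Hence in all cases the upper bound is $\tilde O(\sqrt{T}+\min\{M,T^{2/3}\})$ and the lower bound is $\Omega(\sqrt{T}+\min\{M,T^{2/3}\})$.

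Together these yield the claimed $\tilde\Theta(\sqrt{T}+\min\{M,T^{2/3}\})$. I do not anticipate any genuine obstacle here, since the heavy lifting already resides in the two theorems; the corollary is essentially bookkeeping with the asymptotic notation, with the only nontrivial care being the case split that absorbs the $\log\log(T)$ factor and reconciles the two differing $\min$ expressions.
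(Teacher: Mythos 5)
Your proposal is correct and follows exactly the paper's own route: the paper states \cref{cor:opt_regret_two_price} as an immediate consequence of \cref{thm:two_price_lb} (which lower-bounds $\sup_{\lambda_1,\lambda_2}\regret(\pi)$ for every non-anticipatory $\pi$) and \cref{thm:two_price_ub} (which exhibits \gdp~as the witness policy for the upper bound), with the $\log\log(T)$ discrepancy absorbed into the $\tilde{\Theta}(\cdot)$ notation. Your case split reconciling $\min\{M\log\log(T),T^{2/3}\}$ with $\min\{M,T^{2/3}\}$ is the same bookkeeping the paper performs implicitly in \cref{sec:two_price_phase}, so there is nothing missing.
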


\vspace{2mm}
From \cref{cor:opt_regret_two_price}, one can clearly see that the increasing patterns of the optimal regret rate are different when the price protection period belongs to different ranges. As the price protection period $M$ increases, the optimal regret first remains at the level of ${\Theta}(\sqrt{T})$ when $M<\sqrt{T}$, and then gradually increases to $\tilde{\Theta}(M)$ when $\sqrt{T}\leq M<T^{2/3}\,,$ and finally reaches the level of  $\tilde{\Theta}(T^{2/3})$ when $M\geq T^{2/3}\,.$ This is depicted in Figure \ref{fig:phase-transition}. One can see that there are three distinct ranges of $M$, \ie, $M<\sqrt{T}\,,$ $\sqrt{T}\leq M<T^{2/3}\,,$ and $M\geq T^{2/3}\,,$ referred to as three \textit{phases}, and the optimal regret shows different property in each phase. We refer to the transitions between the regret-increasing transitions in different phases as \textit{phase transitions}. 
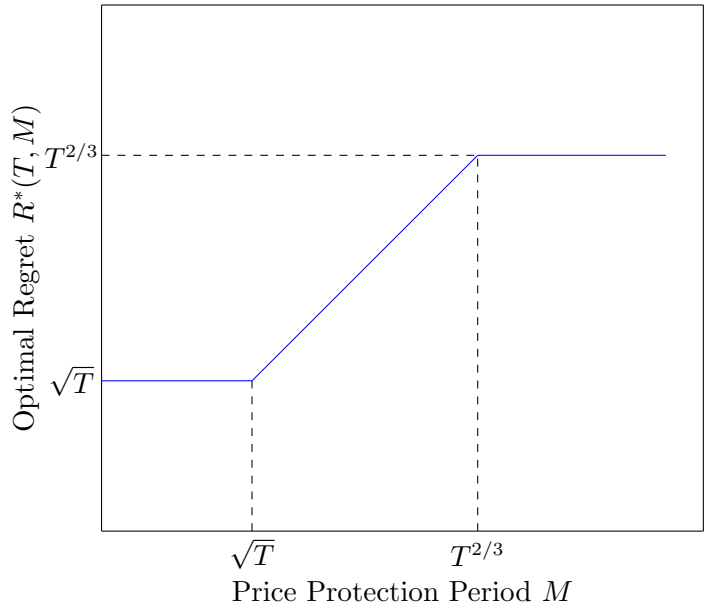
\begin{figure}[!ht]
    \centering
    \begin{tikzpicture}
    \draw (0,0)--(0,7);
    \draw (0,0)--(8,0);
    \draw (0,7)--(8,7);
    \draw (8,0)--(8,7);
    \node at (4,-0.8) {Price Protection Period $M$};
    \node [rotate=90] at (-1,3.5) {Optimal Regret $R^*(T,M)$};
    \node at (2,-0.3) {$\sqrt{T}$};
    \node at (5,-0.3) {$T^{2/3}$};
    \node at (-0.4,2) {$\sqrt{T}$};
    \node at (-0.4,5) {$T^{2/3}$};
    \draw[color=blue] (0,2)--(2,2);
    \draw[color=blue] (2,2)--(5,5);
    \draw[color=blue] (5,5)--(7.5,5);
    \draw[dashed] (2,0)--(2,2);
    \draw[dashed] (5,0)--(5,5);
    \draw[dashed] (0,5)--(5,5);
    \end{tikzpicture}
    \caption{Phase Transitions with Different Price Protection Period $M$}
    \label{fig:phase-transition}
\end{figure}

We also remark that, with some slight modifications to the proofs, all the results presented in Theorems \ref{thm:two_price_lb} and \ref{thm:two_price_ub} continue to 
hold in the asymptotic regime where the relationship between $M$ and $T$ are depicted by asymptotic notations rather than precise inequality signs, \ie letting $A\lesssim B$ be $A=O(B)\,$, we have: 
\begin{align*}
    R^*(T,M)=\begin{cases}
           \tilde{\Theta}(\sqrt{T})&\text{ when }M\lesssim\sqrt{T}\\
           \tilde{\Theta}(M)&\text{ when }\sqrt{T}\lesssim M\lesssim T^{2/3}\\
           \Theta(T^{2/3})&\text{ when }T^{2/3}\lesssim M\,,
        \end{cases}
\end{align*}

\vspace{1mm}
\noindent
Interestingly, the above findings indicate that when $M\lesssim\sqrt{T}\,,$ the presence of price protection guarantee has virtually no significant impact to the statistical complexity of data-driven dynamic pricing problem; while when $\sqrt{T}\lesssim M\,,$ the statistical complexity of the problem starts to increase due to the price protection guarantee. Critically, the complexity of the problem will never exceed $\Theta(T^{2/3})$ even if $M$ becomes very large. As pointed out in \cref{sec:intro}, different companies may set different length of price protection period, our results thus provide a thorough and complete characterization of how the presence of price protection guarantee can impact the statistical complexity of the dynamic pricing problem.

\section{General Learning and Earning under Price Protection} \label{sec:general}
In this section, we extend our findings to the general case where there could be $K~(\geq 2)$ prices. We first establish the lower bound for this general case in \cref{sec:lb}. Then, in \cref{sec:ub}, we present a new algorithm \gdppp, which achieves this lower bound up to logarithmic factors. Finally, in \cref{sec:add_disc}, we provide further discussions of the results.

\subsection{Lower Bound}\label{sec:lb}
In this section, we present a lower bound for the general case. The lower bound shares a similar spirit as that in \cref{thm:two_price_lb}. 

\vspace{2mm}
\begin{theorem}\label{thm:k_price_lb}
	For any $T, \, K, \, M$ satisfying $2\leq K<M\leq T$ and any non-anticipatory policy $\pi$, there exists $P=\{p_1,p_2,\dots,p_K\}\subset[0,1]$ and the corresponding distributions of demand $D_t(p_1),$ $D_t(p_2),\ldots,D_t(p_K)$ such that the regret of $\pi$ is $\Omega(\sqrt{KT}+\min\{M\,,\,K^{1/3}T^{2/3}\})\,.$
\end{theorem}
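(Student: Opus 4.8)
The complete proof is provided in the Appendix. The plan is to establish the two summands $\Omega(\sqrt{KT})$ and $\Omega(\min\{M,\,K^{1/3}T^{2/3}\})$ separately and then combine them. Since $K^{1/3}T^{2/3}\ge\sqrt{KT}$ whenever $K\le T$, a short case check shows that in every regime of $(K,M,T)$ one of the two summands is within a factor of two of their sum; hence it suffices to exhibit, for each regime, a single family of instances on which every non-anticipatory $\pi$ incurs regret at least a constant times the larger summand. Concretely, when $M<\sqrt{KT}$ the first summand dominates and I would invoke the standard $K$-armed bandit lower bound (\eg, Theorem 5.1 of \citealt{ABFS02}): because the oracle pays no refund (\cref{remark:oracle}) while any policy's realized revenue is only reduced by the nonnegative refund term, the classical $\Omega(\sqrt{KT})$ bound transfers verbatim to our setting. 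When $\sqrt{KT}\le M<K^{1/3}T^{2/3}$ I target $\Omega(M)$, and when $M\ge K^{1/3}T^{2/3}$ I target $\Omega(K^{1/3}T^{2/3})$; both of these use the price-protection construction below.

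For the price-protection summand I would generalize Cases 2 and 3 of \cref{thm:two_price_lb}. Keep one low price $p_1$ with known, deterministic demand (reward $\lambda_0$), and cluster the remaining $K-1$ prices $p_2,\dots,p_K$ just below $1$, so that switching among the high prices is essentially free but any drop from a high price to $p_1$ triggers a per-customer refund $c=\Theta(1)$ for up to $M$ customers. In the null instance every high price has reward $\lambda_0-\Delta$ (so $p_1$ is optimal), while in alternative instance $j$ (for $j\in\{2,\dots,K\}$) only $p_j$ has reward $\lambda_0+\Delta$ (so $p_j$ is optimal). I would choose the gap $\Delta$ and the length $\ell$ of a terminal segment $[T-\ell,\,T]$ so that the exploration budget needed to identify the best high price among the $K-1$ decoys, namely $\Theta(K/\Delta^2)$, matches the target rate: take $\Delta=\Theta(\sqrt{K/M})$ and $\ell=\Theta(M^{3/2}/\sqrt K)$ when $\sqrt{KT}\le M<K^{1/3}T^{2/3}$ (one checks $\ell\le T$ exactly when $M\le K^{1/3}T^{2/3}$), and $\Delta=\Theta((K/T)^{1/3})$ with $\ell=\Theta(T)$ when $M\ge K^{1/3}T^{2/3}$. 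In both cases the budget is at most $M$ and $\ell\Delta$ equals the target rate up to constants.

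The heart of the argument is a dichotomy based on the realized number $H$ of high-price selections before time $T-\ell$. If $H$ falls below the budget, then on average each decoy is sampled $o(1/\Delta^2)$ times, so by a change-of-measure (Pinsker/\kl) argument averaged over $j\in\{2,\dots,K\}$ the policy cannot distinguish the null from a constant fraction of the alternatives, and therefore on some alternative $j$ it fails to play $p_j$ during the terminal segment, incurring $\Omega(\ell\Delta)$ regret. If instead $H$ meets the budget, I analyze the null instance and bound the refund pathwise: grouping the high-price steps into maximal runs (``high phases''), a phase of length $L$ followed within $M$ steps by $p_1$ refunds its last $\min\{L,M\}$ customers. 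Since a phase running to the horizon of length $\ge\ell$ already forces $\Omega(\ell\Delta)$ suboptimality, the remaining high phases are all refunded and contribute at least $c\sum_i\min\{L_i,M\}\ge c\,\min\{H,M\}$; because $H$ meets the budget and the budget is at most $M$, this is $\Omega(M)$ in the first regime and $\Omega(K^{1/3}T^{2/3})$ in the second. Either way the regret matches the target rate.

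The main obstacle is reconciling the two branches, which is precisely the subtlety flagged in \cref{remark:two_price_lb}: the refund lower bound is inherently \emph{pathwise} and needs the \emph{actual} count $H$ to be large, whereas change-of-measure naturally controls only $\E[H]$. As the example in that remark shows, a policy with the right expected count but a heavy-tailed realization can drive the refund down to $o(M)$. I would therefore condition on the exact value of $H$ rather than its expectation, splitting the probability mass into the under- and over-exploration events and transporting each to the appropriate instance; making the $(K-1)$-way averaging quantitative enough to produce the $K^{1/3}$ dependence in the second term, while simultaneously verifying that the constructed prices lie in $[0,1]$ and the feasibility constraints $\ell\le T$ and $\Delta\le 1$ hold throughout $2\le K<M\le T$, is the delicate part of the proof.
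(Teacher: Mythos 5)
Your proposal is correct and follows essentially the same route as the paper's proof: the same three-regime split, the same instance family (one deterministic, known low price plus $K-1$ Bernoulli high prices with gaps $\Theta(\sqrt{K/M})$ and $\Theta((K/T)^{1/3})$ and terminal segments $\Theta(M^{3/2}/\sqrt{K})$ and $\Theta(T)$), the same dichotomy on the \emph{realized} (not expected) count of high-price selections over a truncated sample space, the same run-based refund accounting, and the same least-explored-arm averaging combined with a KL chain rule and Bretagnolle--Huber. The one imprecise step is attributing the under-exploration regret to the terminal segment (conditioning on few high-price selections before $T-\ell$ does not prevent the policy from playing $p_j$ afterward); the paper instead charges the $\Omega\left((T-\ell-\text{budget})\Delta\right)$ regret from playing $p_1$ \emph{before} $T-\ell$ on instance $j$, a fix that is immediate under your own parameter choices and leaves the rest of your argument intact.
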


\vspace{2mm}
The complete proof of this result follows the same spirit as that of \cref{thm:two_price_lb}, and it is provided in Section \ref{sec:thm:k_price_lb} of the Appendix.

\subsection{Generalized Algorithm for Learning and Earning under Price Protection (\gdppp)}\label{sec:ub}
In this section, we propose \gdppp, an algorithm for the general problem of learning and earning in the presence of price protection guarantee. Below, we first present a generic template of \gdppp. The precise algorithm would be instantiated by different sub-algorithm(s) $\cA\,,$ depending on the relationship between $M$, $K$ and $T$. 

Similar to our algorithm for the two price setting in \cref{sec:two_price_ub}, all sub-algorithms $\cA$ considered here would enjoy a phased exploration nature, which would dynamically maintain the set of plausible prices. During each phase, $\cA$ would uniformly explore each plausible prices (starting form the lowest price to the highest, to avoid excessive refund). At the end of a phase, prices that are deemed to be sub-optimal with high probability (via a statistical test) would be eliminated. This procedure iterates until the end of the time horizon. 

Different sub-algorithms $\cA$ might have different price exploration and price elimination schedules. Specifically, each sub-algorithm $\cA$ would choose a sequence of time steps $t_0~(=0)\,,\ t_1, \ldots, t_{B_{\cA}}~(=T)\,,$ where phase $b~(=1,\ldots,B_{\cA})$ would begin at time step $t_{b-1}+1\,,$ and ends at time step $t_b\,.$ The set of plausible prices at the beginning of each phase $b$ is denoted as $P_b$ with $P_1=P$, representing that every price is plausible at the beginning. During phase $b\,,$ all the prices in $P_b$ would be selected for equal number of times in the ascending value order. At the end of phase $b\,,$ a statistical test $\texttt{Test}_{\cA\,,\, b}\,:P\to P$ that depends on the algorithm and all available historical information (the dependence on which is omitted) will be performed. All the prices that are sub-optimal with high probability would be eliminated, and the remaining would constitute $P_{b+1}\,.$ A formal description of this framework is presented in \cref{alg:gdppp}.

\vspace{2mm}
\begin{remark}[\textbf{Exploration Schedule}]
We point out that in some cases, $t_b$ may be chosen adaptively, \ie, depending on $P_b$ (see, \eg, \cite{AO10}). In this case, \gdppp~would not pre-specify the sequence $t_b$'s. Instead, it would determine this on the fly. As such, in \cref{alg:gdppp}, $t_b$'s are not defined in the initialization step, but are computed at the beginning of each phase.
\end{remark}
\begin{algorithm}[htbp]
\SingleSpacedXI
\caption{Generalized Algorithm for Learning and Learning under Price Protection (\gdppp)} \label{alg:gdppp}
\begin{algorithmic}[]
    \State \textbf{Input:} Sub-algorithm $\cA\,,$ price set $P\,,$ time horizon $T\,,$ price protection period $M\,.$ 
    \State \textbf{Initialization:} $P_1\leftarrow P$
    \For {Phase $b=1\,,\ \ldots\,,\ B_{\cA}$}
    \State Determine $t_b,$ the last time step of phase $b$
    \State Select each price in $P_b$ for equally many times in ascending value order during time steps $t_{b-1}+1\,,\ \ldots\,, \ t_b\,.$ Provide potential refund to the customers simultaneously
    \State Compute the set of plausible price for the next phase: $P_{b+1}\leftarrow \texttt{Test}_{\cA\,,\,b}(P_b)$
    \EndFor
\end{algorithmic}
\end{algorithm}

To see an intuition of our design, we present a result that could convert the regret of $\cA$ when no price protection guarantee is offered to the regret of $\cA$ when there is price protection guarantee. Defining $\regret'(\cA)= T\max_{k\in[K]}\lambda_k-\sum_{t=1}^T\lambda_{i_t}$ as the expected regret of $\cA$ when no price protection guarantee is offered and $p_{i_t}$ is selected according to $\cA\,,$ we have the following proposition.

\vspace{2mm}
\begin{proposition}\label{prop:conversion}
    Suppose the price protection period is $M\,,$ for any phased exploration algorithm $\cA$ that follows the template in \cref{alg:gdppp}, we have
    \begin{align*}
        \regret(\cA)= \regret'(\cA) + O(MB_{\cA})\,.
    \end{align*}
\end{proposition}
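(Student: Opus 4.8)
The plan is to split the realized revenue into a ``no-protection'' reward stream and a refund stream, show that the first stream has the same expectation as the reward stream underlying $\regret'(\cA)$, and then bound the total expected refund by $O(MB_{\cA})$ using the ascending-order exploration that the template in \cref{alg:gdppp} enforces within each phase.

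First I would recall the instantaneous decomposition implied by \eqref{eq:inst_rev}: summing over $t$ gives $\rev(p_{i_1},\ldots,p_{i_T}) = \sum_{t=1}^T p_{i_t}D_t(p_{i_t}) - \refund(p_{i_1},\ldots,p_{i_T})$. Taking expectations, the task reduces to controlling $\E[\sum_t p_{i_t}D_t(p_{i_t})]$ and $\E[\refund]$ separately. For the reward stream I would invoke the non-anticipatory property of $\cA$: since $i_t$ is measurable with respect to the history $H_t$ while the fresh demand $D_t(p_{i_t})$ is drawn independently of $H_t$, conditioning gives $\E[p_{i_t}D_t(p_{i_t})\mid H_t] = p_{i_t}\mu_{i_t} = \lambda_{i_t}$. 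Applying the tower property and summing yields $\E[\sum_t p_{i_t}D_t(p_{i_t})] = \E[\sum_t \lambda_{i_t}]$. Subtracting from $T\max_k\lambda_k$ then gives exactly $\regret(\cA) = \regret'(\cA) + \E[\refund]$, so the entire proposition reduces to establishing the bound $\E[\refund]=O(MB_{\cA})$.

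The crux is bounding the refund using the structure of the template. The key observation is that within any single phase \gdppp~selects prices in ascending value order, so the realized price sequence is non-decreasing inside each phase. Consequently a customer arriving at time $t$ can receive a nonzero refund only if the look-ahead window $\{p_{i_{t+1}},\ldots,p_{i_{t+M}}\}$ contains a price strictly below $p_{i_t}$, which can happen only when that window straddles one of the $B_{\cA}-1$ phase boundaries (the only locations where a price decrease can occur). For a fixed boundary at the end of a phase at time $\tau$, the customers whose window can contain it are exactly those with $\tau - M + 1 \le t \le \tau$, i.e.\ at most $M$ of them. Each such customer's refund is at most $p_{i_t}D_t(p_{i_t}) \le 1$ since prices and demands lie in $[0,1]$. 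Summing over the at most $B_{\cA}-1$ boundaries gives $\refund \le (B_{\cA}-1)M$ deterministically, hence $\E[\refund] = O(MB_{\cA})$, which completes the argument.

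I expect the main obstacle to be making this combinatorial localization airtight: one must argue carefully that the non-decreasing price pattern inside a phase leaves phase boundaries as the sole origin of any price drop, and that each boundary can ``contaminate'' at most $M$ preceding customers, so the refunds cannot accumulate beyond $O(MB_{\cA})$ even though an individual customer's window may reach into the next phase. The reward-stream identity is routine given non-anticipation; it is this deterministic bound on refunds, enabled entirely by the ascending-exploration design of the template, where the real work lies.
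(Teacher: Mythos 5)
Your proposal is correct and follows essentially the same argument as the paper, whose proof is a one-liner: since prices within each phase are selected in ascending order, refunds can only be triggered at the at most $B_{\cA}-1$ phase switches, and each switch incurs at most $O(M)$ refund. Your write-up simply makes explicit the two steps the paper leaves implicit — the identity $\regret(\cA)=\regret'(\cA)+\E[\refund]$ via non-anticipation (valid because the template's tests depend only on observed rewards, not refunds) and the union bound over boundaries, each contaminating at most $M$ customers with refund at most $1$ each.
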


\vspace{2mm}
The proof of this proposition follows immediately from the design of $\cA$, i.e., 
since refund could only happen when we switch from phase $b-1$ and enter the new phase $b$, and since there are at most $B_{\cA}-1$ switches with each such switch incurring at most $O(M)$ refund.

In what follows, we instantiate \gdppp~with different sub-algorithms $\cA$ for different value of $M\,,$ and establish the corresponding upper bound for expected regret.

\vspace{2mm}
\noindent \underline{\textbf{Case 1. $M\leq\sqrt{KT}$}} 

\noindent
In this case, \gdppp~adopts the phased exploration version of the UCB algorithm discussed in \cite{AO10}. We set the length of phase $b$ as $$n'_b=\left(\left\lceil 2^{2b+1}\log(2^{-2b}T)\right\rceil-\left\lceil2^{2b-1}\log(2^{-2b+2}T)\right\rceil\right)|P_b|\,,$$ \ie, all remaining prices are selected $\lceil 2^{2b+1}\log(2^{-2b}T)\rceil$ times at the end of phase $b\,.$ The time step $t_b$ is thus defined as $$t_b=\min\left\{\sum_{m=1}^b n'_m\,,\,T\right\}\,,$$ and with some abuse of notations, the statistical test at the end of each phase $b\,,$ which also outputs the remaining price for next phase, is defined as
\begin{equation*}
    P_{b+1}=\texttt{Test}_{\cA\,,\,b}(P_b)=\left\{i\in P_b\,:\,\hat{\lambda}_i(t_b+1)+\sqrt{\dfrac{\log(2^{-2b}T)}{2N_i(t_b+1)}}\geq\max_{j\in P_b}\hat{\lambda}_j(t_b+1)-\sqrt{\dfrac{\log(2^{-2b}T)}{2N_j(t_b+1)}}\right\}.
\end{equation*}

\vspace{2mm}
\noindent \underline{\textbf{Case 2. $\sqrt{KT}< M<K^{1/3}T^{2/3}$}} 

\noindent 
In this case, \gdppp~adopts the batched multi-armed bandit algorithm discussed in \cite{GaoHRZ19}. We set $t_b=\min\{\lceil (\sqrt{eT})^{2-2^{-b}}\rceil\,,\,T\}$, and the statistical test at the end of each phase $b$ is defined as
\begin{equation*}
    P_{b+1}=\texttt{Test}_{\cA\,,\,b}(P_b)=\left\{i\in P\,:\,\hat{\lambda}_i(t_b+1)+\sqrt{\dfrac{\log (KT)}{48N_i(t_b+1)}}\geq\max_{j\in P_b}\hat{\lambda}_j(t_b+1)-\sqrt{\dfrac{\log (KT)}{48N_j(t_b+1)}} \right\}.
\end{equation*}


\vspace{2mm}
\noindent \underline{\textbf{Case 3. $M\geq K^{1/3}T^{2/3}$}} 

\noindent
In this case, {\gdppp~adopts the explore-then-commit algorithm discussed in section 6 of \cite{LS18}.} We set $t_1=K\lceil K^{-2/3}T^{2/3}\rceil\,,$ $t_2=T\,,$ and
\begin{equation*}
    \texttt{Test}_{\cA\,,\,1}=\left\{\argmax_{i\in [K]}\hat{\lambda}_i(t_1+1)\right\}.
\end{equation*}
In other words, \gdppp~first selects each price $\lceil K^{-2/3}T^{2/3}\rceil$ times. Then, it chooses the price with the highest empirical mean reward throughout the rest of the time periods.

\vspace{2mm}
The following theorem gives us the regret upper bound of \gdppp.

\vspace{2mm}
\begin{theorem}\label{thm:k_price_ub2}
    For any $T,\,K,\, M,$ the upper bound of \gdppp~is $\tilde{O}(\sqrt{KT}+\min\{M\,,\,K^{1/3}T^{2/3}\})\,.$
\end{theorem}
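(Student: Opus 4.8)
The plan is to prove the bound case by case, mirroring the three instantiations of \gdppp, and in each case to control the no-protection regret $\regret'(\cA)$ and the refund separately. The unifying tool is \cref{prop:conversion}, which reduces the task to (i) importing the known regret guarantee of the sub-algorithm $\cA$ in the classical (no price protection) setting, and (ii) counting the number of phases $B_{\cA}$ so that the refund term $O(MB_{\cA})$ can be controlled.

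For Case 1 ($M\leq\sqrt{KT}$), I would invoke the phased-UCB guarantee of \cite{AO10}, which gives $\regret'(\cA)=\tilde{O}(\sqrt{KT})$. Since the phase lengths grow geometrically, the number of phases is $B_{\cA}=O(\log T)$, so by \cref{prop:conversion} the refund is $O(M\log T)=\tilde{O}(\sqrt{KT})$ using $M\leq\sqrt{KT}$; summing yields $\tilde{O}(\sqrt{KT})$. For Case 2 ($\sqrt{KT}<M<K^{1/3}T^{2/3}$), I would invoke the batched-bandit guarantee of \cite{GaoHRZ19}, again $\regret'(\cA)=\tilde{O}(\sqrt{KT})$, but now the doubly-exponential schedule $t_b=\min\{\lceil(\sqrt{eT})^{2-2^{-b}}\rceil,T\}$ gives only $B_{\cA}=O(\log\log T)$ phases, so the refund is $O(M\log\log T)=\tilde{O}(M)$; combining gives $\tilde{O}(\sqrt{KT}+M)$, which matches $\tilde{O}(\sqrt{KT}+\min\{M,K^{1/3}T^{2/3}\})$ on this range of $M$. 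These two cases are essentially bookkeeping once the component bounds are in place.

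Case 3 ($M\geq K^{1/3}T^{2/3}$) is where the main obstacle lies, and it requires a refinement of \cref{prop:conversion}. Here $\cA$ is explore-then-commit with a single phase transition ($B_{\cA}=2$), so the exploration/exploitation analysis of \cite{LS18} gives $\regret'(\cA)=\tilde{O}(K^{1/3}T^{2/3})$. However, the generic refund bound $O(MB_{\cA})=O(M)$ is useless here, since $M$ can be as large as $T$. The key observation is that a refund can only be triggered by the single downward price change at the commit time $t_1=\Theta(K^{1/3}T^{2/3})$: because the exploration phase visits prices in ascending order, every customer's running minimum within the exploration window equals her own purchase price, so no refund accrues during exploration. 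The commit may lower the price, but it only affects customers who arrived in the window $[t_1-M+1,\,t_1]$, and there are at most $\min\{M,\,t_1\}=t_1$ such customers (since $M\geq t_1$ in this regime). As each per-customer refund is at most one unit, the total refund is $O(t_1)=O(K^{1/3}T^{2/3})$, and the committed price never changes afterward so no further refund occurs.

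Putting the three cases together, the total regret is $\tilde{O}(\sqrt{KT}+\min\{M,\,K^{1/3}T^{2/3}\})$ in every regime, which is the claim. I expect the delicate step to be the Case 3 refund bound: one must argue that ascending-order exploration confines all refunds to the lone commit transition and that the affected population is capped by $t_1$ rather than $M$, thereby replacing the crude $O(M)$ factor of \cref{prop:conversion}. The remaining effort is routine: verifying the phase counts $B_{\cA}$ and importing the classical regret bounds for each $\cA$ with the correct logarithmic dependence.
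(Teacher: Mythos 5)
Your proposal is correct and follows essentially the same route as the paper: Cases 1 and 2 are handled by importing the guarantees of \cite{AO10} and \cite{GaoHRZ19} together with the phase counts $B_{\cA}=O(\log T)$ and $O(\log\log T)$ fed into \cref{prop:conversion}, and in Case 3 you replace the useless $O(MB_{\cA})$ term with a direct refund bound of $O(Kn)=O(K^{1/3}T^{2/3})$ via ascending-order exploration and the fact that the price never changes after the commit, which is exactly the paper's argument. The only cosmetic difference is that the paper re-derives the explore-then-commit regret from scratch using a sub-Gaussian maximal inequality (obtaining the sharper $\sqrt{\log K}$ factor), whereas you cite the standard analysis of \cite{LS18}; both suffice for the stated $\tilde{O}$ bound.
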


\vspace{2mm}
This theorem makes use of the results from \cite{AO10,GaoHRZ19} and \cref{prop:conversion}, and its complete proof is provided in \cref{sec:thm:k_price_ub2}.

\vspace{2mm}
\begin{remark}[\textbf{On the Sub-Algorithms for Case 1 and Case 2}]
Our decision to apply different algorithms for Case 1 and Case 2 is purely for the purpose of optimizing the logarithmic factors in the regret upper bound (as previously discussed in \cref{remark:async}). One can definitely combine these two cases by applying the same algorithm (\eg, either using the phased exploration version of the UCB algorithm discussed in \cite{AO10} or the batched multi-armed bandit algorithm discussed in \cite{GaoHRZ19}), and the regret upper bound stated in \cref{thm:k_price_ub2} still holds.
\end{remark}

\subsection{Additional Discussions}\label{sec:add_disc}
Combining \cref{thm:k_price_lb} and \cref{thm:k_price_ub2}, one can verify that the regret upper bound $\tilde{O}(\sqrt{KT}+\min\{M\,,\, K^{1/3}T^{2/3}\})$ achieved by \gdppp, after ignoring the logarithmic factors, is unimprovable within all non-anticipatory pricing strategies. Similar to \cref{sec:two_price_phase}, we define the optimal (worst case) regret $R^*(T\,,\,M\,,\,K)$ as
\begin{align}\label{eq:def_opt_regret2}
    R^*(T,\,M,\,K)=\inf_{\pi}\sup_{\{\lambda_k\}_{k=1}^K\subset [0,1]}~\texttt{Regret}(\pi)\,.
\end{align}
Here, $R^*(T,M)$ measures the statistical complexity of the general learning and earning problem under price protection. We also have the following corollary

\vspace{2mm}
\begin{corollary}\label{cor:opt_regret}
    The optimal regret defined in \eqref{eq:def_opt_regret2} is
    \begin{equation*}
        R^*(T,\,M,\,K)=\tilde{\Theta}(\sqrt{KT}+\min\{M\,,\, K^{1/3}T^{2/3}\})\,.
    \end{equation*}
\end{corollary}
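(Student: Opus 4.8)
The plan is to obtain \cref{cor:opt_regret} as an immediate consequence of the matching bounds in \cref{thm:k_price_lb} and \cref{thm:k_price_ub2}, after unwinding the $\inf_\pi\sup_{\{\lambda_k\}}$ structure in the definition \eqref{eq:def_opt_regret2}. First I would establish the upper bound: since \gdppp~is a single non-anticipatory policy whose regret, by \cref{thm:k_price_ub2}, is $\tilde{O}(\sqrt{KT}+\min\{M\,,\,K^{1/3}T^{2/3}\})$ \emph{uniformly} over every reward vector $\{\lambda_k\}_{k=1}^K\subset[0,1]$, taking the supremum over instances and then choosing $\pi=\gdppp$ in the infimum immediately yields $R^*(T,M,K)=\tilde{O}(\sqrt{KT}+\min\{M\,,\,K^{1/3}T^{2/3}\})$.

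For the lower bound I would argue in two regimes. When $K<M$, \cref{thm:k_price_lb} directly provides, for every policy $\pi$, a problem instance forcing $\regret(\pi)=\Omega(\sqrt{KT}+\min\{M\,,\,K^{1/3}T^{2/3}\})$; hence $\sup_{\{\lambda_k\}}\regret(\pi)$ is at least this quantity for every $\pi$, and taking the infimum over $\pi$ transfers the bound to $R^*$. The remaining regime $K\geq M$ is where \cref{thm:k_price_lb} does not apply as stated, so it is the step requiring the most care. Here I would observe that $\min\{M\,,\,K^{1/3}T^{2/3}\}\leq M\leq K\leq\sqrt{KT}$ (using $K\leq T$), so the target rate collapses to $\Theta(\sqrt{KT})$; because the refund term in \eqref{eq:inst_rev} is nonnegative, the price-protection regret of any policy dominates its regret in the classic $K$-armed bandit without protection, and the standard minimax lower bound (\eg, Theorem 5.1 of \citealt{ABFS02} or Theorem 15.2 of \citealt{LS18}) then gives $R^*(T,M,K)=\Omega(\sqrt{KT})$ in this regime as well.

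Combining the two directions, the upper and lower bounds coincide up to logarithmic factors across all parameter ranges, which is precisely the claimed rate. The main obstacle is not any hard estimate but rather the bookkeeping just described: reconciling the hypothesis $K<M$ of \cref{thm:k_price_lb} with the full parameter range in \eqref{eq:def_opt_regret2}, and verifying that in the uncovered regime $K\geq M$ the $\min$ term is absorbed into $\sqrt{KT}$ so that the classical bandit lower bound suffices. I would also briefly confirm that the $\inf$--$\sup$ formalism is respected in both directions, \ie~that ``there exists a hard instance for every policy'' yields a lower bound on $\inf_\pi\sup$, while ``one policy is uniformly good'' yields an upper bound.
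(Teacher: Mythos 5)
Your proposal is correct and follows essentially the same route as the paper, which obtains \cref{cor:opt_regret} immediately by combining the lower bound of \cref{thm:k_price_lb} with the uniform upper bound of \cref{thm:k_price_ub2} achieved by the single policy \gdppp. Your extra treatment of the regime $K\geq M$ — where \cref{thm:k_price_lb} does not apply as stated, but $\min\{M,\,K^{1/3}T^{2/3}\}\leq K\leq\sqrt{KT}$ so the rate collapses to $\Theta(\sqrt{KT})$ and the classical bandit minimax lower bound suffices (since refunds are nonnegative, the price-protection regret of any policy dominates its classic regret) — carefully fills in a step the paper leaves implicit, but it does not change the underlying argument.
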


\vspace{2mm}
From \cref{cor:opt_regret}, one can see that the optimal regret rate also exhibits three phases. As the price protection period $M$ increases, the optimal regret first remains at the level of $\tilde{\Theta}(\sqrt{KT})$ when $M<\sqrt{KT}$, then gradually increases according to $\tilde{\Theta}(M)$ when $\sqrt{KT}\leq M<K^{1/3}T^{2/3}\,,$ and finally reaches the level of  $\tilde{\Theta}(K^{1/3}T^{2/3})$ when $M\geq K^{1/3}T^{2/3}\,.$ All these results could also be extended to the asymptotic regimes (in terms of $M$ and $T$).

\vspace{2mm}
\noindent\textbf{Further Comparisons with Bandits with Limited Adaptivity:} With all the above discussions, it is clear that our setting is related to the setting of bandits with limited adaptivity (\eg, \citealt{GaoHRZ19,LeviX21}). In what follows, we provide a detailed comparisons between the two settings in terms of modeling, phase transitions of the optimal regret, and algorithm design.
\vspace{1mm}
\begin{itemize}
    \item \textbf{Modeling:} From the modeling perspective, bandits with limited adaptivity imposes a hard constraint on the amount of adaptivity (\eg, number of policy updates or action changes) for the decision-maker, and she is not allowed to violate this constraint. Different than theirs, a price drop in our setting would result in a $M$-dependent cost, which means price protection is closer to a soft constraint. In other words, the seller is allowed to change the price arbitrarily, but with the cost of a potential refund. We also emphasize that, in our setting, only price decrease may incur refund and price increase would not. In contrast, in bandits with limited adaptivity, any change of action or policy would contribute to the adaptivity constraint.
    
    \vspace{1mm}
    \item \textbf{Phase Transitions of Optimal Regret:} The above difference turns out to significantly change the landscape of the problem as we have already shown in the regret upper and lower bounds. In particular, our setting only exhibits three phases in the optimal regret as $M$ grows. In contrast, in bandits with limited adaptivity, more phases could be possible. For instance, in \cite{LeviX21}, phase transition takes place whenever the amount of adaptivity (referred to as switching budget in \cite{LeviX21}) is increased by $K-1$.
    
    \vspace{1mm}
    \item \textbf{Algorithm Design:} Last but not the least, our setting also requires a more proactive algorithmic framework, which means that the algorithms have to take different values of $M$ into account. For instance, \gdp~and \gdppp~would proactively decrease the number of phases as $M$ increases. When $M$ is not too large, they are similar to the classic phased exploration algorithm; when $M$ grows larger, they would eventually degenerates to a explore-then-commit algorithm. In contrast, algorithms in \cite{LeviX21,GaoHRZ19} can usually treat the adaptivity constraint as an input parameter and then apply a one-size-fit-all algorithm to solve the problem. 
\end{itemize}

\section{Numerical Experiments}\label{sec:numerical}
In this section, we conduct extensive numerical experiments to further evaluate the performance of the proposed algorithms.
\subsection{Numerical Experiments for Two Prices: \gdp~ vs. Modified Bandits Algorithms}\label{sec:numerical_two_price}
We first consider the two price setting. We compare the performance of \gdp~with the modified versions of UCB and Thompson Sampling. In \cref{sec:ucb}, we have demonstrated that both UCB and Thompson Sampling suffer linear regrets due to ignoring the impact of price protection guarantee. One natural idea is to modify these algorithms as follows: in each time step, the seller selects the price that maximizes her (estimated) instantaneous revenue (computed based on \eqref{eq:inst_rev} and historical data) plus information gain. Based on this idea, we present the following two heuristics: 

\vspace{2mm}
\noindent\textbf{UCB with price protection (UCB-PP): } In each time step, the seller selects the price that maximizes $\hat{\lambda}_k(t)-\texttt{Refund}_t(p_k|p_{t-1},\dots,p_{t-M})+\sqrt{\log(T)/N_k(t)}$, \ie,
\begin{equation*}
    i_t=\argmax_{k\in[K]}\hat{\lambda}_k(t)-\texttt{Refund}_t(p_k|p_{t-1},\dots,p_{t-M})+\sqrt{\dfrac{\log(T)}{N_k(t)}}.
\end{equation*}

\vspace{2mm}
\noindent\textbf{Thompson Sampling with price protection (TS-PP):} It maintains a posterior distribution over the reward parameters $\lambda_k$'s. At each time step, it first samples $\theta_k$ from the posterior distribution, and then adjusted them to $\tilde{\theta}_k(t)$, which can be viewed as the expected instantaneous revenue of selecting price $k$ at time step $t$. Finally, the algorithm selects the price that maximizes $\tilde{\theta}_k(t)$. The details of TS-PP is presented in Algorithm \ref{alg:tspp}.

\begin{algorithm}[htbp]
\SingleSpacedXI
\caption{Thompson Sampling with Price Protection (TS-PP) Algorithm} \label{alg:tspp}
\begin{algorithmic}[]
    \State \textbf{Input:} Price set $P\,,$ time horizon $T\,,$ price protection period $M\,.$ 
    \State \textbf{Initialization:} $S_k\leftarrow 0\,,\ F_k\leftarrow 0$ for all $k\in[K]$.
    \For {$t=1\,,\ \ldots\,,\ T$}
    \State For each $k\in [K]\,,$ sample $\theta_k(t)$ from the $\text{Beta}(S_k+1,F_k+1)$ distribution.
    \State Compute
    \begin{equation*}
        \tilde{\theta}_k(t)\leftarrow \theta_k(t)-\texttt{Refund}_t(p_k|p_{i_{t-1}},\dots,p_{i_{t-M}}).
    \end{equation*}
    \State Select price $i_t=\argmax \tilde{\theta}_k(t)$, and observe demand $D_t$.
    \State Perform a Bernoulli trial with success probability $p_{i_t}D_t$ and observe the output $r_t$.
    \State If $r_t=1$, then $S_{i_t}\leftarrow S_{i_t}+1$, else $F_{i_t}\leftarrow F_{i_t}+1$.
    \EndFor
\end{algorithmic}
\end{algorithm}

\noindent\textbf{Setup:} We test the algorithms on the following instance: the prices are set to be $p_1=1/3\,,$ $p_2=1\,,$ $D_t(p_1)\equiv 1\,,$ and $D_t(p_2)\sim\text{Bernoulli}(1/6)$. In this instance, the optimal price is $p_1$. $T$ varies from $1000$ to $20000$ with a step size of $1000$. For each $T$, we average the results over $10^4$ independent iterations. We consider both of the small $M$ and large $M$ regimes. For the small $M$ case, we choose $M=\lceil\sqrt{T}\rceil$ and, for large $M$, we choose $M=\lceil T^{3/4}\rceil$. For both sets of experiments, we report the regret of different algorithms as well as the portion of refunds in the corresponding regrets.

\vspace{2mm}
\noindent\textbf{Results:} The results for the case $M=\lceil\sqrt{T}\rceil$ are shown in \cref{fig:2p_small_M} whereas the results for the case $M=\lceil T^{3/4}\rceil$ are shown in \cref{fig:2p_large_M}. 
\begin{figure}[h]
    \centering
    \subfigure[Log-log plot of regret]{\includegraphics[height=6cm, width=8cm]{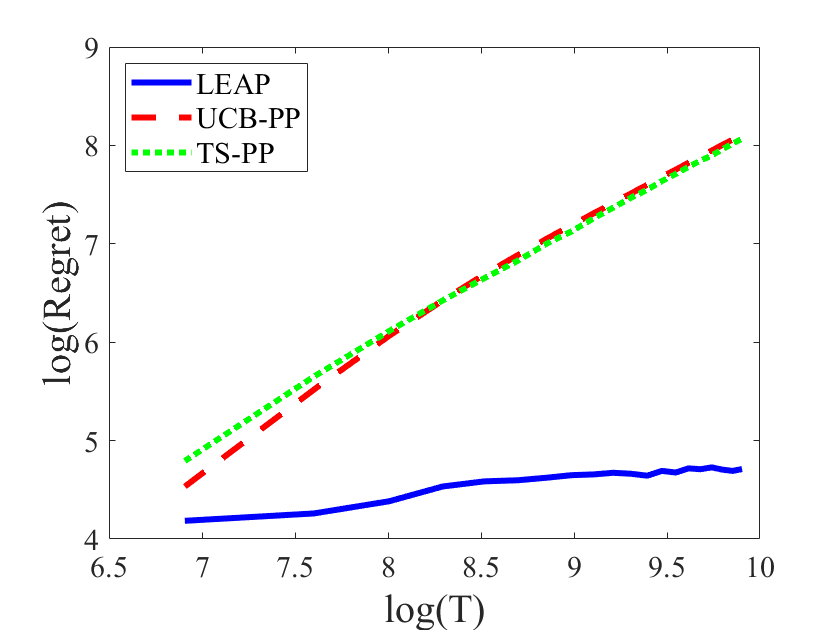}
    \label{fig:gdp-regret-loglog1}}
    \subfigure[Portion of refund in regret ]{\includegraphics[height=6cm, width=8cm]{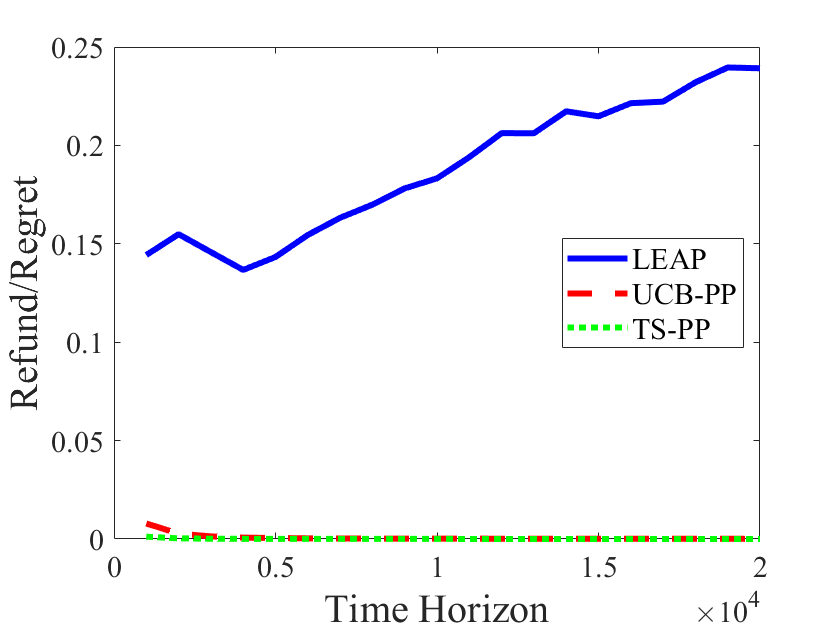}\label{fig:gdp-refund1}}
    \subfigure[Sample path of UCB-PP]{\includegraphics[height=6cm, width = 8cm]{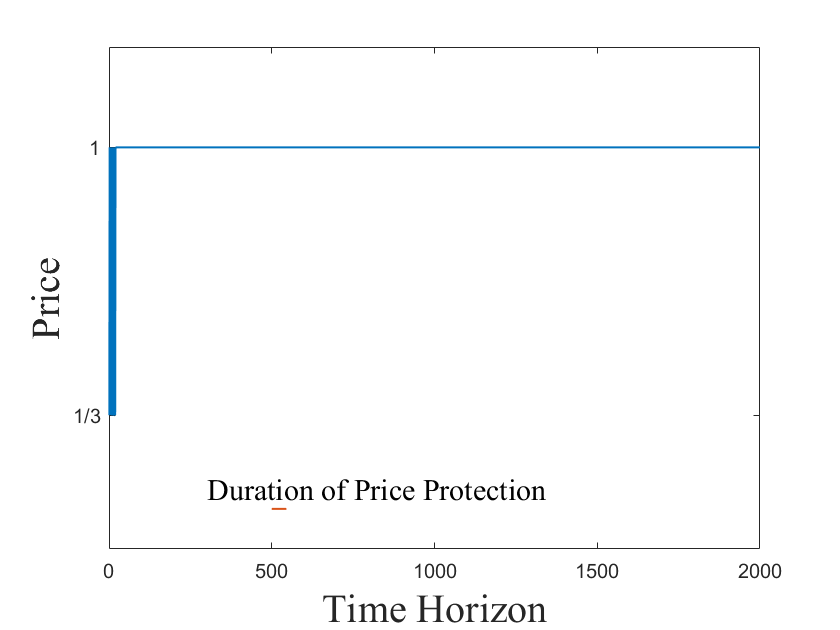}
    \label{fig:sample_ucbpp1}}
    \subfigure[Sample path of TS-PP]{\includegraphics[height=6cm, width = 8cm]{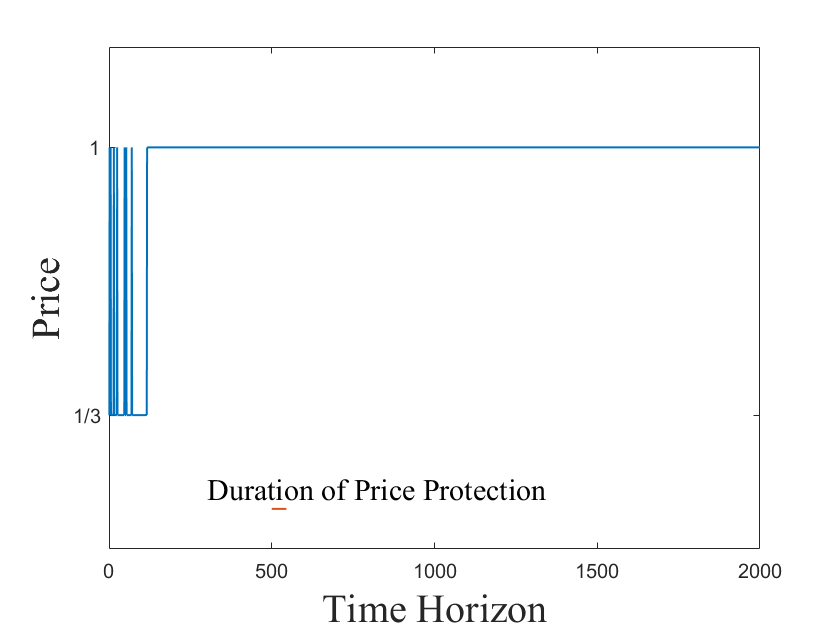}
    \label{fig:sample_tspp1}}
    \caption{Numerical experiments for two prices when $M=\lceil\sqrt{T}\rceil$}
    \label{fig:2p_small_M}
\end{figure}
\begin{figure}[h]
    \centering
    \subfigure[Log-log plot of expected regret]{\includegraphics[height=6cm, width=8cm]{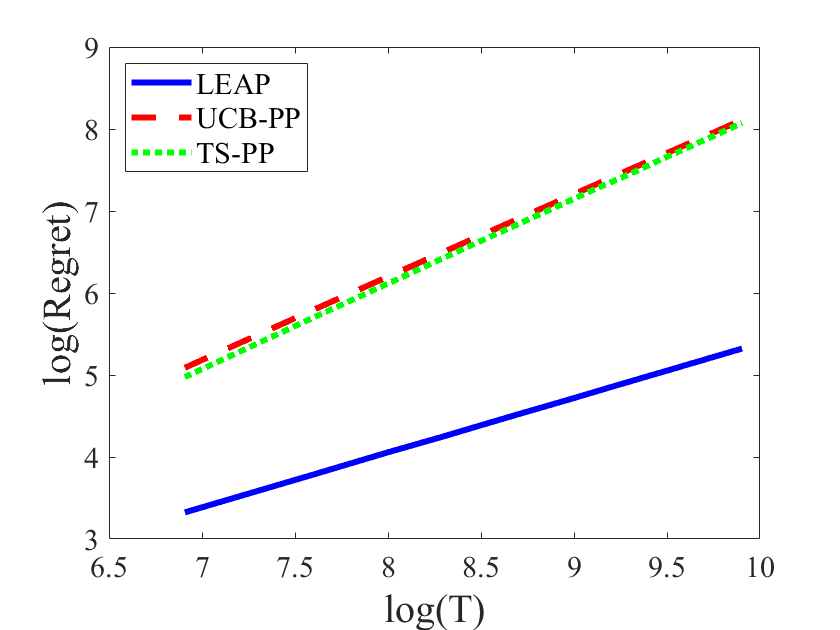}
    \label{fig:gdp-regret-loglog2}}
    \subfigure[Portion of refund in regret]{\includegraphics[height=6cm, width=8cm]{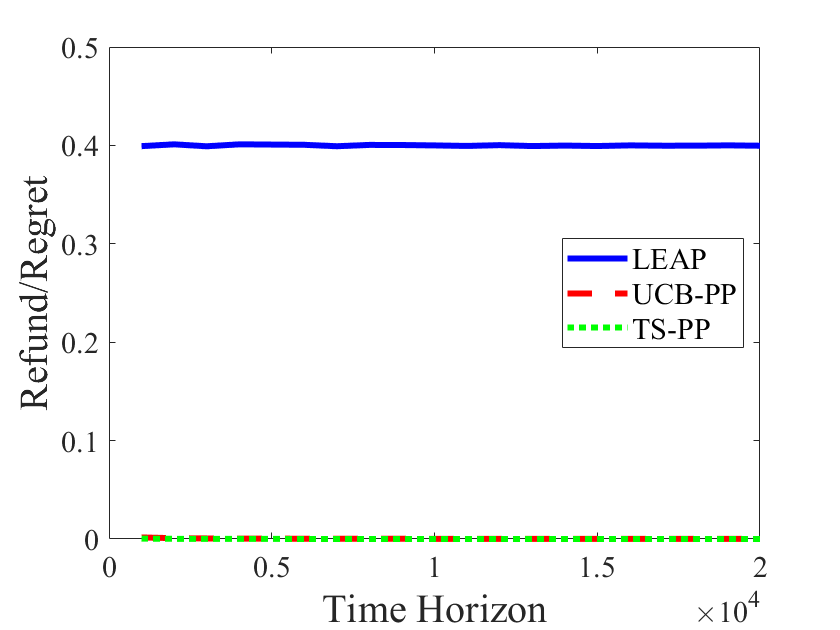}\label{fig:gdp-refund2}}
    \subfigure[Sample path of UCB-PP]{\includegraphics[height=6cm, width = 8cm]{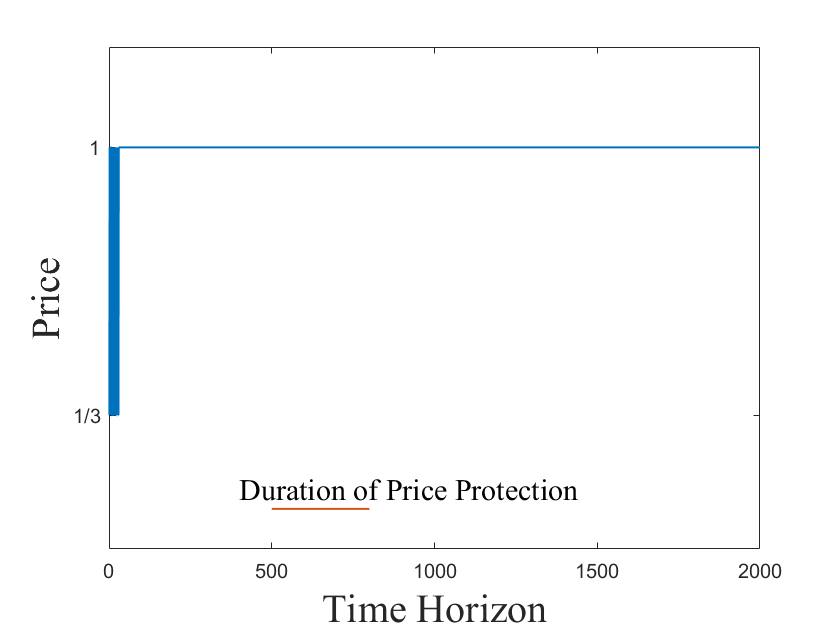}
    \label{fig:sample_ucbpp2}}
    \subfigure[Sample path of TS-PP]{\includegraphics[height=6cm, width = 8cm]{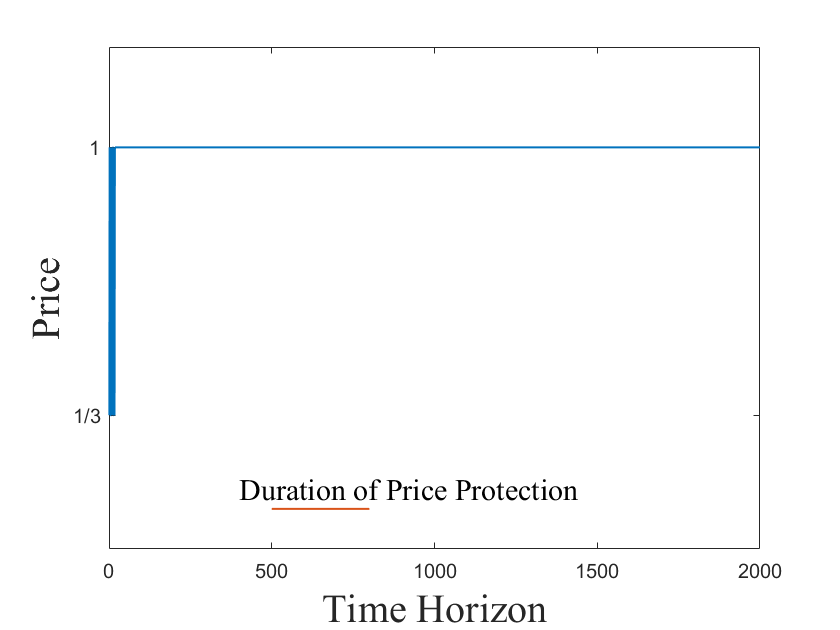}
    \label{fig:sample_tspp2}}
    \caption{Numerical experiments for two prices when$M=\lceil T^{3/4}\rceil$}
    \label{fig:2p_large_M}
\end{figure}

In \cref{fig:gdp-regret-loglog1} and \cref{fig:gdp-regret-loglog2}, we present the log-log plot for the averaged regret of \gdp, UCB-PP, and TS-PP. From the figures, one can clearly see that \gdp~is able to achieve significantly smaller regret compared to competing methods. One can also verify that \gdp's regret growth is roughly $O(T^{1/2})$ when $M=\lceil\sqrt{T}\rceil$ and $O(T^{2/3})$ when $M=\lceil T^{3/4}\rceil,$ which is consistent with our previous theoretical results (see, \eg, \cref{thm:two_price_ub}). In both cases, we can easily verify that neither UCB-PP nor TS-PP achieves a sub-linear regret \footnote{We remark that in log-log plot, if the regret growth line of an algorithm has a slope of $\alpha$, then the regret of this algorithm is $\Theta(T^{\alpha}).$ }.

In \cref{fig:gdp-refund1} and  \cref{fig:gdp-refund2}, we take a deeper look at the source of regret. From the results, we can see that refunds account for about 15\% to 25\% of \gdp's regret when $M=\lceil\sqrt{T}\rceil$ and it accounts for about $40\%$ of \gdp's regret when $M=\lceil T^{2/3}\rceil$. For UCB-PP and TS-PP, by incorporating refunds into decision-making, they do reduce the level of refund to an extremely low level (\ie, $<5\%$ in regret). However, such modifications come at the cost of high regret. As can be seen in \cref{fig:sample_ucbpp1}, \cref{fig:sample_tspp1}, \cref{fig:sample_ucbpp2} and \cref{fig:sample_tspp2}, this is so because incorporating refund into price selection would deter the algorithms from lowering the price and, hence, they cannot correctly identify the optimal price, which is exactly $p_1$ in our setup.


\subsection{Numerical Experiments for Multiple Prices: \gdp~vs. \gdppp}

Recall that, in \cref{sec:general}, we developed \gdppp to deal with the setting with multiple prices. We now use numerical experiment to show that simply (naively) adopting \gdp~to this case may not yield a good performance. Thus, it is necessary to use \gdppp. 

\vspace{2mm}
\noindent\textbf{Adopting \gdp~to Multiple Prices:} We first present how to apply \gdp~to the multiple prices case. The quantities $\{u_b\,,\,t_b\}_{b=1}^B$ and $\{n_l\}_{l=1}^L$ are defined  as in \eqref{eq:def_u_t} and \eqref{eq:def_delta_n}. The algorithm begins by viewing all prices $\{p_k\}_{k=1}^K$ as plausible prices. Suppose there are $k_b$ plausible prices remaining in phase $b$. Then, in phase $b$, each price is selected equally $(t_b-t_{b-1})/k_b$ times, and the prices are selected according to their empirical mean rewards in all previous phases. For all $l=1,\,\dots\,,L\,,$ if $N_k(t)\geq n_l$ for all $k\in[K]$, \gdp~would compute all $\hat{\lambda}_k(t)$. If 
\begin{equation*}
    \max_{j\in[K]} \,\, \hat{\lambda}_j-\sqrt{\dfrac{\log(T\tilde{\Delta}_l^2)}{N_j(t)}} \,\, > \,\, \hat{\lambda}_k+\sqrt{\dfrac{\log(T\tilde{\Delta}_l^2)}{N_k(t)}}
\end{equation*}
holds for some $k\in[K]$, then the corresponding price would be removed. If some price is removed using the above test, the current phase will terminate early; otherwise, the current phase resumes.

\vspace{2mm}
\noindent\textbf{Setup:} We mainly compare the sensitivity of \gdp~and \gdppp~towards the number of prices. Here, we fix the time horizon $T=20000\,,$ and choose the number of prices as $K=2n+1\,,$ where $n=2,3,\dots,10\,.$ For each $k\in[K]\,,$ the prices are set to be $p_k=1/3+2(k-1)/(3K-3)\,.$ The demand is set to be $D_k\sim\text{Bernoulli}(1/(3p_k))$ when $k$ is odd, and $D_k\sim\text{Bernoulli}(1/(4p_k))$ when $k$ is even. That is, the expected reward of $p_k$ is $1/3$ when $k$ is odd, or $1/4$ when $k$ is even. The price protection period $M$ is set to be $T^{7/12}K^{5/12}\,.$ 

\vspace{2mm}
\noindent\textbf{Results:} The results are shown in \cref{fig:gdppp-regret-loglog3} and \cref{fig:gdppp-refund3}.

\begin{figure}[!ht]
    \centering
    
    
    \subfigure[Log-log plot of expected regret]{\includegraphics[height=6cm, width=8cm]{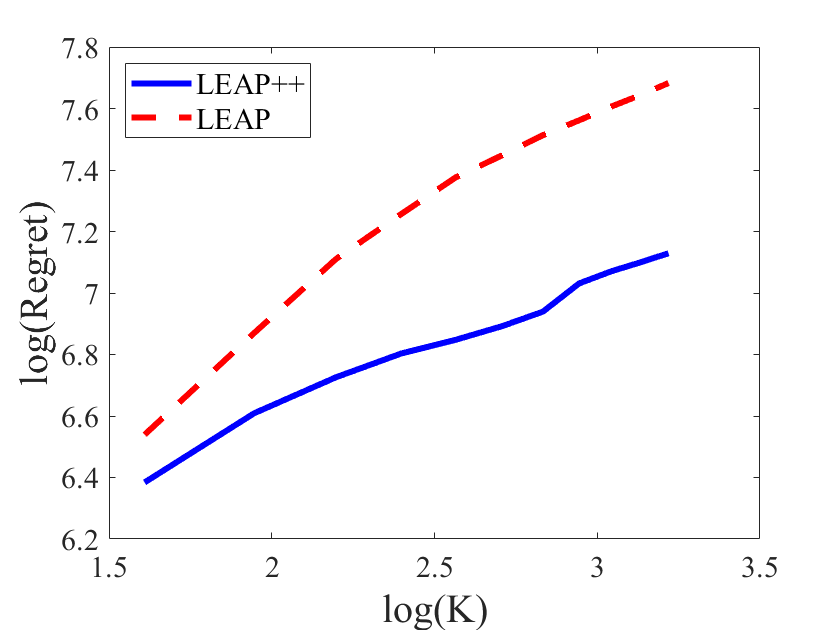}\label{fig:gdppp-regret-loglog3}}
    \subfigure[Expected refund]{\includegraphics[height=6cm, width=8cm]{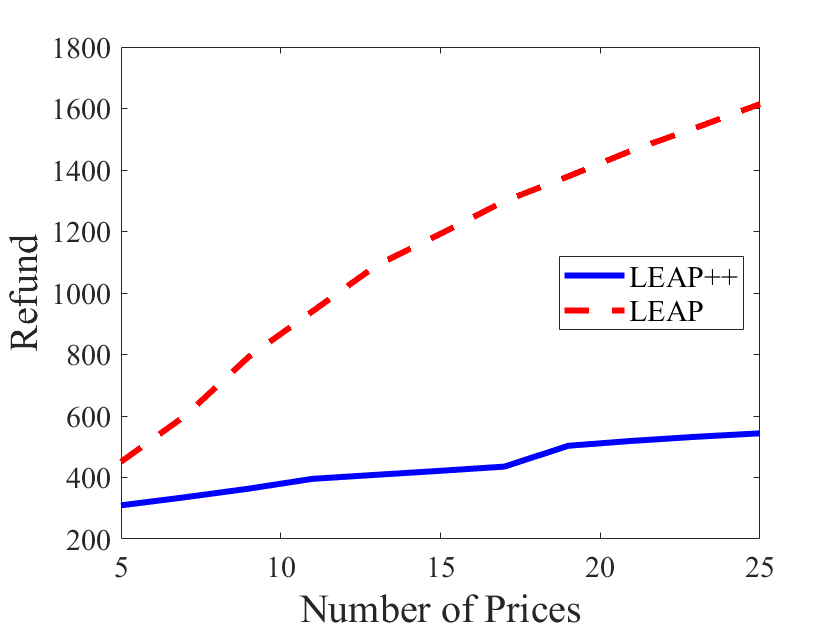}\label{fig:gdppp-refund3}}
    \caption{Numerical Experiments for $K$ prices}
\end{figure}

From \cref{fig:gdppp-regret-loglog3} one can see that the regret of \gdp~is significantly larger than the regret of \gdppp. The slope of \gdp's regret is approximately $3/4$ and the slope of \gdppp's regret is approximately $1/3$. The main source of a higher regret for \gdp~is due to the larger refund, as can shown in \cref{fig:gdppp-refund3}. One can see that, as $K$ increases, the refund under \gdp~ grows almost linearly in $K$. The reason for this is that, in each phase, \gdppp~selects the prices in ascending value order, which ensures that refunds only occur at the end of a phase. In contrast, \gdp~selects prices in the order of empirical mean reward, which could cause alternating selection for high and low prices (\eg, $p_K$ first, then followed by $p_1$, $p_{K-1}$, and $p_2$, $\ldots$), and incurs unnecessary refunds within each phase.

\subsection{Cost of Price Protection}
In this section, we show how 
price protection guarantee affects the seller's regret and revenue. 

\vspace{2mm}
\noindent\textbf{Setup:} We compare the regret and revenue when there is price protection guarantee with those without price protection guarantee. For the case with price protection we use \gdppp~while, for the case without price protection, we use UCB and Thompson Sampling. We set the prices to be $P=\{1/3\,,\,2/3\,,\,1\}\,,$ and the demand is $D_t(p_1)\equiv 1\,,$ $D_t(p_2)\sim\text{Bernoulli}(1/3)\,,$ $D_t(p_3)\sim\text{Bernoulli}(1/4)\,.$ The time horizon $T$ is chosen to range from $1000$ to $20000$ with step size $1000$. When there is a price protection, the price protection period is set to be $M=\sqrt{3T}$ and $M=T$. The results are shown in \cref{fig:price-fairness-regret} and \cref{fig:price-fairness-revenue}.
\begin{figure}[!ht]
    \centering
    \subfigure[Expected Regret]{\includegraphics[height=6cm, width=8cm]{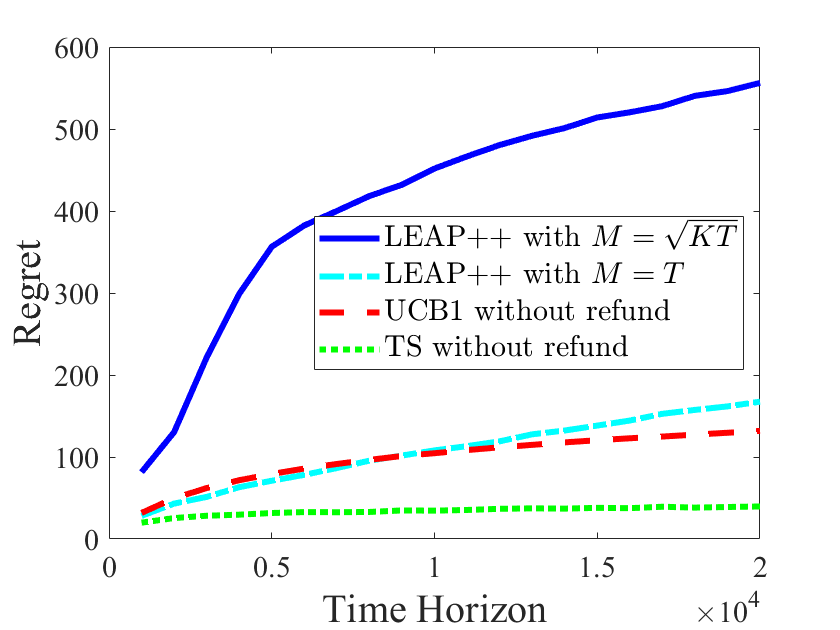}
    \label{fig:price-fairness-regret}}
    \subfigure[Expected Total Revenue]{\includegraphics[height=6cm, width=8cm]{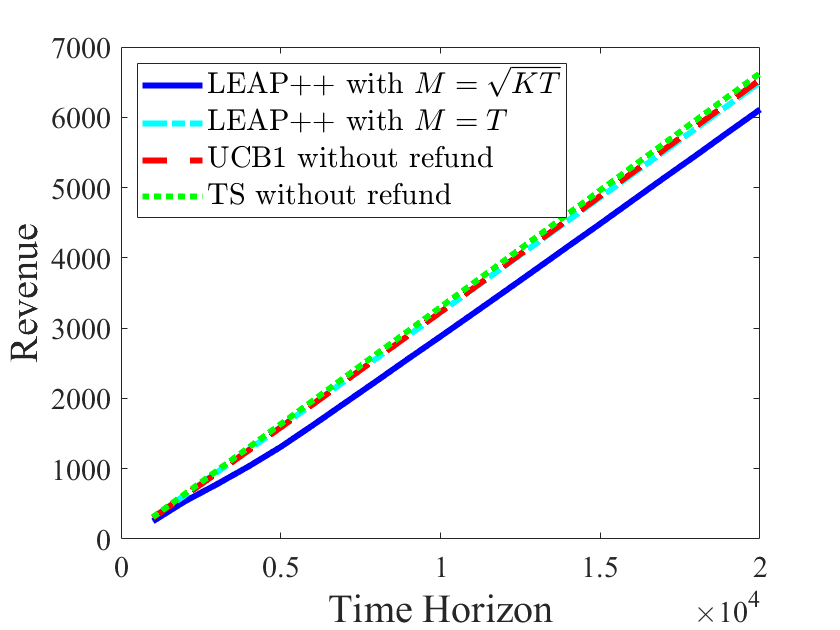}
    \label{fig:price-fairness-revenue}}
    
    \caption{\gdppp~under price protection compared to UCB and TS without price protection}
\end{figure}

\vspace{2mm}
\noindent\textbf{Results:} One can see from Figure \ref{fig:price-fairness-regret} that, with price protection guarantee and $M=\sqrt{3T}\,,$ the regret of \gdppp~is approximately 5 times of that of UCB without price protection, and approximately 10 times of that of Thompson Sampling without price protection. For the case with $M=T$, the regret \gdppp~is slightly more than that of UCB and approximately 4 times of that of Thompson Sampling. However, the four cases do not have significant difference in terms of total revenues. One can see from Figure \ref{fig:price-fairness-revenue} the total revenue under the four cases are very close to each other, which implies that the presence of price protection guarantee does not cause significant decrease in total revenue when $T$ is large.

One can also notice that the regret of \gdppp~is smaller when $M=T$ then when $T=\sqrt{KT}$. The reason of this can be explained as follows: although \gdppp~has a better upper bound when $M=\sqrt{KT}$, having at most $O(\sqrt{KT}\log\log T)$ compared to the $O(T^{2/3})$ when $M=T$, the constant term for for the upper bound with $M=\sqrt{KT}$ can be much larger than that with $M=T$; thus, empirically \gdppp~could have a lower regret when $M=T$ compared to $M=\sqrt{KT}$. This also implies that it is reasonable for the seller to have a long price protection period without having too much additional regret.

\section{Conclusion}
In this paper, we study the impact of price protection on online learning for dynamic pricing with initially unknown customer demand. We consider a setting in which a firm sells a product over a horizon of $T$ time periods under price protection guarantee. That is, any customer purchasing the product can receive a refund, the amount of which equals to the difference between the price paid and the lowest price found within the price protection period, defined as the immediate $M$ time steps after the purchase. We develop algorithms that are rate-optimal (up to logarithmic factors), and we also observe an interesting phase transition phenomena on the optimal regret as a length of protection period increases.
Finally, we also conduct numerical experiments to demonstrate the empirical benefit of our algorithms 
compared to the standard UCB and Thompson Sampling algorithms as well as their adapted variants. Our result shows the importance of taking into account the impact of price protection guarantee when designing a data-driven dynamic pricing policy.  

\ACKNOWLEDGMENT{The authors would love to sincerely thank Yining Wang for tremendously helpful discussions for the work.}


\bibliographystyle{ormsv080}
\bibliography{ref}
\newpage
\begin{appendices}{\Large \noindent\textbf{Supplementary and Proofs}}
\section{Proof of Proposition \ref{prop:ucb_regret1} and Additional Discussion}
\subsection{Proof of Proposition \ref{prop:ucb_regret1}}\label{sec:prop:ucb_regret1}
In what follows, we assume the distributions of the reward $p_kD_t(p_k)$ for both prices are supported over the interval $[a-T^{-3/2}/4,a+T^{-3/2}/4]$ for some $a\in(0,1)$. We claim that the UCB algorithm will alternately choose $p_1$ and $p_2$, \ie, for all $t=1,2,\dots,\lfloor T/2\rfloor$, either $i_{2t-1}=1\,,$ $i_{2t}=2\,,$ or $i_{2t-1}=2\,,$ $i_{2t}=1\,.$ We prove our claim by induction. By virtue of the UCB, it would select both $p_1$ and $p_2$ once at the first two time steps. Suppose for all $t=1,2,\dots,t_0$, either $i_{2t-1}=1\,,$ $i_{2t}=2\,,$ or $i_{2t-1}=2\,,$ $i_{2t}=1\,.$ Now consider $t=t_0+1\,,$ we assume w.l.o.g. that $i_{2t_0+1}=1\,.$ Then, $N_1(2t_0+2)=t_0+1$ and $N_2(2t_0+2)=t_0$. Now consider time step $2t_0+2\,:$
\begin{align*}
    &\left(\hat{\lambda}_2(2t_0+2)+\sqrt{\dfrac{\log(T)}{N_2(2t_0+2)}}\right)-\left(\hat{\lambda}_1(2t_0+2)+\sqrt{\dfrac{\log(T)}{N_1(2t_0+2)}}\right)\\
    \geq& -|\hat{\lambda}_2(2t_0+2)-\hat{\lambda}_1(2t_0+2)|+\sqrt{\log(T)}\left(\dfrac{1}{\sqrt{t_0}}-\dfrac{1}{\sqrt{t_0+1}}\right)\\
    \geq&-T^{-3/2}+\dfrac{\sqrt{\log(T)}}{\sqrt{t_0(t_0+1)}(\sqrt{t_0}+\sqrt{t_0+1})}\\
    =&-T^{-3/2}+\sqrt{\log(T)}T^{-3/2}>0.
\end{align*}
Here, the second inequality follows from the fact that $p_kD_t(p_k)$ and each $\hat{\lambda}_k(\cdot)$'s are supported on $[a-T^{-3/2}/2,a+T^{-3/2}/2]$. Therefore at time step $2t_0+2$, UCB would select $p_2$, and the induction claim thus holds for $t=t_0+1$. 

By induction we have that for all $t=1,\dots,\lfloor T/2\rfloor\,,$ either $i_{2t-1}=1\,,$ $i_{2t}=2\,,$ or $i_{2t-1}=2\,,$ $i_{t2}=1\,.$ Therefore, $p_2$ is selected at least $\lfloor T/2\rfloor$ times, and each time $p_2$ is selected, $p_1$ will be selected within 2 time steps. In other words, refund is incurred whenever $p_2$ is selected, and UCB would result $\Omega(T)$ refund/regret.

\subsection{Numerical Example for \cref{prop:ucb_regret1}}\label{sec:ucb_test}
In this test, we set the expected reward of the two prices to be the same, and then consider both the UCB \cite{ABF02} and the Thompson Sampling \cite{AgrawalG12} algorithms. Specifically, we let $p_1=1/2\,,$ $p_2=2/3\,,$ the demand under the two prices are $D_t(p_1)\sim \text{Bernoulli}(2/3)\,,$ $D_t(p_2)\sim\text{Bernoulli}(1/2)$. The time horizon $T$ varies from 500 to $10000$ with a step size $500\,.$ The length of price protection period is set to $M=\lceil \sqrt{T}\rceil\,.$ It is easy to verify that $\lambda_1=\lambda_2=1/3$, which means all the regret incurred (if there is any) would be due to refund under the price protection mechanism. 
\begin{figure}[!ht]
    \centering
    \includegraphics[height=6cm, width = 8cm]{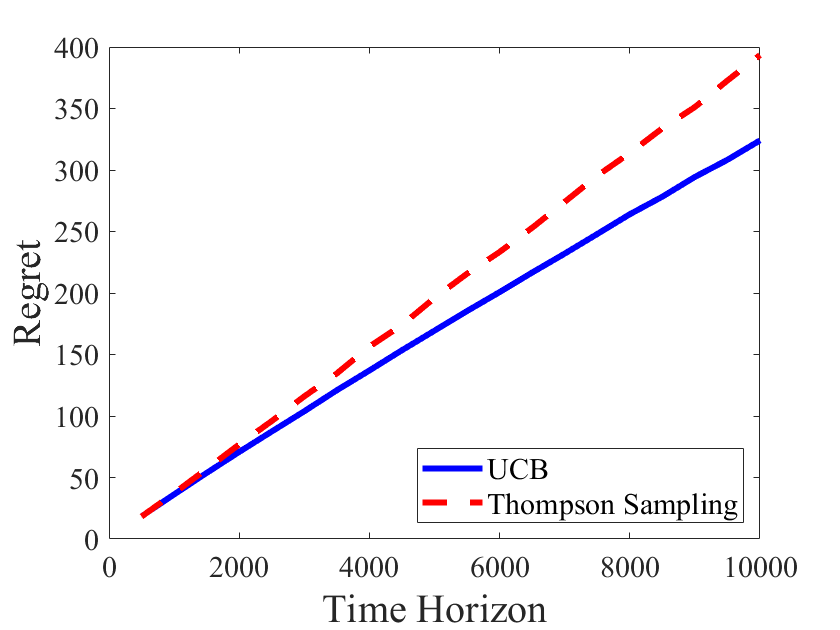}
    \caption{Expected regret of UCB and Thompson Sampling under different $T$}
    \label{fig:ucb_ts_regret_1}
\end{figure}

For each $T$, we present the averaged results over $10^4$ iterations, and report the expected regret. The results under different $T$ are shown in \cref{fig:ucb_ts_regret_1}. From \cref{fig:ucb_ts_regret_1}, one can see that the expected regret of both UCB and Thompson Sampling scale linearly as $T\,.$ This coincides with the claim in \cref{prop:ucb_regret1} that if the two prices have similar expected reward, then under the price protection mechanism, both the canonical UCB and Thompson Sampling could incur linear regret, even with relatively short price protection period.

\subsection{Additional Discussion}\label{sec:ucb_discussion}
In this section, we show that a larger class of UCB algorithms could also incur linear regret under the price protection mechanism. We first propose a general framework of the UCB algorithms.
\begin{algorithm}[htbp]
\SingleSpacedXI
\caption{General Framework of the UCB Algorithms} \label{alg:general_ucb}
\begin{algorithmic}[]
    \State \textbf{Input:} price set $P\,,$ time horizon $T\,.$
    \For {Time step $t=1,\dots,T$}
    \State Selects the price $p_i$ that maximizes $F(\hat{\lambda}_i(t),N_i(t),T,K,t)\,.$
    \EndFor
\end{algorithmic}
\end{algorithm}

Here $F(\hat{\lambda},N,T,K,t)$ is a function that is strictly increasing w.r.t. $\hat{\lambda}$ (since the upper confidence bound is generally higher with a higher empirical mean) and strictly decreasing w.r.t. $N$ (since the upper confidence bound is generally lower when a price is more explored). Note that the framework is quite general, and $F=+\infty$ whenever $N=0$. Besides the UCB in \cite{ABF02}, many other popular UCB algorithms also satisfy the framework in \cref{alg:general_ucb}. Examples include the MOSS introduced in \cite{AudibertB2009}, which selects the price in each time step according to
\begin{equation*}
i_t=\argmax_{i}\hat{\lambda}_i(t)+\sqrt{\dfrac{4}{N_i(t)}\log^+\left(\dfrac{T}{KN_i(t)}\right)},
\end{equation*}
where $\log^+(x)=\log(\max\{1,x\})\,;$ and the KL-UCB introduced in \cite{GarivierC2011} and \cite{MaillardMS2011}, which selects the price in each time step according to
\begin{equation*}
    i_t=\argmax_i\max\left\{\tilde{\lambda}\in[0,1]:d(\hat{\lambda}_i(t)\,,\,\tilde{\lambda})\leq\dfrac{f(t)}{N_i(t)}\right\},
\end{equation*}
where $d(\lambda_1\,,\,\lambda_2)=\texttt{KL}(\text{Bernoulli}(\lambda_1)||\text{Bernoulli}(\lambda_2))\,,$ and $f(t)=1+t\log^2(t)\,.$

\begin{proposition}
    Suppose there are two prices $\{p_1\,,\,p_2\}$ and the demand under both prices are deterministic and $p_1D_t(p_1)=p_2D_t(p_2)=\lambda$, and $M\geq 2$. Under this instance, any UCB algorithm satisfying the framework in Algorithm \ref{alg:general_ucb} will incur $\Omega(T)$ regret.
\end{proposition}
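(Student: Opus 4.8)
The plan is to exploit that deterministic, reward-equalized demands render the two prices statistically indistinguishable to any index rule, so that only the count-dependent confidence term can break ties between them, and the framework forces this term to favour the less-explored price. Since $D_t(p_1),D_t(p_2)$ are deterministic with $p_1D_t(p_1)=p_2D_t(p_2)=\lambda$, we have $\hat{\lambda}_1(t)=\hat{\lambda}_2(t)=\lambda$ at every round at which both prices have already been sampled at least once. First I would dispose of the first two rounds: because $F=+\infty$ whenever $N=0$, the algorithm must select each price exactly once during $t=1,2$, so that $N_1(3)=N_2(3)=1$ and both empirical means equal $\lambda$ thereafter.

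The core step is a balancing invariant, which I would establish by induction: $|N_1(t)-N_2(t)|\le 1$ for all $t$. In the inductive step I use that $N\mapsto F(\lambda,N,T,K,t)$ is strictly decreasing, so whenever $N_1(t)\neq N_2(t)$ the algorithm is \emph{forced} to pick the price with the strictly smaller count (its index is strictly larger), driving the gap back to $0$; whenever $N_1(t)=N_2(t)$ the two indices coincide and any tie-break raises the gap to at most $1$. Because the forced choices invoke only strict monotonicity, the invariant holds for every admissible (even randomized) tie-breaking rule. Two consequences follow path-wise: (i) from $N_1(T+1)+N_2(T+1)=T$ and the invariant, $N_2(T+1)\ge (T-1)/2$, so $p_2$ is chosen $\Omega(T)$ times; and (ii) $p_2$ is never selected more than twice in a row, so after any round $s$ at which $p_2$ is chosen, $p_1$ is selected at some round in $\{s+1,s+2\}$.

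Next I would translate these structural facts into refund. For a customer arriving at a round $s$ where $p_2$ is chosen, the running minimum price stays at $p_2$ until $p_1$ first appears; by (ii) this occurs at some $t\le s+2\le s+M$ --- here is exactly where the hypothesis $M\ge 2$ enters --- so the protection window $[s+1,s+M]$ captures the drop, and the customer receives a refund of $(p_2-p_1)D_s(p_2)=(p_2-p_1)\lambda/p_2$, a strictly positive instance-dependent constant. Summing over the $\Omega(T)$ qualifying $p_2$-rounds (all but the final $O(1)$ rounds near the horizon) yields $\refund=\Omega(T)$ on every sample path. Finally, since both prices yield reward $\lambda$, we have $\max_k\lambda_k=\lambda$ and $\sum_t p_{i_t}D_t(p_{i_t})=\lambda T$, whence $\regret=\lambda T-\E[\rev]=\E[\refund]=\Omega(T)$.

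The hard part is making the balancing invariant airtight without committing to a particular tie-breaking rule and without the empirical means ever drifting; this stays clean only because determinism pins $\hat{\lambda}_k(t)=\lambda$ exactly, so the count term alone governs every non-tied comparison. A secondary point to verify is that each qualifying customer is refunded the \emph{full} difference $(p_2-p_1)D_s(p_2)$ rather than a partial one, which holds automatically because $p_1<p_2$ are the only two prices, so no intermediate price can shrink the running minimum before $p_1$ appears.
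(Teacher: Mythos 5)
Your proof is correct and follows essentially the same route as the paper's: both arguments use strict monotonicity of $F$ in $N$ to establish, by induction over tie-broken and forced selections, a structural alternation property, then convert the resulting $\Omega(T)$ selections of $p_2$ (each followed by $p_1$ within two steps, hence within the protection window since $M\geq 2$) into a constant refund per occurrence and thus $\Omega(T)$ regret. The only cosmetic difference is that you phrase the structure as the balancing invariant $|N_1(t)-N_2(t)|\leq 1$, whereas the paper phrases it as exact pairwise alternation over blocks $\{2t-1,2t\}$; these are equivalent here and yield identical conclusions.
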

\begin{proof}
We claim that in this case the UCB algorithm will alternately choose $p_1$ and $p_2$, \ie, for all $t=1,2,\dots,\lfloor T/2\rfloor$, either $i_{2t-1}=1\,,$ $i_{2t}=2\,,$ or $i_{2t-1}=2\,,$ $i_{2t}=1\,.$ We prove our claim by induction. Since the UCB requires that each price is selected once in the first $K$ prices, we have that either $i_1=1\,,$ $i_2=2\,,$ or $i_1=2\,,$ $i_2=1\,.$ Suppose for all $t=1,2,\dots,t_0$, either $i_{2t-1}=1\,,$ $i_{2t}=2\,,$ or $i_{2t-1}=2\,,$ $i_{2t}=1\,.$ We assume w.l.o.g. that $i_{2t_0+1}=1$, then $N_1(2t_0+2)=t_0+1$ and $N_2(2t_0+2)=t_0$. Therefore
\begin{equation*}
F(\hat{\lambda}_2(t),N_2(t),T,2,t)-F(\hat{\lambda}_1(t),N_1(t),T,2,t)=F(\lambda,t_0,T,2,t)-F(\lambda,t_0+1,T,2,t)>0,
\end{equation*}
where the last inequality is derived by the assumption that $F$ is strictly decreasing w.r.t. $N$. Therefore $i_{2t_0+2}=2$ if $i_{2t_0+1}=1$, and the claim holds for $t=t_0+1$. By induction we have that for all $t=1,\dots,\lfloor T/2\rfloor\,,$ either $i_{2t-1}=1\,,$ $i_{2t}=2\,,$ or $i_{2t-1}=2\,,$ $i_{t2}=1\,.$ Therefore, $p_2$ is selected at least $\lfloor T/2\rfloor$ times, and each time $p_2$ is selected, $p_1$ will be selected within 2 time steps, thus refund will be incurred whenever $p_2$ is selected, and the UCB algorithm will incur $TD(p_2)(p_2-p_1)/2=\Omega(T)$ refund.
\end{proof}

\section{Proof of Theorem \ref{thm:two_price_lb}}\label{sec:thm:two_price_lb}
We first present a result which shall be used extensively throughout this proof.
\begin{proposition}\label{prop:refund}
Suppose there exists $\mu>0$ such that $D_t(p_i)>\mu$ holds almost surely for all $i=1,2$. Given a price protection period $M,$ for any time step $T_0<T,$ if price $p_2$ is selected at least $M_0$ times up to time $T_0,$ \ie, $N_2(T_0)\geq M_0,$ then if $p_1$ is selected during the time interval $[T_0+1,T],$ the refund incurred is $\mu(p_2-p_1)\min\{M_0,M\}.$
\end{proposition}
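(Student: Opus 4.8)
The plan is to reduce the claim to a purely combinatorial counting statement about how many $p_2$-customers are eventually refunded, and then to prove that count is at least $\min\{M_0,M\}$ via a decomposition of the price path into maximal runs. Since there are only two prices, every time step lies in exactly one of two sets: the set $S_2=\{s:i_s=2\}$ of $p_2$-selections and its complement $S_1=\{s:i_s=1\}$. A customer arriving at a time $s\in S_2$ pays $p_2$ and is refunded exactly $(p_2-p_1)D_s(p_2)$ precisely when the running minimum over her protection window drops to $p_1$, i.e.\ precisely when some $p_1$-time falls in $[s+1,s+M]$; otherwise she receives nothing. Using the hypothesis $D_s(p_2)>\mu$, each refunded customer contributes more than $(p_2-p_1)\mu$. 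Hence it suffices to show that at least $\min\{M_0,M\}$ of the $p_2$-selections occurring in $[1,T_0]$ (of which there are at least $M_0$) have a $p_1$-time within their window.

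First I would decompose $S_2$ into its maximal runs of consecutive $p_2$-selections. For a run $R=[a,b]$ that is immediately followed by a $p_1$-time (that is, $b<T$ and $b+1\in S_1$), the customers at times $s\in[\max\{a,b+1-M\},b]$ each see the $p_1$-time $b+1$ inside $[s+1,s+M]$, so exactly $\min\{|R|,M\}$ customers in that run are refunded. The crucial structural observation is that every maximal run meeting $[1,T_0]$ is indeed followed by a $p_1$-time: letting $\tau=\min\big(S_1\cap[T_0+1,T]\big)$, which exists by hypothesis, the whole block $[T_0+1,\tau-1]$ consists of $p_2$-selections, so any run that intersects $[1,T_0]$ must terminate at some $b\le\tau-1$, whence $b+1\le\tau\le T$ and $b+1\in S_1$.

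Finally I would aggregate over runs. Writing $R_1,\dots,R_j$ for the maximal runs meeting $[1,T_0]$ and $c_i=|R_i\cap[1,T_0]|$, we have $\sum_i c_i=|S_2\cap[1,T_0]|\ge M_0$ and $|R_i|\ge c_i$, so the number of refunded customers is at least $\sum_i\min\{|R_i|,M\}\ge\sum_i\min\{c_i,M\}\ge\min\{\sum_i c_i,M\}\ge\min\{M_0,M\}$, where the middle inequality holds because either every $c_i\le M$ (making the left side equal to $\sum_i c_i$) or some $c_i>M$ (making it at least $M$). Multiplying by the per-customer refund exceeding $(p_2-p_1)\mu$ yields total refund at least $\mu(p_2-p_1)\min\{M_0,M\}$, as claimed. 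The main obstacle, and the reason a naive argument fails, is that the $M_0$ selections of $p_2$ may be scattered arbitrarily across $[1,T_0]$ and the refund-triggering $p_1$-selections may lie on either side of $T_0$; tracking only the first price drop after $T_0$ can undercount. The run decomposition, together with the elementary inequality $\sum_i\min\{c_i,M\}\ge\min\{\sum_i c_i,M\}$, is precisely what makes the count robust to the placement of the selections.
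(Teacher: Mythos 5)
Your proof is correct and takes essentially the same route as the paper's: the paper likewise decomposes the $p_2$-selections up to $T_0$ into maximal runs (its intervals $I_1,\dots,I_L$ with $\max_{t\in I_l}t+1<\min_{t\in I_{l+1}}t$), notes that each such run is followed by a $p_1$-selection (using the hypothesis that $p_1$ is chosen in $[T_0+1,T]$), counts $\min\{|R|,M\}$ refunded customers per run, and lower-bounds each refund by $\mu(p_2-p_1)$. The only difference is rigor: you spell out the run-termination argument via $\tau=\min\bigl(S_1\cap[T_0+1,T]\bigr)$ and the aggregation inequality $\sum_i\min\{c_i,M\}\ge\min\bigl\{\sum_i c_i,\,M\bigr\}$, which the paper compresses into a ``w.l.o.g.'' truncation of each run and an ``easily see'' claim.
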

The proof of this result is provided in \cref{sec:prop:refund}. Now, depending on the value of $M,$ we distinguish three different cases.

\noindent\textbf{Case 1. $M\leq \sqrt{T}\,:$} For this case, one can use standard lower bound argument for multi-armed bandit (see, \eg, theorem 5.1 of \cite{ABFS02}) to show that the lower bound is $\Omega(\sqrt{T}).$

\noindent\textbf{Case 2. $\sqrt{T}<M< T^{2/3}\,:$} For this case, we proceed according to the following steps.
\begin{itemize}
\item \textbf{Step 1. Indistinguishable Instances:} We consider two different instances of demand parameters $(\mu_1,\mu^{(1)}_2)$ and $(\mu_1,\mu^{(2)}_2).$ We set $p_1=1/2$ and $p_2=3/4$. Here, setting the price to $p_1$ would deterministically bring in the earning $p_1\mu_1\,.$ For $p_2,$ selecting it would lead to a reward follows a Bernoulli distribution with support on $1/3$ and $2/3$ and mean $p_2\mu^{(j)}_2$ in instance $j$. Moreover, 
\begin{align}
	\nonumber&\textbf{Instance 1: } p_2\mu^{(1)}_2=p_1\mu_1-\epsilon M^{-1/2},\qquad 
	\textbf{Instance 2: } p_2\mu^{(2)}_2=p_1\mu_1+\epsilon M^{-1/2}.
\end{align}
In both instances, $\epsilon=1/8$. We also enforce that $p_2\mu^{(1)}_2=1/2$ and there are two absolute constants $c_1,c_2~(\geq1/10)$ such that $$p_2-p_1>c_1\,,\qquad D_t(p_i)>c_2\,,\quad\text{a.s.}\,,\quad\forall i\in\{1,2\,.$$
In other words, the only difference between the two instances is the choice of $\mu_2$ and the only randomness would come from setting price to $p_2.$ 

\item \textbf{Step 2. Key Event and Sample Space:} For any policy $\pi,$ to determine if $\pi$ has selects $p_2$ for sufficiently many times, we define $T_1=T-M^{3/2}/2+1$ and the event $E_1$ on price $p_2$ is selected more than $M/3$ times up to time $T_1,$ \ie,
\begin{align}
    E_1=\left\{N_2(T_1)\geq \frac{M}{3}\right\}.
\end{align}
Associated with these is the sample space/history (for $\pi$)
\begin{align}
S=\prod_{u=1}^{M/3}([T]\cup\{\infty\}\,,\ \{0,1\})\,,
\end{align}
which records the time steps when $p_2$ is offered and the realized demand. We remark that since $\pi$ is non-anticipatory and there is no randomness when $p_1$ is selected, whether $E_1$ holds or not is completely determined by its first $M/3$ times when $p_2$ is offered, \ie, $E_1$ can be interpreted as a subset of $S.$ Throughout the rest of the proof, any probability is defined over $S$ and we use $\Pr_S^{(j)}(\cdot)=\Pr_S(\cdot | \mu_2=\mu^{(j)}_2)$ to denote the probability w.r.t. instance $j.$ 

\item \textbf{Step 3. Complete the Proof:} To this end, we make the following observation:
\begin{enumerate}
    \item Suppose $E_1$ holds, then for instance 1, if $p_1$ is never chosen in the rest $T-T_1+1=M^{3/2}/2$ time steps, then the regret incurred by falsely selecting $p_2$ is at least 
    \begin{align*}
        \frac{M^{3/2}}{2}\epsilon M^{-1/2}=\frac{M}{16}\,;
    \end{align*}
    while if $p_1$ is ever selected in the rest $M^{3/2}/2$ time steps, by \cref{prop:refund}, the refund would incur $\Omega(M)$ expected regret;
    \item Suppose $E_1$ does not hold, then for instance 2, the expected regret incurred by falsely selecting $p_1$ is at least 
    \begin{align}
        \left(T_1-\frac{M}{3}\right)\epsilon M^{-1/2}\geq \frac{T}{6}\frac{1}{8M^{1/2}}\geq \frac{M}{48}=\Omega(M)\,,
    \end{align}
    where we use the fact that $M\leq T^{2/3}$ in both inequalities.
\end{enumerate}
Consequently, the expected regret of any policy $\pi$ on the two instances is at least
\begin{align}\label{eq:lb00}
    \Omega(M(\Pr_S^{(1)}(E_1)+\Pr_S^{(2)}(\neg E_1))).
\end{align}
By the Bretagnolle–Huber inequality (see, \eg, theorem 14.2 of \cite{LS18}), we have
\begin{align}
    \nonumber\Pr_S^{(1)}(E_1)+\Pr_S^{(2)}(\neg E_1)\geq&\exp(-\kl(\Pr_S^{(1)}||\Pr_S^{(2)}))\\
    \nonumber=&\exp\left(-\sum_{t=1}^{M/3}\kl(\text{Bernoulli}(p_2\mu^{(1)}_2)||\text{Bernoulli}(p_2\mu^{(2)}_2))\right)\\
    \geq&\exp\left(\frac{M}{6}\log_e\left(1-\frac{4\epsilon^2}{M}\right)\right)\geq\exp\left(-1\right)\,,\label{eq:lb01}
\end{align}
where the last step uses the fact that for $x\in[0,1/4],$ $\log_e(1-x)\geq-4\log_e(4/3)x\geq-4x\,.$

Combining \eqref{eq:lb00} and \eqref{eq:lb01}, the expected regret of $\pi$ on the two instances sum up to $\Omega(M),$ which indicates that $\pi$ has to incur $\Omega(M)$ expected regret in at least one of them.
\end{itemize}
Here, we remark that the choice of sample space in Step 2 is critical to our proof and we refer interested readers to \cref{sec:lb_remark} for a more detailed discussion.

\noindent\textbf{Case 3. $M\geq T^{2/3}\,$:} For this case, we follow most of the reasoning in Case 2, but with a different except that we consider a different pair of $(\mu_1,\mu_2).$

\begin{itemize}
\item \textbf{Step 1. Indistinguishable Instances:} We consider the following mean demand parameters
\begin{align}
	\nonumber&\textbf{Instance 1: } p_2\mu^{(3)}_2=p_1\mu_1-\epsilon T^{-1/3}, 
	\qquad\textbf{Instance 2: } p_2\mu^{(4)}_2=p_1\mu_1+\epsilon T^{-1/3}.
\end{align}
In both instances, $\epsilon=1/8$. We also enforce that $p_2\mu^{(3)}_2=1/2$ and there are two absolute constants $c_3,c_4~(\geq1/10)$ such that $$p_2-p_1>c_3\,,\qquad \mu_1\,,\ \mu^{(3)}_2\,,\ \mu^{(4)}_2\geq c_4\,.$$ 
We shall consider an environment where the underlying $(\mu_1,\mu_2)$ is equal to either $(\mu^{(j)}_1,\mu^{(j)}_2)$ with probability $1/2,$ \ie,
\begin{align}
    \Pr((\mu_1,\mu_2)=(\mu^{(3)}_1,\mu^{(3)}_2)) = \Pr((\mu_1,\mu_2)=(\mu^{(4)}_1,\mu^{(4)}_2))= \frac{1}{2}.
\end{align}

\item \textbf{Step 2. Key Event and Sample Space:} For any policy $\pi,$ to determine if $\pi$ has selects $p_2$ for sufficiently many times, we define $T_2=T/2+1$ and the event $E_2$ on price $p_2$ is selected more than $T^{2/3}$ times up to time $T_2,$ \ie,
\begin{align}
    E_2=\left\{N_2(T_2)\geq T^{2/3}\right\}.
\end{align}
Associated with these is the sample space/history (for $\pi$)
\begin{align}
S'=\prod_{u=1}^{T^{2/3}}([T]\cup\{\infty\}\,,\ \{0,1\}),
\end{align}
which records the time steps when $p_2$ is offered and the realized demand. Same as before, $E_2$ can be interpreted as a subset of $S'.$ Throughout the rest of the proof, any probability is defined over $S'$ and we use $\Pr_{S'}^{(j)}(\cdot)=\Pr_{S'}(\cdot | \mu_2=\mu^{(j)}_2)$ to denote the probability w.r.t. instance $j.$

\item \textbf{Step 3. Complete the Proof:} To this end, we make the following observation:
\begin{enumerate}
    \item Suppose $E_2$ holds, then for instance 1, if $p_1$ is never selected in the rest of the $T/2$ time steps, then the regret incurred by falsely selecting $p_2$ is at least $T^{2/3}/16;$ while if $p_1$ is ever selected in the rest of the $T/2$ time steps, then by \cref{prop:refund}, the refund would incur $\Omega(T^{2/3})$ expected regret;
    \item Suppose $E_2$ does not hold, then for instance 2, the expected regret incurred by falsely selecting $p_1$ is at least 
    \begin{align}
        \left(T_2-{T^{2/3}}\right)\epsilon T^{-1/3}\geq \frac{T}{16}\frac{1}{8T^{1/3}}=\frac{T^{2/3}}{128}
    \end{align}
    where we use the fact that $T\geq 12$ and hence, $T/2-T^{2/3}\geq T/16$ in the first step.
\end{enumerate}
Consequently, the expected regret of any policy $\pi$ on the two instances is at least
\begin{align}\label{eq:lb02}
    \Omega(T^{2/3}(\Pr_{S'}^{(3)}(E_2)+\Pr_{S'}^{(4)}(\neg E_2))).
\end{align}
By the Bretagnolle–Huber inequality (see, \eg, theorem 14.2 of \cite{LS18}), we have
\begin{align}
    \nonumber\Pr_{S'}^{(3)}(E_2)+\Pr_{S'}^{(4)}(\neg E_2)\geq&\exp(-\kl(\Pr_{S'}^{(3)}||\Pr_{S'}^{(4)}))\\
    \nonumber=&\exp\left(-\sum_{t=1}^{T^{2/3}}\kl(\text{Bernoulli}(p_2\mu^{(3)}_2)||\text{Bernoulli}(p_2\mu^{(4)}_2))\right)\\
    \geq&\exp\left(\frac{T^{2/3}}{2}\log_e\left(1-\frac{1}{16T^{2/3}}\right)\right)\geq\exp\left(-1\right)\,,\label{eq:lb03}
\end{align}
where the last step again uses the fact that for $x\in[0,1/4],$ $\log_e(1-x)\geq-4\log_e(4/3)x\geq-4x.$

Combining \eqref{eq:lb02} and \eqref{eq:lb03}, the expected regret of $\pi$ on the two instances sum up to $\Omega(T^{2/3}),$ which indicates that $\pi$ has to incur $\Omega(T^{2/3})$ expected regret in at least one of them.
\end{itemize}

\subsection{Proof of \cref{prop:refund}}\label{sec:prop:refund}
We denote all the time intervals when $p_2$ is selected (\ie, $i_t=2$) as $I_1,\ldots,I_L.$ That is, if $t<T_0$ and $i_t=2,$ then $t\in I_l$ for some $l\in[L];$ otherwise, $t\notin I_l$ for all $l\in[L].$ 

Among others, we use the partition with the smallest $L,$ which means $\max_{t\in I_l} t+1<\min_{t\in I_{l+1}} t$ (as otherwise, we could combine $I_l$ and $I_{l+1}$). W.l.o.g., we assume the cardinality of each $I_l$ is at most $\min\{M_0,M\},$ \ie, $|I_l|\leq M/3$ for all $l\in[L]$ (or we only consider the last $\min\{M_0,M\}$ time steps for each $I_l$). 

Now, we can easily see that all the offered price $p_2$ will be followed by a $p_1$ within $M$ time steps and hence incurs refund of order $\Omega((p_2-p_1)\min\{M_0,M\}).$    
  
\subsection{Comparisons to Classic Lower Bound Proof Techniques}\label{sec:lb_remark}
Compared to classic proofs for regret lower bounds (see, \eg, theorem 15.2 of \cite{LS18} or theorem 5.1 of \cite{ABFS02}), our proof critically leverages the fact that, when the demand under $p_1$ is completely deterministic, whether $p_2$ would be selected by more than $X$ number of times is completely determined by the first $X$ samples of $D_t(p_2)\,.$ Hence, we can work with a reduced sample space instead of the complete sample space (as theorem 15.2 of \cite{LS18} or theorem 5.1 of \cite{ABFS02}).

One immediate benefit of this is that when we apply the Bretagnolle–Huber inequality in \eqref{eq:lb01} (and \eqref{eq:lb03}), we are able to obtain lower bounds of $\Theta(\exp(M\log_e(1-M^{-1})))$ (and $\Theta(\exp(T^{2/3}\log_e(1-T^{-2/3})))$) instead of the much looser 
$\Theta(\exp(T\log_e(1-M^{-1})))$ (and $\Theta(\exp(T\log_e(1-T^{-2/3})))$) were we working with the complete sample space. The latter cannot hopefully give us the desired lower bound. To the best of our knowledge, this reduced sample space technique was first presented in section 2.4 of \cite{Slivkins19}.

\section{Proof of \cref{prop:two_price_ub1}}\label{sec:prop:two_price_ub1}
We first note that
\begin{align*}
    u_B=a^{2-2^{1-B}}=(e\sqrt{T})^{2-2^{1-\log_2\log(T)}}=\frac{e^2T}{(e\sqrt{T})^{2/\log(T)}}=\frac{e^2T}{e^{2\log(e\sqrt{T})/\log(T)}}\,,
\end{align*}
which indicates that $u_B\geq\frac{e^2T}{e^2}=T\,.$ Therefore, 
$t_B=\min\{T,\lceil u_B\rceil\}\geq u_B\geq T\,.$ That is to say, there are at most $B=\log_2\log(T)$ many times of selecting $p_1$ after $p_2$ (upon the beginning or the half point of each phase). Therefore, the price protection mechanism would incur at most $O(M\log\log(T))$ regret.

To proceed, we assume w.l.o.g. that $p_1$ is the optimal price, \ie, $\lambda_1\geq\lambda_2\,,$ and denote $$\Delta = \lambda_1 - \lambda_2\,,$$
as the difference in expected reward between the two prices. We consider the case where $\Delta\geq\gamma$ for some $\gamma\geq \sqrt{e/T}$ to be specified (see theorem 3.1 of \cite{AO10}) as when $\Delta<\gamma,$ the expected regret is at most $T\gamma\,.$

Let $l^*$ be the smallest $l$ such that $\tilde{\Delta}_l< \Delta/2\,,$ \ie,
\begin{align*}
    l=\argmin_{l}\left(\tilde{\Delta}_l<\frac{\Delta}{2}\right)\,.
\end{align*}
We would have $\tilde{\Delta}_{l^*}\geq \Delta/4$ (as $2\tilde{\Delta}_{l^*}=\tilde{\Delta}_{l^*-1}\geq \Delta/2$) and
\begin{align}\label{eq:property_l_star}
    \sqrt{\frac{\log(T\tilde{\Delta}^2_{l^*})}{2n_{l^*}}}=\sqrt{\frac{\log(T\tilde{\Delta}^2_{l^*})}{2\lceil2\log(T\tilde{\Delta}^2_{l^*})/\tilde{\Delta}^2_{l^*}\rceil}}\leq \sqrt{\frac{\log(T\tilde{\Delta}^2_{l^*})}{4\log(T\tilde{\Delta}^2_{l^*})/\tilde{\Delta}^2_{l^*}}}< \frac{\tilde{\Delta}_{l^*}}{2}=\frac{\Delta}{4}\,.
\end{align}
We let $t^*$ be the first time step where both $N_1(t)$ and $N_2(t)$ exceeds $n_{l^*}\,,$ \ie,
\begin{align}
    t^*=\argmin_{t} (N_1(t)\geq n_{l^*}\text{ and }N_2(t)\geq n_{l^*})\,.
\end{align}
We remark that $t^*$ can be much larger than $2n_{l^*}$ due to the exploration schedule is implemented based on $t_b$'s. To proceed, we consider two different cases, depending on whether either price is removed before the beginning of time step $t^*\,:$

\noindent\textbf{Case A. Both $p_1$ and $p_2$ are not eliminated until time step $t^*$:} The proof of this case is similar to that of \cite{AO10}, but we include it here for completeness. In this case, both $p_1$ and $p_2$ would have been selected for $n_{l^*}$ times. To capture this, we define the event 
\begin{align*}
    E = \left\{\left|\hat{\lambda}_k(t^*)-\lambda_k\right|\leq\sqrt{\frac{\log(T\tilde{\Delta}^2_{l^*})}{2N_k(t^*)}}\leq\sqrt{\frac{\log(T\tilde{\Delta}^2_{l^*})}{2n_{l^*}}}~\forall~k\in\{1\,,\ 2\}\right\}\,.
\end{align*}
By Hoeffding's inequality (see, \eg, (5.6) and (5.7) in \cite{LS18}) and the fact that $p_kD_t(p_k)\in[0\,,\,1]$ is 1/4-subGaussian, we would have
\begin{align}\label{eq:event_e_prob}
    \Pr(E)\geq 1-\frac{2}{T\tilde{\Delta}^2_{l^*}}\,.
\end{align}
Conditioned on $E\,,$ we have
\begin{align*}
    \hat{\lambda}_2(t^*)+\sqrt{\frac{\log(T\tilde{\Delta}^2_{l^*})}{2n_{l^*}}}\leq&\lambda_2+\sqrt{\frac{2\log(T\tilde{\Delta}^2_{l^*})}{n_l^*}}\\
    <&\lambda_2+\Delta-\sqrt{\frac{2\log(T\tilde{\Delta}^2_{l^*})}{n_l^*}}\\
    =&\lambda_1-\sqrt{\frac{2\log(T\tilde{\Delta}^2_{l^*})}{n_l^*}}\leq\hat{\lambda}_1(t^*)-\sqrt{\frac{\log(T\tilde{\Delta}^2_{l^*})}{2n_l^*}}\,,
\end{align*}
where the second inequality follows from \eqref{eq:property_l_star} and all the others are by $E\,.$ In other words, event $E$ would indicate that $p_2$ would be removed after phase $b^*\,;$ Otherwise, it must be that $E$ does not hold, which 
happens with probability at most $2/(T\tilde{\Delta}^2_{l^*})$ according to \eqref{eq:event_e_prob}. Consequently, the contribution to expected regret of this case (we pessimistically upper bound the per round regret as $\Delta$) is at most 
\begin{align}\label{eq:two_price_ub01}
    \nonumber&\regret(\gdp\mid p_1\,,\,p_2\text{ remain until  }t^*)\Pr(p_1\,,\,p_2\text{ remain until }t^*)\\
    \nonumber\leq&\regret(\gdp\mid p_1\,,\,p_2\text{ remain until  }t^*)\Pr(p_1\,,\,p_2\text{ not eliminated at  }(l^*)^{\text{th}}\text{ test})\\
    \leq& T\Delta\Pr(\neg E)\leq T\Delta\frac{32}{\Delta^2}\leq\frac{32}{\Delta}\,.
\end{align}
Here, we slightly overload the notation by using $\regret(\pi\mid Z)$ to denote the conditional expected regret of $\pi$ under event $Z\,.$ 

\noindent\textbf{Case B. Either $p_1$ or $p_2$ is eliminated before time step $t^*$:} In this case, we further distinguish two different scenarios (note that \gdp~cannot remove both $p_1$ and $p_2$ when there are only two prices):
\begin{itemize}
    \item \textbf{Case B1. $p_1$ is eliminated before time step $t^*$: } The proof of this case is again similar to that of \cite{AO10}. Suppose $p_1$ is eliminated before time step $t^*\,,$ then there must exists $l<l^*\,,$ such that 
    \begin{align}\label{eq:l_prop1}
        \hat{\lambda}_2(t^{(l)})-\hat{\lambda}_1(t^{(l)})>\sqrt{\frac{2\log(T\tilde{\Delta}^2_l)}{n_l}}\quad\Rightarrow\quad\hat{\lambda}_2(t^{(l)})-\sqrt{\frac{\log(T\tilde{\Delta}^2_{l})}{2n_{l}}}>\hat{\lambda}_1(t^{(l)})+\sqrt{\frac{\log(T\tilde{\Delta}^2_{l})}{2n_{l}}}\,,
    \end{align}
    where $t^{(l)}$ is the first time step when both $N_1(t)$ and $N_2(t)$ are at least $n_l\,.$ To this end, we define the event,
    \begin{align*}
    E_l = \left\{\left|\hat{\lambda}_k(t^{(l)})-\lambda_k\right|\leq\sqrt{\frac{\log(T\tilde{\Delta}^2_{l})}{2N_k(t^{(l)})}}\leq\sqrt{\frac{\log(T\tilde{\Delta}^2_{l})}{2n_{l}}}~\forall~k\in\{1\,,\ 2\}\right\}\,.
    \end{align*}
    Again, from the Hoeffding's inequality, $\Pr(E_l)\geq1-2/(T\tilde{\Delta}^2_l)\,.$ Conditioned on $E_l\,,$
    \begin{align}\label{eq:l_prop2}
        \hat{\lambda}_1(t^{(l)})+\sqrt{\frac{\log(T\tilde{\Delta}^2_{l})}{2n_{l}}}\geq\lambda_1\,,\qquad \lambda_2\geq \hat{\lambda}_2(t^{(l)})-\sqrt{\frac{\log(T\tilde{\Delta}^2_{l})}{2n_{l}}}\,.
    \end{align}
    Combining \eqref{eq:l_prop1} and \eqref{eq:l_prop2}, this would lead to the contradiction that $\lambda_2>\lambda_1\,.$ In other words, if $p_1$ is eliminated after the $t^{(l)}\,,$ it must be that $E_l$ does not hold. 
    
    Summing this over all possible $l<l^*\,,$ the expected regret is at most (again, we pessimistically upper bound the per round regret as $\Delta$)
    \begin{align}\label{eq:two_price_ub02}
      \nonumber&\regret(\gdp\mid p_1\text{ eliminated before }t^*)\Pr(p_1\text{ eliminated before }t^*) \\
      \leq& T\Delta\sum_{l=1}^{l^*-1}\frac{2}{T\tilde{\Delta}^2_l}=\sum_{l=1}^{l^*-1}\frac{2\Delta}{\tilde{\Delta}^2_l}\leq\sum_{l=1}^{l^*-1}\frac{\tilde{\Delta}_{l}}{\tilde{\Delta}^2_l}=\sum_{l=1}^{l^*-1}\frac{1}{\tilde{\Delta}_l}\leq\frac{1}{\tilde{\Delta}_{l^*}}\leq\frac{2}{\Delta}\,,
    \end{align}
    where the second and last inequalities utilizes the definition of $l^*$ and the associated facts that $\tilde{\Delta}_1>\ldots>\tilde{\Delta}_{l^*-1}\geq \Delta/2$ and $\tilde{\Delta}_{l^*}<\Delta/2\leq 4\tilde{\Delta}_{l^*}\,.$

    \item \textbf{Case B2. $p_2$ is eliminated before time step $t^*$: } This case can be drastically different than \cite{AO10} since $t^*$ can be much larger than $2n_{l^*}\,.$ To deal with this, we are going to leverage our design that the price with high average reward in the previous phase would be selected first. 
    
    To formalize this, we let $b^*$ be the first phase such that $t_b\geq 2n_{l^*}$ \ie, $b^*=\argmin_{b}(t_b\geq 2n_{l^*})\,.$ Then, we would have $t^*\in[t_{b^*-1}+1\,,\,t_{b^*}]\,.$ Note that 
    \begin{align}\label{eq:two_price_ub03}
        \nonumber&\regret(\gdp\mid p_2\text{ eliminated before }t^*)\Pr(p_2\text{ eliminated before }t^*)\\
        \nonumber\leq&\Delta\E[N_2(t_{b^*-1}+1)]\\
        &+\Delta\E\left[N_2(t_{b^*-1}+1:t_{b^*}+1)\mid p_2\text{ eliminated before }t^*\right]\Pr(p_2\text{ eliminated before }t^*)\,,
    \end{align}
    where we overload the notation a bit by defining $N_2(t_1:t_2)=\sum_{t=t_1}^{t_2-1}\bm{1}[i_t=2]\,.$
    
    For the first term in \eqref{eq:two_price_ub03}, suppose $p_2$ is eliminated before phase $b^*\,,$ we must have that until the end of phase $b^*-1\,,$ \ie, $t=t_{b^*-1}\,,$ $p_2$ is selected by at most $2n_{l^*}$ times, and would incur an expected regret of 
    \begin{align}\label{eq:two_price_ub04}
        \Delta 2n_{l^*}=\Delta\left\lceil\frac{2\log(T\tilde{\Delta}^2_{l^*})}{\tilde{\Delta}^2_{l^*}}\right\rceil\leq2\Delta\frac{8\log(T\tilde{\Delta}^2_{l^*})}{\tilde{\Delta}^2_{l^*}}\leq \frac{128\log(T{\Delta}^2/4)}{\Delta}\leq\frac{128\log(T{\Delta}^2)}{\Delta}\,.
    \end{align} 

    For the second term in \eqref{eq:two_price_ub03}, we note that during phase $b^*\,,$ if $p_2$ is selected first in phase $b^*\,,$ it would be selected for at most $(t_{b^*}-t_{b^*-1})/2\leq t_{b^*}/2$ times; Otherwise, it would be selected for at most $n_{l^*}-N_2(t_{b^*-1}+1)\leq n_{l^*}$ times, \ie,
    \begin{align}\label{eq:b_star_decomp}
        \nonumber&\E\left[N_2(t_{b^*-1}+1:t_{b^*}+1)\mid p_2\text{ eliminated before }t^*\right]\Pr(p_2\text{ eliminated before }t^*)\\
        \nonumber\leq&\frac{t_{b^*}}{2}\Pr(p_2\text{ selected first in phase }b^*\mid p_2\text{ eliminated before }t^*)\Pr(p_2\text{ eliminated before }t^*)\\
        \nonumber&+n_{l^*}\Pr(p_2\text{ not selected first in phase }b^*\mid p_2\text{ eliminated before }t^*)\Pr(p_2\text{ eliminated before }t^*)\\
        \leq&\frac{t_{b^*}}{2}\Pr(p_2\text{ selected first in phase }b^*\,, \,p_2\text{ eliminated before }t^*)+n_{l^*}\,.
    \end{align}
    For the first term, recall that \gdp~would first select the price with higher empirical mean reward computed with data up to phase $b^*\,,$ \ie, $\argmax_{k\in\{1\,,\,2\}}~\hat{\lambda}_k(t_{b^*-1}+1)\,,$ we have
    \begin{align}\label{eq:b_star_decomp2}
        \nonumber&\Pr(p_2\text{ selected first in phase }b^*\,,\, p_2\text{ eliminated before }t^*)\\
        \nonumber=&\Pr(\hat{\lambda}_2(t_{b^*-1})\geq\hat{\lambda}_2(t_{b^*-1})\,,\, p_2\text{ eliminated during }[t_{b^*-1}+1\,,\,t^*])\\
        \nonumber\leq& \Pr(\hat{\lambda}_2(t_{b^*-1})\geq\hat{\lambda}_2(t_{b^*-1}))\\
        \leq&\Pr\left(\exists k\in\{1\,,\, 2\}~\left|\hat{\lambda}_k(t_{b^*-1}+1) - \lambda_k\right|\geq\frac{\Delta}{2}\right)\,,
 \end{align}
where the second inequality uses the fact that $\hat{\lambda}_2(t_{b^*-1}+1)\geq \hat{\lambda}_1(t_{b^*-1}+1)$ implies that $|\hat{\lambda}_k(t_{b^*-1}+1)-\lambda_k|\geq\Delta/2$ has to hold for at least one $k\in\{1\,,\, 2\}\,.$

Since each $p_kD_t(p_k)\in[0\,,\,1]$ and is thus $1/4$-subGaussian, and note that each price has accrued at $t_{b^*-1}/2$ pieces of historical data by the end of phase $b^*-1\,,$ we have that $\hat{\lambda}_k(t_{b^*-1}+1) - \lambda_k$ is $1/(2t_{b^*-1})$-subGaussian (see, \eg, corollary 1.7 of \cite{RH18}). From a standard deviation bound (see, \eg, Lemma 1.4 of \cite{RH18}) that for every $k\in\{1\,,\, 2\}$
\begin{align*}
    \E\left[\left|\hat{\lambda}_k(t_{b^*-1}+1) - \lambda_k\right|\right]\leq\sqrt{\frac{\pi}{t_{b^*-1}}}\,.
\end{align*}
Hence, by the Markov's inequality and a union bound, we have
\begin{align*}
\Pr\left(\exists k\in\{1\,,\, 2\}~\left|\hat{\lambda}_k(t_{b^*-1}+1) - \lambda_k\right|\geq\frac{\Delta}{2}\right)\leq& \frac{2\E\left[\left|\hat{\lambda}_k(t_{b^*-1}+1) - \lambda_k\right|\right]}{\Delta/2}\leq\frac{4\sqrt{\pi}}{\Delta\sqrt{t_{b^*-1}}}
\end{align*}
Putting this back together with \eqref{eq:b_star_decomp} and \eqref{eq:b_star_decomp2}, we have
\begin{align*}
    \nonumber&\E\left[N_2(t_{b^*-1}+1:t_{b^*}+1)\mid p_2\text{ eliminated before }t^*\right]\Pr(p_2\text{ eliminated before }t^*)\\
    \leq&\frac{2t_{b^*}\sqrt{\pi}}{\Delta\sqrt{t_{b^*-1}}}+n_{l^*}\leq\frac{a\sqrt{\pi}}{\Delta}+n_{l^*}=\frac{e\sqrt{\pi T}}{\Delta}+n_{l^*}\,,
\end{align*}
where we have use the definition of $t_b\,,$ \ie, \eqref{eq:def_u_t}), that $t_b^*=\lceil u_{b^*}\rceil =\lceil a\sqrt{u_{b^*-1}}\rceil \leq 2a\sqrt{t_{b^*}-1}$ and $a=e\sqrt{T}\,.$ Combining this with \eqref{eq:two_price_ub04}, we have that \eqref{eq:two_price_ub03} can be upper bounded as
\begin{align}\label{eq:two_price_ub05}
    \nonumber&\regret(\gdp\mid p_2\text{ eliminated before }t^*)\Pr(p_2\text{ eliminated before }t^*)\\
    \leq& e\sqrt{\pi T}+ \frac{192\log(T\Delta^2)}{\Delta}\,.
\end{align}
\end{itemize}

Summarizing \eqref{eq:two_price_ub01}, \eqref{eq:two_price_ub02},  \eqref{eq:two_price_ub05}, and the case when $\Delta\leq \gamma\,,$ we have that, the expected regret (excluding refund from price protection) of \gdp~is of order
\begin{align}
    O\left(T\Delta\bm{1}[\Delta\leq\gamma]+\left(\sqrt{T}+\frac{1+\log(T\Delta^2)}{\Delta}\right)\bm{1}[\Delta\geq\gamma]\right)\,.
\end{align}
By setting $\gamma=\Theta(1/\sqrt{T})\,,$ the expected regret (excluding refund from price protection) is at most $O(\sqrt{T})\,.$ The statement follows by further incorporating the expected regret from the refund.

\section{Proof of \cref{prop:two_price_ub2}}\label{sec:prop:two_price_ub2}

For the first $2N$ time periods, both prices are selected $N$ times. If $p_1$ is selected starting from time step $2N+1\,,$ the expected regret coming from the refund mechanism is at most $N_2(2N+1) = N=O(T^{2/3})\,.$

Now, suppose w.l.o.g. that price $p_1$ is the optimal price, and define
\begin{align}
    \Delta = \lambda_1 - \lambda_2\,.
\end{align}
The regret is thus $\Delta\E[N_2(T+1)]\,.$ 
Note that
\begin{align}\label{eq:n2_expect}
    \nonumber\E[N_2(T+1)] =& N+(T-2N)\Pr\left(\hat{\lambda}_2(2N+1)\geq \hat{\lambda}_1(2N+1)\right)\\
    \leq& N+T\Pr\left(\hat{\lambda}_2(2N+1)\geq \hat{\lambda}_1(2N+1)\right)\\
    \nonumber\leq& N+T\Pr\left(\exists k\in\{1\,,\, 2\}~\left|\hat{\lambda}_k(2N+1) - \lambda_k\right|\geq\frac{\Delta}{2}\right)\,,
\end{align}
where the second inequality uses the fact that $\hat{\lambda}_2(2N+1)\geq \hat{\lambda}_1(2N+1)$ implies that $|\hat{\lambda}_k(2N+1)-\lambda_k|\geq\Delta/2$ has to hold for at least one $k\in\{1\,,\, 2\}\,,$ we only need to upper bound the last term in \eqref{eq:n2_expect}.

Since each $p_kD_t(p_k)\in[0\,,\, 1]$ and is thus $1/4$-subGaussian, we have that $\hat{\lambda}_k(2N+1) - \lambda_k$ is $1/(4N)$-subGaussian (see, \eg, corollary 1.7 of \cite{RH18}). From a standard deviation bound (see, \eg, Lemma 1.4 of \cite{RH18}) that for every $k\in\{1\,,\ 2\}$
\begin{align*}
    \E\left[\left|\hat{\lambda}_k(2N+1) - \lambda_k\right|\right]\leq\frac{\sqrt{2\pi}}{2\sqrt{N}}\,.
\end{align*}
Hence, by the Markov's inequality and a union bound, we have
\begin{align*}
    \Pr\left(\exists k\in\{1\,,\, 2\}~\left|\hat{\lambda}_k(2N+1) - \lambda_k\right|\geq\frac{\Delta}{2}\right)\leq \frac{2\E\left[\left|\hat{\lambda}_k(2N+1) - \lambda_k\right|\right]}{\Delta/2}\leq\frac{2\sqrt{2\pi}}{\Delta\sqrt{N}}
\end{align*}
Putting this back to \eqref{eq:n2_expect}, we have
\begin{align*}
    \nonumber\E[N_2(T+1)]\leq N+\frac{2\sqrt{2\pi}T}{\Delta\sqrt{N}}\,.
\end{align*}
Consequently, the expected regret (excluding refund) is upper bounded as $$\Delta\E[N_2(T+1)]\leq N\Delta+\frac{2\sqrt{2\pi}T}{\sqrt{N}}=O(T^{2/3})\,.$$
The statement holds by combining this with the expected regret from refund.

\section{Proof of \cref{thm:k_price_lb}}\label{sec:thm:k_price_lb}
To facilitate our discussion, we further define $N_{-k}(t)$ be the number of times that price $p_k$ is not selected before time period $t$ (we notice that this is also equal to $t-1-N_k(t)$), \ie,
\begin{equation*}
	N_{-k}(t)=\sum_{s=1}^{t-1}1[i_s\neq k]~(=t-1-N_k(t))\,.
\end{equation*}

Similar to the proof of \cref{thm:two_price_lb}, we first establish a result regarding the refund mechanism, which shall be used extensively throughout the rest of the proof.
\begin{proposition}\label{prop:k_refund}
	Suppose $D_t(p_k)\geq \mu$ holds almost surely. Given a price protection period $M$, for any time step $T_0<T$, if the prices other than $p_1$ are selected at least $M_0$ times, \ie, $N_{-1}(T_0)\geq M_0$, and $p_1$ is selected during the time interval $[T_0+1,T]$, then the refund incurred is at least $(p_2-p_1)\min_{i\in[K]}\mu_i\min\{M,M_0\}$.
\end{proposition}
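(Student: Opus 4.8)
\emph{Proof plan.} The plan is to bound the total refund from below by exhibiting at least $\min\{M_0,M\}$ distinct non-$p_1$ customers, each refunded by at least $(p_2-p_1)\min_{i\in[K]}\mu_i$. Throughout I would use that the prices are ordered $p_1<p_2<\dots<p_K$ (as in the construction underlying \cref{thm:k_price_lb}), so that any selection other than $p_1$ is a price at least $p_2$ and $p_1=\min_k p_k$. The starting observation is the instantaneous one, exactly as in \cref{prop:refund}: a customer arriving at time $s$ with $i_s\neq 1$ pays $\min\{p_{i_s},\dots,p_{i_{s+M}}\}$, so if $p_1$ is chosen at some step in the protection window $[s+1,s+M]$, then this minimum equals $p_1$ and the customer's refund is $(p_{i_s}-p_1)D_s(p_{i_s})$, which is at least $(p_2-p_1)$ times the realized demand and hence at least $(p_2-p_1)\min_{i}\mu_i$ under the stated demand lower bound. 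It therefore suffices to produce $\min\{M_0,M\}$ non-$p_1$ customers whose protection window contains a $p_1$ selection.

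To organize the counting, let $t_1$ denote the first step in $[T_0+1,T]$ at which $p_1$ is chosen; this exists by hypothesis. I would partition the non-$p_1$ time steps lying in $[1,t_1-1]$ into maximal runs of consecutive non-$p_1$ selections, say $I_1,\dots,I_L$ with lengths $|I_l|$. The key structural claim — and the step I expect to be the main obstacle — is that \emph{every} such maximal run is immediately followed by a $p_1$ selection within the horizon. For a run that ends at some $b_l\le T_0-1$ this is automatic, since maximality forces $i_{b_l+1}=1$. The delicate case is the run containing the latest non-$p_1$ step before $t_1$: because no $p_1$ is chosen on $[T_0+1,t_1-1]$ by definition of $t_1$, this run extends all the way to $t_1-1$ and is therefore succeeded by the $p_1$ at $t_1\le T$. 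This is precisely where the hypothesis ``$p_1$ is selected in $[T_0+1,T]$'' is consumed, mirroring the run argument of \cref{prop:refund}. The only genuinely new point relative to the two-price case is that a maximal block may now consist of several distinct prices, all of which are at least $p_2$, so the refund accrued when the block is closed by $p_1$ is still at least $(p_2-p_1)\min_i\mu_i$ per customer.

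Granting the claim, for a run $I_l=[a_l,b_l]$ followed by $p_1$ at $b_l+1$, every customer at a step $s\in I_l$ with $s\ge b_l+1-M$ has $p_1$ inside $[s+1,s+M]$ and is thus refunded by at least $(p_2-p_1)\min_i\mu_i$; there are exactly $\min\{|I_l|,M\}$ such steps (the last $\min\{|I_l|,M\}$ of the run), and $b_l+1\le t_1\le T$ so the refunding $p_1$ lies inside the horizon. Summing over the disjoint runs (no customer is double-counted) and noting $\sum_l|I_l|=N_{-1}(t_1)\ge N_{-1}(T_0)\ge M_0$, the total refund is at least
\begin{equation*}
(p_2-p_1)\Big(\min_{i\in[K]}\mu_i\Big)\sum_{l=1}^{L}\min\{|I_l|,M\}\ \ge\ (p_2-p_1)\Big(\min_{i\in[K]}\mu_i\Big)\min\{M_0,M\},
\end{equation*}
where the last inequality is the elementary fact that $\sum_l\min\{x_l,M\}\ge\min\{\sum_l x_l,M\}$ for nonnegative $x_l$: if some $x_l\ge M$ the left-hand side already exceeds $M$, and otherwise every $\min\{x_l,M\}=x_l$ so the sum equals $\sum_l x_l$. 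This would give the desired bound.
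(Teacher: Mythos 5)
Your proof is correct and follows essentially the same route as the paper's: partition the non-$p_1$ selections into maximal runs, observe that each run is immediately closed by a $p_1$ selection, and count the last $\min\{|I_l|,M\}$ customers of each run, each refunded by at least $(p_2-p_1)$ times the demand lower bound. If anything, your version is more careful than the paper's own two-case argument: by defining the runs on $[1,t_1-1]$ (with $t_1$ the first $p_1$ selection after $T_0$) rather than on $[1,T_0-1]$, and by invoking $\sum_l\min\{|I_l|,M\}\geq\min\{\sum_l|I_l|,M\}$, you correctly handle the run straddling $T_0$ --- whose closing $p_1$ selection may occur far more than $M$ steps after $T_0$ --- a boundary case that the paper's claim that ``$p_1$ is selected within $M$ time steps'' of every non-$p_1$ step silently glosses over.
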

The proof of this result is provided in \cref{sec:prop:k_refund}. Depending on the value of $M$, we distinguish three different cases.
		
\noindent\textbf{Case 1. $M\leq \sqrt{KT}\,:$} For this case, one can use the standard lower bound argument for the multi-armed bandit (see, \eg, theorem 5.1 of \cite{ABFS02}) to show that the lower bound is $\Omega(\sqrt{kT})$.
		
\noindent\textbf{Case 2. $\sqrt{KT}< M< K^{1/3}T^{2/3}\,:$} For this case, we proceed according to the following steps.
\begin{itemize}
	\item \textbf{Step 1. Indistinguishable Instances:} We first construct the following baseline instance, \ie, instance 1. The demand parameters in instance 1 satisfy:
	\begin{align*}
		\textbf{Instance 1: } p_1\mu_1-\epsilon K^{1/2}M^{-1/2}=p_2\mu_2^{(1)}=p_3\mu_3^{(1)}=\dots=p_K\mu_K^{(1)}=\dfrac{1}{2}\,.
	\end{align*}
	Then we construct other instances based on the instance 1. For each $i\in [K]\setminus\{1\}$, we keep $\mu_1$ unchanged and set the demand parameters of instance $i$ as follows:
	\begin{align*}				\textbf{Instance $i$: }\mu_j^{(i)}=\mu_j^{(1)}\ \forall j\neq i\,,\ p_i\mu_i^{(i)}=p_1\mu_1^{(i)}+\epsilon K^{1/2}M^{-1/2}=p_i\mu_i^{(1)}+2\epsilon K^{1/2}M^{-1/2}\,.
	\end{align*}
	For instance $i$, setting the price to be $p_1$ will deterministically bring a payment of $p_1\mu_1$, while selecting $p_j$ for $j\neq 1$ will lead to a random instantaneous payment following a Bernoulli distribution with support on $1/3$ and $2/3$ and mean $p_j\mu_j^{(i)}$ (we require $p_2\geq 2/3$ so that reward $2/3$ is attainable with some demand in $[0,1]$).
			
	In all of the instances, we choose $\epsilon=1/10$. We also enforce that there exists two constants $c_1,c_2\geq 1/10$ satisfying
	\begin{equation*}
		p_2-p_1\geq c_1\,,\qquad D_t(p_k)\geq c_2,\ \text{a.s.}\, \forall k\in[K].
	\end{equation*}
	\item \textbf{Step 2. Key Event and Sample Space:} For any policy $\pi,$ to determine if $\pi$ has selected each price $p_k\neq p_1$ for sufficiently many times, we define $T_1=T+1-K^{-1/2}M^{3/2}/2$ and the event $E_1$ on every price $p_k\neq p_1$ is selected for at least $M/6$ times in total before time step $T_1$, \ie,
	\begin{equation*}
		E_1=\left\{N_{-1}(T_1)\geq\dfrac{M}{6}\right\}.
	\end{equation*} 
	Note that the premise of this case imposes that $M<K^{1/3}T^{2/3}$, we thus have that $K^{-1/2}M^{3/2}<T$, therefore $T_1>T/2>M/6>0$. Our definitions of $T_1$ and $E_1$ are valid.
			
	Associated with these is a sequence of sample space/history (for $\pi$):
	\begin{equation*}	   	S=\prod_{u=1}^{T}([K]\cup\{\infty\},[0,1]\cup\{\infty\})\,,
	\end{equation*}
	which records the price selected $i_u$ and the realized payment $R_u$ (does not include refund) in each time period. We further require that if for some time step $s\in[T]$, $N_{-1}(s)\geq \lfloor M/6\rfloor$, then $i_u=R_u=\infty$ for all $u>s$. Therefore, the sample space only contains information before the $p_2\,,\ldots\,,p_K$ are selected less than $M/6$ times in total.
			
	We remark that since $\pi$ is non-anticipatory and there is no randomness when $p_1$ is selected, whether $E_1$ holds or not is completely determined by the first $M/6$ times when the prices other than $p_1$ are selected, i.e., $E_1$ can be interpreted as a subset of $S$. Throughout the rest of the proof, any probability or expectation is defined on $S$ and we use $\text{Pr}^{(j)}_S(\cdot)$ and $\mathbb{E}_S^{(j)}(\cdot)$ to denote the probability and expectation w.r.t. instance $j$.
	
	\item \textbf{Step 3. Complete the Proof:} 
	We first make two observations
	\begin{enumerate}
	    \item Suppose $E_1$ holds, then for instance 1, if $p_1$ is never chosen in the last $T-T_1=K^{-1/2}M^{3/2}/2$ time steps, then the regret incurred by falsely selecting $p_k\neq p_1$ would be
	    $$\frac{K^{-1/2}M^{3/2}}{2}\epsilon K^{1/2}M^{-1/2}=\Omega(M)\,;$$ Otherwise, \cref{prop:k_refund} indicates that the refund incurred will be at least $c_1c_2M/6=\Omega(M)\,;$
	    \item Suppose $E_1$ does not hold, then for any instance $k(~\neq 1)\,,$ we have that the regret incurred by falsely selecting $p_1$ is at least
	    $$\left(T_1-\frac{M}{6}\right)\epsilon K^{1/2}M^{-1/2}\geq\left(\frac{T}{2}-\frac{M}{6}\right)\epsilon K^{1/2}M^{-1/2}\geq\frac{TK^{1/2}}{30M^{1/2}}=\Omega(M)\,,$$
	    where we use multiple times of the premise that $K^{1/3}T^{2/3}\geq M$ and hence $T\geq K^{-1/2}M^{3/2}\,.$
	\end{enumerate}
	Combining the above, we know that for any $k\neq 1,$ the regret of $\pi$ on instance 1 and instance $k$ is at least
	\begin{align}
	    \Omega(M(\Pr^{(1)}_S(E_1)+\Pr^{(k)}_S(\neg E_1)))\,.\label{eq:lb04}
	\end{align}
	
	We then try to show that $\Pr^{(1)}_S(E_1)+\Pr^{(k)}_S(\neg E_1)=\Omega(1)$ for at least one $k\in[K]\setminus\{1\}.$ By the Bretagnolle-Huber inequality (see, \eg, theorem 14.2 in \cite{LS18}), we have that for any $k\in[K]\setminus\{1\}\,,$
	\begin{equation}			\text{Pr}_S^{(1)}(E_1)+\text{Pr}_S^{(k)}(\neg E_1)\geq \exp\left(-\kl(\text{Pr}_S^{(1)}||\text{Pr}_S^{(k)})\right)\,.\label{eq:lb05}
	\end{equation}
	Defining $Q_k(r)$ as the number of times that $p_k$ is selected among the first $r$ times that prices $p_2,\ldots,p_K$ are selected, \ie,
	\begin{align}
		Q_k(r)=\sum_{s=1}^T\bm{1}[i_s=k\,,\ N_{-1}(s+1)\leq r]\,,\label{eq:def_k}
	\end{align}
	we can also show the following proposition by choosing $r=M/6\,:$
	\begin{proposition}\label{prop:chain_rule}
	    For any $k\in[K]\setminus \{1\}$, we have
		\begin{align*}
		\kl(\text{Pr}^{(1)}_S||\text{Pr}^{(k)}_S)=\E^{(1)}_S\left[Q_k\left(\dfrac{M}{6}\right)\right]\kl\left(\text{Bernoulli}(p_k\mu_k^{(1)})||\text{Bernoulli}(p_k\mu_k^{(k)})\right)\,.
		\end{align*}
	\end{proposition}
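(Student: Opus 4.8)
The plan is to prove this as a divergence-decomposition (chain rule) identity for Kullback–Leibler divergence, adapted to the truncated sample space $S$. First I would write the likelihood of a recorded trajectory $\omega=(i_1,R_1,i_2,R_2,\ldots)$ under instance $j$ as a product over the rounds preceding the recording cutoff. Because $\pi$ is non-anticipatory, each such round contributes a policy factor $\pi_u(i_u\mid H_u)$ — which depends only on the observed history and is therefore identical across instances — times a reward factor $f^{(j)}_{i_u}(R_u)$, where $f^{(j)}_i$ denotes the payment law of price $p_i$ under instance $j$. The crucial structural fact is that instance $1$ and instance $k$ share identical payment laws for every price except $p_k$, and that $p_1$ is deterministic; hence in the log-likelihood ratio $\log\!\big(d\Pr^{(1)}_S/d\Pr^{(k)}_S\big)$ all policy factors and all reward factors for prices other than $p_k$ cancel exactly.

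Next I would apply the chain rule $\kl(\Pr^{(1)}_S\|\Pr^{(k)}_S)=\E^{(1)}_S\!\big[\log(d\Pr^{(1)}_S/d\Pr^{(k)}_S)\big]$ and collect the surviving terms, which are precisely the per-round log-ratios of $p_k$'s reward in the recorded rounds with $i_u=k$. By the construction of $S$, recording halts the first time the non-$p_1$ prices have been selected $\lfloor M/6\rfloor$ times, so the recorded rounds with $i_u=k$ are exactly those $p_k$-selections occurring among the first $M/6$ selections of $p_2,\ldots,p_K$; their number is $Q_k(M/6)$ as defined in \eqref{eq:def_k}. Since the per-round divergence $\kl(\text{Bernoulli}(p_k\mu_k^{(1)})\|\text{Bernoulli}(p_k\mu_k^{(k)}))$ is a constant not depending on the history, I can factor it out of the expectation, leaving $\E^{(1)}_S[Q_k(M/6)]$ times this constant, which is exactly the claimed identity.

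The main obstacle is the bookkeeping around the stopping rule, since the number of surviving reward terms is itself random rather than the fixed horizon of the standard divergence decomposition (Lemma 15.1 of \cite{LS18}). I must verify that the cutoff event $\{N_{-1}(s)\ge\lfloor M/6\rfloor\}$ is measurable with respect to the information recorded up to round $s$ (it is, depending only on the $i_u$'s), that the selection probabilities agree under both instances up to this stopping time (true because the policy factors cancel and the two instances are statistically indistinguishable until $p_k$ is actually sampled), and that interchanging the sum over rounds with the expectation is legitimate. This amounts to a Wald-type identity for KL divergence under an adapted stopping rule; establishing it carefully — rather than invoking the fixed-horizon decomposition verbatim — is the one place where the truncated sample space genuinely requires extra argument.
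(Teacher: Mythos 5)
Your proposal is correct and follows essentially the same route as the paper: factorize the trajectory likelihoods, cancel the (instance-independent, non-anticipatory) policy factors and the reward factors of every price other than $p_k$, and reduce the divergence to the expected number of recorded $p_k$-selections, $\E^{(1)}_S[Q_k(M/6)]$, times the per-sample Bernoulli KL. The ``Wald-type'' obstacle you flag as the main remaining difficulty is dissolved rather than confronted in the paper's formulation: because $S$ pads every trajectory to a fixed length $T$ by recording $(\infty,\infty)$ after the cutoff, the fixed-horizon chain rule for KL divergence (theorem 2.5.3 of \cite{CoverT06}) applies verbatim, with post-cutoff rounds contributing exactly zero conditional divergence, so no separate optional-stopping argument is needed.
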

	The proof of this proposition is provided in \cref{sec:prop:chain_rule}. To apply this result, we note that for any possible history, we have
	\begin{align*}
		\sum_{k=2}^K Q_k\left(\dfrac{M}{6}\right)=\sum_{s=1}^T \bm{1}\left[i_s\neq 1,\ N_{-1}(s+1)\leq \dfrac{M}{6}\right]\leq \dfrac{M}{6}\,.
	\end{align*}
	Taking the expectation on instance 1, we further have
	\begin{equation*}
		\sum_{k=2}^K \E_S^{(1)}\left[Q_k\left(\dfrac{M}{6}\right)\right]\leq \dfrac{M}{6}\,.
	\end{equation*}
	To this end, we define $i^*$ as the least selected price among $p_2\,,\ldots\,,p_K$ in the first $M/6$ time steps when $p_1$ is not selected, \ie, $i^*=\argmin_{k\in[K]\setminus\{1\}}\E_S^{(1)}[Q_k(M/6)]\,,$ then, 
	\begin{equation*}			\E_S^{(1)}\left[Q_{i^*}\left(\dfrac{M}{6}\right)\right]\leq \dfrac{1}{K-1}\sum_{k=2}^K \E_S^{(1)}\left[Q_k\left(\dfrac{M}{6}\right)\right]\leq \frac{M}{6(K-1)}\leq \frac{M}{3K}.
	\end{equation*}
	Therefore, following \eqref{eq:lb05} and \cref{prop:chain_rule}, we have
	\begin{align}
	    \nonumber\Pr_S^{(1)}(E_1)+\Pr_S^{(i^*)}\geq&\exp\left(-\E^{(1)}_S\left[Q_k\left(\dfrac{M}{6}\right)\right]\kl\left(\text{Bernoulli}(p_{i^*}\mu_{i^*}^{(1)})||\text{Bernoulli}(p_{i^*}\mu_{i^*}^{(i^*)})\right)\right)\\
	    \label{eq:lb06}\geq&\exp\left(-\frac{M}{6K}\log_e\left(1-\frac{16\epsilon^2K}{M}\right)\right)\geq\exp(-1)\,,
	\end{align}
	where the last step uses the fact that for $x\in[0,1/4],$ $\log_e(1-x)\geq-4\log_e(4/3)x\geq-4x$ and $M\geq \sqrt{KT}\geq K\,.$
	
	Combining \eqref{eq:lb04} and \eqref{eq:lb06}, the expected regret of $\pi$ on the instance 1 and instance $i^*$ sum up to $\Omega(M),$ which indicates that $\pi$ has to incur $\Omega(M)$ expected regret in at least one of them.
\end{itemize}
	
\noindent\textbf{Case 3. $M>K^{1/3}T^{2/3}\,:$} For this case, we proceed according to the following steps.
\begin{itemize}
	\item \textbf{Step 1. Indistinguishable Instances:} We first construct the following baseline instance (instance 1). The demand parameters in instance 1 are as follows:
	\begin{equation*}
		\textbf{Instance 1: }p_1\mu^{(1)}_1-\epsilon K^{1/3}T^{-1/3}=p_2\mu_2^{(1)}=p_3\mu_3^{(1)}=\dots=p_K\mu_K^{(1)}=\dfrac{1}{2}.
	\end{equation*}
		
	Then we construct other instances based on the baseline instance. For each $i\in[K]\setminus\{1\}\,,$ we set the demand parameters of instance $i$ as follows:
	\begin{equation*}
		\textbf{Instance $i$: }\mu_j^{(i)}=\mu_j^{(1)}\ \forall j\neq i\,,\ p_i\mu_i^{(i)}=p_1\mu_1^{(i)}+\epsilon K^{1/3}M^{-1/3}=p_i\mu_i^{(1)}+2\epsilon K^{1/3}T^{-1/3}\,.
	\end{equation*}
	For instance $i$, setting the price to be $p_1$ will deterministically bring a revenue of $p_1\mu^{(i)}_1$, while selecting $p_j$ for $j\neq 1$ will lead to a reward following a Bernoulli distribution with mean $p_j\mu_j^{(i)}$.
			
	In all of the instances, we choose $\epsilon=1/10$. We also enforce that there exists two constants $c_1,c_2\geq 1/10$ satisfying
	\begin{equation*}
		p_2-p_1\geq c_1;\ \mu_1,\mu_i^{(j)}\geq c_2,\ \forall i,j\in[k].
	\end{equation*}
	\item \textbf{Step 2. Key Event and Sample Space:} For any policy $\pi,$ to determine if $\pi$ has selected each price $p_k\neq p_1$ for sufficiently many times, we define $T_1=T/2+1$ and the event $E_1$ on every price $p_k\neq p_1$ is selected for at least $K^{1/3}T^{2/3}/6$ times in total before time period $T_1$, \ie,
	\begin{equation*}
		E_1=\left\{N_{-1}(T_1)\geq\dfrac{K^{1/3}T^{2/3}}{6}\right\}.
	\end{equation*} 
	Note that the premise of this case imposes that $K<T$, we thus have that $K^{1/3}T^{2/3}<T$, therefore $T_1>T/2>K^{1/3}T^{2/3}/6>0\,.$ Our definitions of $T_1$ and $E_1$ are valid.
			
	Associated with these is a sequence of sample space/history (for $\pi$):
	\begin{equation*}	   	S=\prod_{u=1}^{T}([K]\cup\{\infty\},[0,1]\cup\{\infty\})\,,
	\end{equation*}
	which records the price selected $i_u$ and the realized payment $R_u$ (does not include refund) in each time period. We further require that if for some time step $s\in[T]$, $N_{-1}(s)\geq \lfloor K^{1/3}T^{2/3}/6\rfloor$, then $i_u=R_u=\infty$ for all $u>s$. Therefore, the sample space only contains information before the $p_2\,,\ldots\,,p_K$ are selected less than $K^{1/3}T^{2/3}/6$ times in total.
			
	We remark that since $\pi$ is non-anticipatory and there is no randomness when $p_1$ is selected, whether $E_1$ holds or not is completely determined by the first $K^{1/3}T^{2/3}/6$ times when the prices other than $p_1$ are selected, \ie, $E_1$ can be interpreted as a subset of $S$. Throughout the rest of the proof, any probability or expectation is defined on $S$ and we use $\text{Pr}^{(j)}_S(\cdot)$ and $\mathbb{E}_S^{(j)}(\cdot)$ to denote the probability and expectation w.r.t. instance $j$.
			
	\item \textbf{Step 3. Complete the Proof:}
	We first make two observations
	\begin{enumerate}
	    \item Suppose $E_1$ holds, then for instance 1, if $p_1$ is never chosen in the last $T-T_1=T/2$ time steps, then the regret incurred by falsely selecting $p_k\neq p_1$ would be
	    $$\frac{T}{2}\epsilon K^{1/3}T^{-1/3}=\Omega(K^{1/3}T^{2/3})\,;$$ Otherwise, \cref{prop:k_refund} indicates that the refund incurred will be at least $c_1c_2K^{1/3}T^{2/3}/6=\Omega(K^{1/3}T^{2/3})\,;$
	    \item Suppose $E_1$ does not hold, then for any instance $k(~\neq 1)\,,$ we have that the regret incurred by falsely selecting $p_1$ is at least
	    $$\left(T_1-\frac{K^{1/3}T^{2/3}}{6}\right)\epsilon K^{1/3}T^{-1/3}\geq\left(\frac{T}{2}-\frac{T}{6}\right)\epsilon K^{1/3}T^{-1/3}\geq\frac{K^{1/3}T}{30T^{1/3}}=\Omega(K^{1/3}T^{2/3})\,,$$
	    where we use multiple times of the premise that $K^{1/3}T^{2/3}\leq T\,.$
	\end{enumerate}
	Combining the above, we know that for any $k\neq 1,$ the regret of $\pi$ on instance 1 and instance $k$ is at least
	\begin{align}
	    \Omega(M(\Pr^{(1)}_S(E_1)+\Pr^{(k)}_S(\neg E_1)))\,.\label{eq:lb07}
	\end{align}
	
	We then try to show that $\Pr^{(1)}_S(E_1)+\Pr^{(k)}_S(\neg E_1)=\Omega(1)$ for at least one $k\in[K]\setminus\{1\}.$ By the Bretagnolle-Huber inequality (see, \eg, theorem 14.2 in \cite{LS18}), we have that for any $k\in[K]\setminus\{1\}\,,$
	\begin{equation}			\Pr_S^{(1)}(E_1)+\Pr_S^{(k)}(\neg E_1)\geq \exp\left(-\kl(\Pr_S^{(1)}||\Pr_S^{(k)})\right)\,.\label{eq:lb08}
	\end{equation}
	We still use the notation $Q_k(r)$ defined in \eqref{eq:def_k}. Similar to Proposition \ref{prop:chain_rule}, we can also show the following proposition by choosing $r=K^{1/3}T^{2/3}/6\,:$
	\begin{proposition}
	    For any $k\in[K]\setminus \{1\}$, we have
		\begin{align*}
		\kl(\Pr^{(1)}_S||\Pr^{(k)}_S)=\E^{(1)}_S\left[Q_k\left(\dfrac{K^{1/3}T^{2/3}}{6}\right)\right]\kl\left(\text{Bernoulli}(p_k\mu_k^{(1)})||\text{Bernoulli}(p_k\mu_k^{(k)})\right)\,.
		\end{align*}
	\end{proposition}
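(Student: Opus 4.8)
The plan is to establish this identity by exactly the same argument used for \cref{prop:chain_rule}, simply replacing the truncation threshold $r = M/6$ there by $r = K^{1/3}T^{2/3}/6$ here; what makes the argument transfer verbatim is that instance $1$ and instance $k$ are constructed to differ \emph{only} in the reward law of price $p_k$. First I would invoke the chain rule for KL divergence over the sequentially generated history. Writing the measure on $S$ as a product of one-step conditional laws (the policy's action law followed by the realized payment law), the divergence telescopes into a sum, over time steps $s$, of the expected (under $\Pr^{(1)}_S$) conditional KL divergence between the one-step laws of the two instances given the history $H_s$.

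The key observation is that every step other than a $p_k$-selecting step contributes nothing to this sum. The policy $\pi$ is non-anticipatory and identical across the two instances, so conditioned on $H_s$ the action-selection distributions coincide and contribute zero; when $p_1$ is chosen the payment is deterministic and identical in both instances; and when some $p_j$ with $j \notin \{1,k\}$ is chosen the Bernoulli parameter $p_j\mu_j^{(1)} = p_j\mu_j^{(k)}$ is unchanged, again contributing zero. Only when $i_s = k$ does the step contribute, and then the contribution is precisely $\kl(\text{Bernoulli}(p_k\mu_k^{(1)})\|\text{Bernoulli}(p_k\mu_k^{(k)}))$, a constant independent of $s$.

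Consequently the sum collapses to this per-step constant multiplied by the expected number of recorded steps on which $p_k$ is selected. By construction the sample space $S$ freezes (sets $i_u = R_u = \infty$) once the non-$p_1$ prices have been chosen $\lfloor K^{1/3}T^{2/3}/6 \rfloor$ times in total, so the number of recorded $p_k$-selections is exactly $Q_k(K^{1/3}T^{2/3}/6)$ as defined in \eqref{eq:def_k}. Taking the expectation under instance $1$ extracts the factor $\E^{(1)}_S[Q_k(K^{1/3}T^{2/3}/6)]$ and yields the claimed identity.

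The step I expect to require the most care is making the telescoping decomposition rigorous in the presence of the random truncation: because recording stops at a data-dependent stopping time (the first moment $N_{-1}$ reaches the threshold), one must verify that the stopped count $Q_k(\cdot)$ is the correct random variable appearing in the expectation, namely that the chain rule applied to the truncated product measure produces $\E^{(1)}_S[Q_k(\cdot)]$ rather than $\E^{(1)}_S[N_k(T)]$. This is handled as in \cref{prop:chain_rule} by a Wald-type optional-stopping argument, using that whether the threshold has been reached is $H_s$-measurable and that the per-step KL contribution vanishes on the frozen steps.
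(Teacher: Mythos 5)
Your proposal is correct and matches the paper's treatment: the paper itself omits this proof, noting it is identical to that of \cref{prop:chain_rule} with the truncation threshold $M/6$ replaced by $K^{1/3}T^{2/3}/6$, and your argument (chain rule for KL, vanishing contributions from the identical policy law, the deterministic $p_1$ payments, and the shared Bernoulli laws for $j\notin\{1,k\}$, then collapsing the sum to the expected recorded count $\E^{(1)}_S[Q_k(\cdot)]$ via the freezing convention) is precisely that proof. Your closing remark on the data-dependent truncation is also handled exactly as the paper does it, since frozen steps place unit mass on $(\infty,\infty)$ under both instances and hence contribute zero conditional KL.
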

	The proof of this proposition is very similar to that of Proposition \ref{prop:chain_rule} and is thus omitted. To apply this result, we note that for any possible history, we have
	\begin{align*}
		\sum_{k=2}^K Q_k\left(\dfrac{K^{1/3}T^{2/3}}{6}\right)=\sum_{s=1}^T \bm{1}\left[i_s\neq 1,\ N_{-1}(s+1)\leq \dfrac{K^{1/3}T^{2/3}}{6}\right]\leq \dfrac{K^{1/3}T^{2/3}}{6}\,.
	\end{align*}
	Taking the expectation w.r.t. instance 1, we further have
	\begin{equation*}
		\sum_{k=2}^K \E_S^{(1)}\left[Q_k\left(\dfrac{K^{1/3}T^{2/3}}{6}\right)\right]\leq \dfrac{K^{1/3}T^{2/3}}{6}\,.
	\end{equation*}
	To this end, we define $i^*$ as the least selected price among $p_2\,,\ldots\,,p_K$ in the first $K^{1/3}T^{2/3}/6$ time steps when $p_1$ is not selected, \ie, $i^*=\argmin_{k\in[K]\setminus\{1\}}\E_S^{(1)}[Q_k(K^{1/3}T^{2/3}/6)]\,,$ then, 
	\begin{equation*}			\E_S^{(1)}\left[Q_{i^*}\left(\dfrac{K^{1/3}T^{2/3}}{6}\right)\right]\leq \dfrac{1}{K-1}\sum_{k=2}^K \E_S^{(1)}\left[Q_k\left(\dfrac{K^{1/3}T^{2/3}}{6}\right)\right]\leq \frac{K^{1/3}T^{2/3}}{6(K-1)}\leq \frac{T^{2/3}}{3K^{2/3}}.
	\end{equation*}
	Therefore, following \eqref{eq:lb08} and \cref{prop:chain_rule}, we have
	\begin{align}
	    \nonumber\Pr_S^{(1)}(E_1)+\Pr_S^{(i^*)}\geq&\exp\left(-\E^{(1)}_S\left[Q_k\left(\dfrac{M}{6}\right)\right]\kl\left(\text{Bernoulli}(p_{i^*}\mu_{i^*}^{(1)})||\text{Bernoulli}(p_{i^*}\mu_{i^*}^{(i^*)})\right)\right)\\
	    \label{eq:lb09}\geq&\exp\left(-\frac{T^{2/3}}{6K^{2/3}}\log_e\left(1-\frac{16\epsilon^2K^{2/3}}{T^{2/3}}\right)\right)\geq\exp(-1)\,,
	\end{align}
	where the last step uses the fact that for $x\in[0,1/4],$ $\log_e(1-x)\geq-4\log_e(4/3)x\geq-4x$ and $T\geq K.$
	
	Combining \eqref{eq:lb07} and \eqref{eq:lb09}, the expected regret of $\pi$ on the instance 1 and instance $i^*$ sum up to $\Omega(K^{1/3}T^{2/3}),$ which indicates that $\pi$ has to incur $\Omega(K^{1/3}T^{2/3})$ expected regret in at least one of them.
	\end{itemize}

\subsection{Proof of \cref{prop:k_refund}}\label{sec:prop:k_refund}

We denote all time intervals when $p_1$ is not selected as $I_1,\dots,I_L$. That is, if $t<T_0$ and $i_t\neq 1$, then $t\in I_l$ for some $l\in[L]$; otherwise, $t\notin I_l$ for all $l\in[L]$. Among others, we use the partition with the smallest $L$, that is $\max_{t\in I_l}t+1<\min_{t\in I_{l+1}}t$.
		
If there exists $l\in [L]$ such that $|I_l|\geq M$, then the last $M$ time steps of $I_l$ will incur a refund of at least $(p_2-p_1)M\min_{i\in[k]}\mu_i$; otherwise for all time step when $p_1$ is not selected, $p_1$ is selected within $M$ time steps, which will incur a refund of at least $(p_2-p_1)M_0\min_{i\in[k]}\mu_i$. Therefore the refund incurred is at least $(p_2-p_1)\min_{i\in[k]}\mu_i\min\{M,M_0\}$.

\subsection{Proof of \cref{prop:chain_rule}}\label{sec:prop:chain_rule}
Similar to $S$, we define a sequence of sample space for the first $t-1$ time steps:
	\begin{equation*}
		S_t=\prod_{u=1}^{t-1}([K]\cup\{\infty\}\,,\ [0,1]\cup\{\infty\}),
	\end{equation*}
which records the price selected $i_u$ and the realized instantaneous payment $R_u$ (excluding refund) in each time step of $1\,,\ \ldots\,,\ t-1$. Similar to the definition of $S$, we also require that if for some time step $s\in[t-1]$, the number of times that the $p_2\,,\ \ldots\,,\ p_K$ are selected in total already exceeds $M/6$, then $i_u=R_u=\infty$ for all $u>s$.
		
We denote $\{S_t^{(u)}\}_{u=1}^{m_t}$ to be all possible trajectories, and for each trajectory $S_t^{(u)}$, we denote the number of times that $p_2\,,\ \ldots\,, \ p_K$ are recorded as $V(S_t^{(u)})\,$ (the time periods recorded as $(\infty,\infty)$ are ignored), \ie, 
\begin{align}
	V(S_t^{(u)}) = \sum_{s=1}^{t-1}\bm{1}[i_s\neq 1\,,\ i_s\neq \infty]\,.
\end{align}
		
By the chain rule of the KL divergence (see, \eg, theorem 2.5.3 of \cite{CoverT06}), we have
\begin{align}
	\kl(\Pr_S^{(1)}||\Pr_S^{(k)})
	=&\sum_{t=1}^T\sum_{u=1}^{m_{t}}\Pr^{(1)}(S_{t}^{(u)})\cdot\kl(\Pr_S^{(1)}(\{i_t,R_t|S_t^{(u)}\})||\Pr_S^{(k)}(\{i_t,R_t|S_t^{(u)}\}))\,.\label{kl-decomp-1}
\end{align}
If $V(S_t^{(u)})\geq \lfloor M/6\rfloor$, then regardless of instance, $(i_t,R_t)=(\infty,\infty)$, and
\begin{equation*}
	\kl(\Pr_S^{(1)}(\{i_t,R_t|S_t^{(u)}\})||\Pr_S^{(k)}(\{i_t,R_t|S_t^{(u)}\}))=0.
\end{equation*}
Therefore
\begin{align}
    \nonumber&\sum_{u=1}^{m_{t-1}}\Pr^{(1)}_S(S_t^{(u)})\cdot\kl(\Pr_S^{(1)}(\{i_t,R_t|S_t^{(u)}\})||\Pr_S^{(k)}(\{i_t,R_t|S_t^{(u)}\}))\\
	\nonumber=&\sum_{V(S_t^{(u)})< \lfloor M/6\rfloor}\Pr^{(1)}(S_t^{(u)})\cdot\kl(\Pr_S^{(1)}(\{i_t,R_t|S_t^{(u)}\})||\Pr_S^{(k)}(\{i_t,R_t|S_t^{(u)}\}))\\
	\nonumber=&\sum_{V(S_t^{(u)})< \lfloor M/6\rfloor}\Pr^{(1)}_S(S_t^{(u)})\left(\sum_{j=2}^K\Pr^{(1)}_S(i_t=j|S_t^{(u)})\cdot\kl(\text{Bernoulli}(p_j\mu_j^{(1)}||\text{Bernoulli}(p_j\mu_j^{(k)})))\right)\\
	=&\sum_{V(S_t^{(u)})< \lfloor M/6\rfloor}\sum_{j=2}^K\Pr^{(1)}_S(S_t^{(u)})\Pr^{(1)}_S(i_t=j|S_t^{(u)})\cdot\kl(\text{Bernoulli}(p_j\mu_j^{(1)}||\text{Bernoulli}(p_j\mu_j^{(k)})))\,.\label{kl-decomp-2}
\end{align}
Since $p_j\mu_j^{(1)}=p_j\mu_j^{(k)}$ for all $j\neq k$, we have
\begin{align}
	\nonumber&\sum_{j=2}^K\Pr^{(1)}_S(S_t^{(u)})\cdot\Pr^{(1)}_S(i_t=j|S_t^{(u)})\cdot\kl(\text{Bernoulli}(p_j\mu_j^{(1)}||\text{Bernoulli}(p_j\mu_j^{(k)})))\\
	\nonumber=&\Pr^{(1)}_S(S_t^{(u)})\Pr_S^{(1)}(i_t=k\mid S_t^{(u)})\cdot\kl(\text{Bernoulli}(p_k\mu_k^{(1)}||\text{Bernoulli}(p_k\mu_k^{(k)})))\\
	=&\Pr_S^{(1)}(S_t^{(u)},\ i_t=k)\cdot\kl(\text{Bernoulli}(p_k\mu_k^{(1)}||\text{Bernoulli}(p_k\mu_k^{(k)})))\,.\label{kl-decomp-3}
\end{align}
Combining~\eqref{kl-decomp-1}\eqref{kl-decomp-2}\eqref{kl-decomp-3}~we have
\begin{align*}
	\kl(\Pr_S^{(1)}||\Pr_S^{(k)})
	=&\kl(\text{Bernoulli}(p_k\mu_i^{(1)}||\text{Bernoulli}(p_k\mu_k^{(k)})))\sum_{t=1}^T\sum_{V(S_t^{(u)})<\lfloor M/6\rfloor}\Pr_S^{(1)}(S_t^{(u)}\,,\ i_t=k)\\
	=&\kl(\text{Bernoulli}(p_k\mu_k^{(1)}||\text{Bernoulli}(p_k\mu_k^{(k)})))\sum_{t=1}^T\Pr_S^{(1)}\left(i_t=k\,,\ N_{-1}(t)\leq\dfrac{M}{6}-1\right)\\
	=&\kl(\text{Bernoulli}(p_k\mu_k^{(1)}||\text{Bernoulli}(p_k\mu_k^{(k)})))\sum_{t=1}^T\Pr_S^{(1)}\left(i_t=k\,,\ N_{-1}(t+1)\leq \dfrac{M}{6}\right)\\
	=&\kl(\text{Bernoulli}(p_k\mu_k^{(1)}||\text{Bernoulli}(p_k\mu_k^{(k)})))\mathbb{E}_S^{(1)}\sum_{t=1}^T 1\left[i_t=k\,,\ N_{-1}(t+1)\leq \dfrac{M}{6}\right]\\
	=&\kl(\text{Bernoulli}(p_k\mu_k^{(1)}||\text{Bernoulli}(p_k\mu_k^{(k)})))\mathbb{E}_S^{(1)} \left[Q_k\left(\dfrac{M}{6}\right)\right]\,.
\end{align*}
Therefore
\begin{equation*}
	\kl(\Pr_S^{(1)}||\Pr_S^{(k)})=\mathbb{E}_S^{(1)}\left[ Q_k\left(\dfrac{M}{6}\right)\right]\kl(\text{Bernoulli}(p_k\mu_k^{(1)}||\text{Bernoulli}(p_k\mu_k^{(k)}))).\qedhere
\end{equation*}

\section{Proof of \cref{thm:k_price_ub2}}\label{sec:thm:k_price_ub2}
We distinguish three different cases:

\textbf{Case 1.} $M\leq \sqrt{KT}\,$: By Theorem 3.1 of \cite{AO10} we have $\texttt{Regret}'(\cA)=O(\sqrt{KT}\log(K))$, and $B_\cA\leq \log_2(T/e)/2+1=O(\log T)$. By Proposition \ref{prop:conversion} we have that in this case $\texttt{Regret}(\cA)=O(\sqrt{KT}\log(K)+M\log(T))$.

\textbf{Case 2.} $\sqrt{KT}<M\leq K^{1/3}T^{2/3}\,$: By Theorem 4 of \cite{GaoHRZ19} we have $\texttt{Regret}'(\cA)=O(\sqrt{KT}\log(K)\sqrt{\log(T)})$, and $B_\cA\leq \log_2(\log(T))=O(\log(\log(T)))$. By Proposition \ref{prop:conversion} we have that in this case $\texttt{Regret}(\cA)=O(\sqrt{KT}\log(K)\sqrt{\log(T)}+M\log(\log(T)))$.

\textbf{Case 3.} $M>K^{1/3}T^{2/3}\,$: We first discuss the upper bound for $\texttt{Regret}'(\cA)$. We assume w.l.o.g. that $\lambda_1=\max_{k\in[K]}\lambda_k$, and we denote $i^*=\argmax_{k\in[K]}\hat{\lambda}_k(n)\,,$ where $n=\lceil T^{2/3}K^{-2/3}\rceil\,.$ Then the regret without refund will be
\begin{equation*}
    n\sum_{k\neq 1}(\lambda_1-\lambda_k)+(T-Kn)\mathbb{E}[\lambda_1-\lambda_{i^*}]\leq (K-1)n\lambda_1+T\mathbb{E}[\lambda_1-\lambda_{i^*}].
\end{equation*}
Since $\lambda_1=\mathbb{E}[\hat{\lambda_1}(n)]$, we have
\begin{equation*}
    \mathbb{E}[\lambda_1-\lambda_{i^*}]=\mathbb{E}[\hat{\lambda}_1(n)-\lambda_{i^*}].
\end{equation*}
By the definition of $i^*$ we have that $\hat{\lambda}_1(n)\leq\max_{k\in[K]}\hat{\lambda}_k(n)=\hat{\lambda}_{i^*}(n)$, therefore $\hat{\lambda}_1(n)\leq \hat{\lambda}_{i^*}(n)$ for any realization of demand, and we have
\begin{equation*}
    \mathbb{E}[\hat{\lambda}_1(n)-\lambda_{i^*}]=\mathbb{E}[\hat{\lambda}_1(n)-\hat{\lambda}_{i^*}(n)+\hat{\lambda}_{i^*}(n)+\lambda_{i^*}]\leq \mathbb{E}[\hat{\lambda_{i^*}}(n)-\lambda_{i^*}]\leq\mathbb{E}\left[\max_{k\in[K]}(\hat{\lambda}_k(n)-\lambda_k)\right].
\end{equation*}
Since $p_kD_k\in[0,1]$, $\{p_kD_k-\lambda_k\}$ are $1/4$-subgaussian random variables, and $\{\hat{\lambda}_k(n)-\lambda_k\}$ are $1/(4n)$-subgaussian random variables. By the maximal inequality (see, \eg, theorem 1.14 of \cite{RH18}), we have
\begin{equation*}
    \mathbb{E}\left[\max_{k\in[K]}(\hat{\lambda}_k(n)-\lambda_k)\right]\leq \dfrac{1}{2\sqrt{n}}\sqrt{2\log(K)}.
\end{equation*}
Therefore the regret without refund will be upper bounded by
\begin{equation*}
    Kn\lambda_1+\dfrac{T}{\sqrt{2n}}\sqrt{\log(K)}.
\end{equation*}
and if we choose $n=\lceil T^{2/3}K^{-2/3}\rceil$ then the regret will be $O(K^{1/3}T^{2/3}\sqrt{\log(K)})$.

Then we consider the refund incurred. Since the prices remain the same after the first $Kn$ time steps, the refund is upper bounded by $Kn\lambda_1(p_K-p_1)=O(K^{1/3}T^{2/3})$. Combining the two parts of discussions we have that the regret in this case is $O(K^{1/3}T^{2/3}\sqrt{\log(K)})$.
\end{appendices}

\end{document}